\DeclareMathOperator*{\argmin}{arg\,min}
\newcommand{\proj}{\mathrm{proj}}
\newcommand{\thetas}{\theta^\star}
\newcommand{\thetad}{\theta^\dagger}
\newcommand{\vs}{\accentset{\star}{v}}
\newcommand{\xs}{\accentset{\star}{x}}
\newcommand{\ys}{\accentset{\star}{y}}
\renewcommand{\bar}{\overline}
\DeclareMathOperator{\step}{step}
\newtheorem{lemma}{Lemma}
\newtheorem{corollary}{Corollary}
\newtheorem{theorem}{Theorem}
\newtheorem{assumption}{Assumption}
\newtheorem{definition}{Definition}
\title{Self-Stabilization: The Implicit Bias of Gradient Descent at the Edge of Stability}
\author{%
  Alex Damian* \\
  Princeton University\\
  \texttt{ad27@princeton.edu}
  \and
  Eshaan Nichani* \\
  Princeton University \\
  \texttt{eshnich@princeton.edu} \\
  \and
  Jason D. Lee \\
  Princeton University \\
  \texttt{jasonlee@princeton.edu}
}
\begin{document}

\maketitle
\begin{NoHyper}
\def\thefootnote{*}\footnotetext{Equal contribution}
\end{NoHyper}

\begin{abstract}
Traditional analyses of gradient descent show that when the largest eigenvalue of the Hessian, also known as the sharpness $S(\theta)$, is bounded by $2/\eta$, training is "stable" and the training loss decreases monotonically. Recent works, however, have observed that this assumption does not hold when training modern neural networks with full batch or large batch gradient descent. Most recently, \citet{cohen2021eos} observed two important phenomena. The first, dubbed \emph{progressive sharpening}, is that the sharpness steadily increases throughout training until it reaches the instability cutoff $2/\eta$. The second, dubbed \emph{edge of stability}, is that the sharpness hovers at $2/\eta$ for the remainder of training while the loss continues decreasing, albeit non-monotonically. 

We demonstrate that, far from being chaotic, the dynamics of gradient descent at the edge of stability can be captured by a cubic Taylor expansion: as the iterates diverge in direction of the top eigenvector of the Hessian due to instability, the cubic term in the local Taylor expansion of the loss function causes the curvature to decrease until stability is restored. This property, which we call \emph{self-stabilization}, is a general property of gradient descent and explains its behavior at the edge of stability.
A key consequence of self-stabilization is that gradient descent at the edge of stability implicitly follows \emph{projected} gradient descent (PGD) under the constraint $S(\theta) \le 2/\eta$. Our analysis provides precise predictions for the loss, sharpness, and deviation from the PGD trajectory throughout training, which we verify both empirically in a number of standard settings and theoretically under mild conditions. Our analysis uncovers the mechanism for gradient descent's implicit bias towards stability.

\end{abstract}

\section{Introduction}
\subsection{Gradient Descent at the Edge of Stability}
Almost all neural networks are trained using a variant of gradient descent, most commonly stochastic gradient descent (SGD) or ADAM \citep{kingma2015adam}. When deciding on an initial learning rate, many practitioners rely on intuition drawn from classical optimization. In particular, the following classical lemma, known as the "descent lemma," provides a common heuristic for choosing a learning rate in terms of the sharpness of the loss function:
\begin{definition}
    Given a loss function $L(\theta)$, the sharpness is defined to be $S(\theta) := \lambda_{max}(\nabla^2 L(\theta))$. When this eigenvalue is unique, the associated eigenvector is denoted by $u(\theta)$.
\end{definition}
\begin{lemma}[Descent Lemma]
    Assume that $S(\theta) \le \ell$ for all $\theta$. If $\theta_{t+1} = \theta_t - \eta \nabla L(\theta_t)$,
    \begin{align*}
        L(\theta_{t+1}) \le L(\theta_t) - \frac{\eta \qty(2 - \eta \ell)}{2} \norm{\nabla L(\theta_t)}^2.
    \end{align*}
\end{lemma}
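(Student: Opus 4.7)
The plan is to apply Taylor's theorem with integral remainder along the gradient-descent update and then control the quadratic term by the uniform sharpness bound. Concretely, for any two parameters $\theta, \theta'$, Taylor's theorem gives the exact identity
\begin{align*}
L(\theta') = L(\theta) + \nabla L(\theta)^\top (\theta' - \theta) + \int_0^1 (1 - s)\,(\theta' - \theta)^\top \nabla^2 L(\theta + s(\theta' - \theta))(\theta' - \theta)\, ds.
\end{align*}
I would instantiate this with $\theta = \theta_t$ and $\theta' = \theta_{t+1} = \theta_t - \eta \nabla L(\theta_t)$, so that the linear term collapses to $-\eta \norm{\nabla L(\theta_t)}^2$.

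Next, I would bound the quadratic remainder. The hypothesis $S(\theta) \le \ell$ at every $\theta$ is equivalent to the operator inequality $\nabla^2 L(\theta) \preceq \ell I$, which means the integrand is pointwise bounded by $\ell \norm{\theta' - \theta}^2 = \eta^2 \ell \norm{\nabla L(\theta_t)}^2$. Since $\int_0^1 (1-s)\,ds = 1/2$, the remainder as a whole is at most $\tfrac{\eta^2 \ell}{2}\norm{\nabla L(\theta_t)}^2$. Combining this estimate with the linear term yields the claimed descent inequality after a one-line algebraic simplification.

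There is no real obstacle in executing this plan: it is a one-step consequence of Taylor's theorem together with the sharpness assumption, and the only subtlety is to note that a bound on $\lambda_{\max}(\nabla^2 L)$ controls the Hessian quadratic form in \emph{every} direction, not only along the top eigenvector $u(\theta)$. An equivalent and perhaps more streamlined presentation would invoke the standard smoothness inequality $L(\theta') \le L(\theta) + \nabla L(\theta)^\top (\theta' - \theta) + \tfrac{\ell}{2}\norm{\theta' - \theta}^2$, which itself follows from $\ell$-Lipschitz continuity of $\nabla L$; the descent lemma is then immediate upon substituting the gradient-descent update.
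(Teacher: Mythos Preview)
Your argument is correct and is the standard textbook proof of the descent lemma via the second-order Taylor expansion with integral remainder (or, equivalently, the $\ell$-smoothness inequality). The paper itself states this lemma as a classical result in the introduction and does not supply a proof, so there is nothing further to compare.
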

Here, the loss decrease is proportional to the squared gradient, and is controlled by the quadratic $\eta (2 - \eta \ell)$ in $\eta$. This function is maximized at $\eta = 1/\ell$, a popular learning rate criterion. For any $\eta < 2/\ell$, the descent lemma guarantees that the loss will decrease. As a result, learning rates below $2/\ell$ are considered "stable" while those above $2/\ell$ are considered "unstable." For quadratic loss functions, e.g. from linear regression, this is tight. Any learning rate above $2/\ell$ provably leads to exponentially increasing loss.

However, it has recently been observed that in neural networks, the descent lemma is not predictive of the optimization dynamics. Recently, \citet{cohen2021eos} observed two important phenomena for gradient descent, which made more precise similar observations in \citet{jastrzebski2019on,Jastrzebski2020The} for SGD:

\paragraph{Progressive Sharpening} Throughout most of the optimization trajectory, the gradient of the loss is negatively aligned with the gradient of sharpness, i.e. $\nabla L(\theta) \cdot \nabla S(\theta) < 0.$ As a result, for any reasonable learning rate $\eta$, the sharpness increases throughout training until it reaches $S(\theta) = 2/\eta$.

\paragraph{Edge of Stability} Once the sharpness reaches $2/\eta$ (the ``break-even'' point in \citet{Jastrzebski2020The}), it ceases to increase and remains around $2/\eta$ for the rest of training. The descent lemma no longer guarantees the loss decreases but the loss still continues decreasing, albeit non-monotonically.

\begin{figure}[ht]
    \centering
    \includegraphics[width=\textwidth]{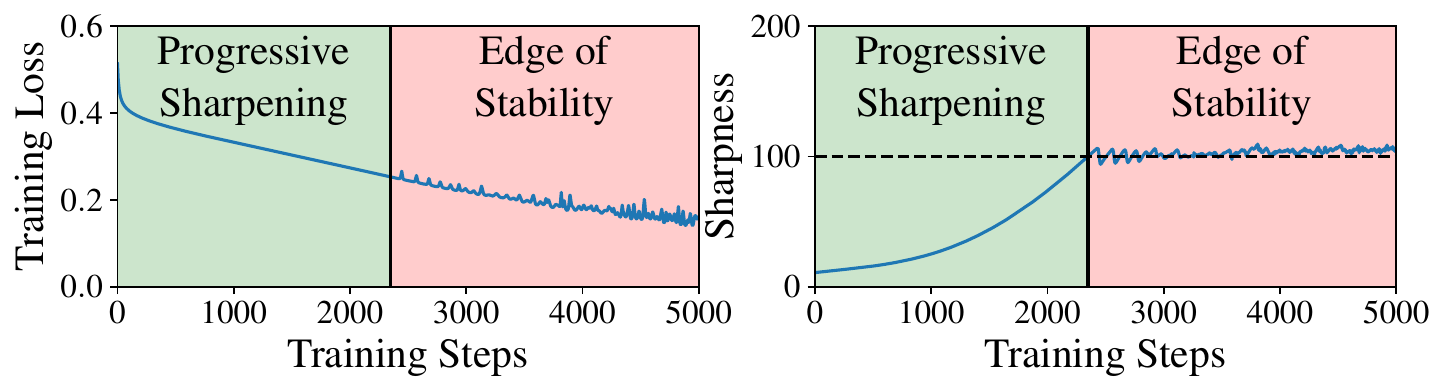}
    \caption{\textbf{Progressive Sharpening and Edge of Stability:} We train an MLP on CIFAR10 with learning rate $\eta = 2/100$. It reaches instability after around $2200$ training steps after which the sharpness hovers at $2/\eta = 100$, which is denoted by the horizontal dashed line.}
    \label{fig:basicEOS}
\end{figure}

\subsection{Self-stabilization: The Implicit Bias of Instability}

In this work we explain the second stage, "edge of stability." We identify a new implicit bias of gradient descent which we call \textit{self-stabilization}. Self-stabilization is the mechanism by which the sharpness remains bounded around $2/\eta$, despite the continued force of progressive sharpening, and by which the gradient descent dynamics do not diverge, despite instability. Unlike progressive sharpening, which is only true for specific loss functions (eg. those resulting from neural network optimization \citep{cohen2021eos}), self stabilization is a general property of gradient descent.

Traditional non-convex optimization analyses involve Taylor expanding the loss function to second order around $\theta$ to prove loss decrease when $\eta \le 2/S(\theta)$. When this is violated, the iterates diverge exponentially in the top eigenvector direction, $u$, thus leaving the region in which the loss function is locally quadratic. Understanding the dynamics thus necessitates a \emph{cubic} Taylor expansion.

Our key insight is that the missing term in the Taylor expansion of the gradient after diverging in the $u$ direction is $\nabla^3 L(\theta)(u,u)$, which is conveniently equal to the gradient of the sharpness at $\theta$:
\begin{lemma}[Self-Stabilization Property]\label{lem:nabla_eig}
    If the top eigenvalue of $\nabla^2 L(\theta)$ is unique, then the sharpness $S(\theta)$ is differentiable at $\theta$ and $\nabla S(\theta) = \nabla^3 L(\theta)(u(\theta),u(\theta))$.
\end{lemma}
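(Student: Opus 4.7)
The plan is to first establish differentiability of $S$ and of a normalized eigenvector $u$ near $\theta$ by an implicit function theorem argument, and then differentiate the eigenvalue equation to extract a formula for $\nabla S$.

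For the differentiability step, I would consider the map
\begin{equation*}
F(\theta, \lambda, v) = \begin{pmatrix} \nabla^2 L(\theta)\,v - \lambda v \\ \tfrac{1}{2}(v^\top v - 1) \end{pmatrix}.
\end{equation*}
At the base point $(\theta, S(\theta), u(\theta))$ we have $F = 0$. The derivative of $F$ with respect to $(\lambda, v)$ is the block matrix with rows $(-u,\; \nabla^2 L(\theta) - S(\theta) I)$ and $(0,\; u^\top)$. Because $S(\theta)$ is a simple eigenvalue, $\nabla^2 L(\theta) - S(\theta) I$ has a one-dimensional kernel spanned by $u$, and the additional normalization row eliminates precisely that kernel, so the block is invertible. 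The implicit function theorem then yields smooth local functions $\theta \mapsto S(\theta)$ and $\theta \mapsto u(\theta)$ satisfying $\nabla^2 L(\theta) u(\theta) = S(\theta) u(\theta)$ and $\|u(\theta)\| = 1$.

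For the formula, I would differentiate the eigenvalue equation along an arbitrary direction $v$. Writing $\partial_v$ for the directional derivative, this gives
\begin{equation*}
\bigl(\partial_v \nabla^2 L(\theta)\bigr)\, u(\theta) + \nabla^2 L(\theta)\, \partial_v u(\theta) = \bigl(\partial_v S(\theta)\bigr)\, u(\theta) + S(\theta)\, \partial_v u(\theta).
\end{equation*}
Taking the inner product with $u(\theta)$ and using the symmetry of the Hessian (so $u^\top \nabla^2 L(\theta) = S(\theta) u^\top$), the terms involving $\partial_v u(\theta)$ cancel. The normalization $\|u(\theta)\|=1$ reduces the right side to $\partial_v S(\theta)$, leaving
\begin{equation*}
\partial_v S(\theta) = u(\theta)^\top \bigl(\partial_v \nabla^2 L(\theta)\bigr) u(\theta) = \nabla^3 L(\theta)\bigl(u(\theta), u(\theta), v\bigr),
\end{equation*}
which is exactly the statement $\nabla S(\theta) = \nabla^3 L(\theta)(u(\theta), u(\theta))$ once $v$ is allowed to vary.

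The only genuine subtlety is the differentiability claim, which fails without the uniqueness hypothesis (since the top eigenvalue is only Lipschitz, not differentiable, at crossings); the invertibility verification in the implicit function theorem step is where the simple-eigenvalue assumption is actually used. The rest is essentially a bookkeeping computation, and the cancellation of $\partial_v u(\theta)$ — which removes the need to compute the eigenvector perturbation itself — is what makes the final expression so clean.
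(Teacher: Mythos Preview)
Your argument is correct and is the standard perturbation-theoretic derivation: apply the implicit function theorem (where simplicity of the top eigenvalue is exactly what makes the Jacobian block invertible) to get local smoothness of $S$ and $u$, then differentiate the eigenvalue relation and contract with $u$ so that the $\partial_v u$ terms cancel by symmetry.

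The paper, however, does not actually supply a proof of this lemma; it states the identity as a known fact and simply invokes it later (e.g., in the proof of the Lipschitz properties in the appendix, where it writes ``because $\nabla S(\theta)=\nabla^3 L(\theta)(u(\theta),u(\theta))$'' and uses the eigenvector derivative formula without derivation). So there is nothing to compare against beyond noting that your write-up fills in a gap the paper leaves implicit, and does so by the textbook route.
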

As the iterates move in the negative gradient direction, this term has the effect of \emph{decreasing the sharpness}. The story of self-stabilization is thus that as the iterates diverge in the $u$ direction, the strength of this movement in the $-\nabla S(\theta)$ direction grows until it forces the sharpness below $2/\eta$, at which point the iterates in the $u$ direction shrink and the dynamics re-enter the quadratic regime. 

This negative feedback loop prevents both the sharpness $S(\theta)$ and the movement in the top eigenvector direction, $u$, from growing out of control. As a consequence, we show that gradient descent \emph{implicitly} solves the \emph{constrained minimization problem}:
\begin{align}
    \min_\theta L(\theta) \qqtext{such that} S(\theta) \le 2/\eta.
\end{align}
Specifically, if the stable set is defined by $\mathcal{M} := \qty{\theta ~:~ S(\theta) \le 2/\eta \text{ and } \nabla L(\theta) \cdot u(\theta) = 0}$\footnote{The condition that $\nabla L(\theta) \cdot u(\theta) = 0$ is added to ensure the constrained trajectory is not unstable. This condition does not affect the stationary points of \cref{eq:constrained_update}.} then the gradient descent trajectory $\{\theta_t\}$ tracks the following projected gradient descent (PGD) trajectory $\{\thetad_t\}$ which solves the constrained problem \citep{barber2017pgd}:
\begin{align}\label{eq:constrained_update}
    \thetad_{t+1} = \proj_\mathcal{M}\qty(\thetad_t - \eta \nabla L(\thetad_t)) \qq{where} \proj_\mathcal{M}\qty(\theta) := \argmin_{\theta' \in \mathcal{M}} \norm{\theta - \theta'}.
\end{align}

Our main contributions are as follows. First, we explain self-stabilization as a generic property of gradient descent for a large class of loss functions, and provide precise predictions for the loss, sharpness, and deviation from the constrained trajectory $\{\thetad_t\}$ throughout training (\Cref{sec:heuristic}). Next, we prove that under mild conditions on the loss function (which we verify empirically for standard architectures and datasets), our predictions track the true gradient descent dynamics up to higher order error terms (\Cref{sec:theory}). Finally, we verify our predictions by replicating the experiments in \citet{cohen2021eos} and show that they model the true gradient descent dynamics (\Cref{sec:experiments}).

\section{Related Work}
\citet{Xing2018AWW} observed that for some neural networks trained by full-batch gradient descent, the loss is not monotonically decreasing. \citet{wu2018selectminimizer} remarked that gradient descent cannot converge to minima where the sharpness exceeds $2/\eta$ but did not give a mechanism for avoiding such minima. \citet{Lewkowycz2020} observed that when the initial sharpness is larger than $2/\eta$, gradient descent "catapults" into a stable region and eventually converges. \citet{jastrzebski2019on} studied the sharpness along stochastic gradient descent trajectories and observed an initial increase (i.e. progressive sharpening) followed by a peak and eventual decrease. They also observed interesting relationships between the dynamics in the top eigenvector direction and the sharpness. \citet{Jastrzebski2020The} conjectured a general characterization of stochastic gradient descent dynamics asserting that the sharpness tends to grow but cannot exceed a stability criterion given by their eq (1), which reduces to $S(\theta) \le 2/\eta$ in the case of full batch training. \citet{cohen2021eos} demonstrated that for the special case of (full batch) gradient descent training, the optimization dynamics exhibit a simple characterization. First, the sharpness rises until it reaches $S(\theta) = 2/\eta$ at which point the dynamics transition into an ``edge of stability'' (EOS) regime where the sharpness oscillates around $2/\eta$ and the loss continues to decrease, albeit non-monotonically.

Recent works have sought to provide theoretical analyses for the EOS phenomenon. \citet{Ma2022TheMS} analyzes EOS when the loss satisfies a "subquadratic growth" assumption. \citet{Ahn2022UnderstandingTU} argues that unstable convergence is possible when there exists a "forward invariant subset" near the set of minimizers. \citet{Arora2022EoS} analyzes progressive sharpening and the EOS phenomenon for normalized gradient descent close to the manifold of global minimizers. \citet{Lyu2022Normalization} uses the EOS phenomenon to analyze the effect of normalization layers on sharpness for scale-invariant loss functions.
\citet{Chen2022EoS} show global convergence despite instability for certain 2D toy problems and in a 1-neuron student-teacher setting. The concurrent work \citet{Li2022EoS} proves progressive sharpening for a two-layer network and analyzes the EOS dynamics through four stages similar to ours using the norm of the output layer as a proxy for sharpness. 

Beyond the EOS phenomenon itself, prior work has also shown that SGD with large step size or small batch size will lead to a decrease in sharpness \citep{keskar2017, Jastrzebski2017ThreeFI, jastrzebski2019on, Jastrzebski2020The}. \citet{gilmer2022} also describes connections between EOS, learning rate warm-up, and gradient clipping.

At a high level, our proof relies on the idea that oscillations in an unstable direction prescribed by the quadratic approximation of the loss cause a longer term effect arising from the third-order Taylor expansion of the dynamics. This overall idea has also been used to analyze the implicit regularization of SGD~\citep{BlancGVV20, damian2021label, li2022what}. In those settings, oscillations come from the stochasticity, while in our setting the oscillations stem from instability.

\section{Setup}\label{sec:setup}

We denote the loss function by $L \in C^3(\mathbb{R}^d)$. Let $\theta \in \mathbb{R}^d$ follow gradient descent with learning rate $\eta$, i.e. $\theta_{t+1} := \theta_t - \eta \nabla L(\theta_t)$. Recall that 
\begin{align*}
    \mathcal{M} := \qty{\theta ~:~ S(\theta) \le 2/\eta \text{ and } \nabla L(\theta) \cdot u(\theta) = 0}
\end{align*}
is the set of stable points and $\proj_\mathcal{M} := \argmin_{\theta' \in \mathcal{M}} \norm{\theta - \theta'}$ is the orthogonal projection onto $\mathcal{M}$. For notational simplicity, we will shift time so that $\theta_0$ is the first point such that $S(\proj_\mathcal{M}(\theta)) = 2/\eta$. 

As in \cref{eq:constrained_update}, the constrained trajectory $\thetad$ is defined by 
\begin{align*}
    \thetad_0 := \proj_\mathcal{M}(\theta_0) \qand \thetad_{t+1} := \proj_{\mathcal{M}}(\thetad_t - \eta \nabla L(\thetad_t)).
\end{align*}

Our key assumption is the existence of progressive sharpening along the constrained trajectory, which is captured by the progressive sharpening coefficient $\alpha(\theta)$:
\begin{definition}[Progressive Sharpening Coefficient]\label{def:alpha}
    We define $\alpha(\theta) := -\nabla L(\theta) \cdot \nabla S(\theta)$.
\end{definition}

\begin{assumption}[Existence of Progressive Sharpening]\label{assumption:progressive_sharpening}
    $\alpha(\thetad_t) > 0$.
\end{assumption}

We focus on the regime in which there is a single unstable eigenvalue, and we leave understanding multiple unstable eigenvalues to future work. We thus make the following assumption on $\nabla^2 L(\thetad_t)$:

\begin{assumption}[Eigengap]\label{assumption:eigval_gap}
    For some absolute constant $c < 2$ we have $\lambda_2(\nabla^2 L(\thetad_t)) < c/\eta$.
\end{assumption}

\section{The Self-stabilization Property of Gradient Descent}\label{sec:heuristic}

\begin{figure}
    \centering
    \includegraphics[width=\textwidth]{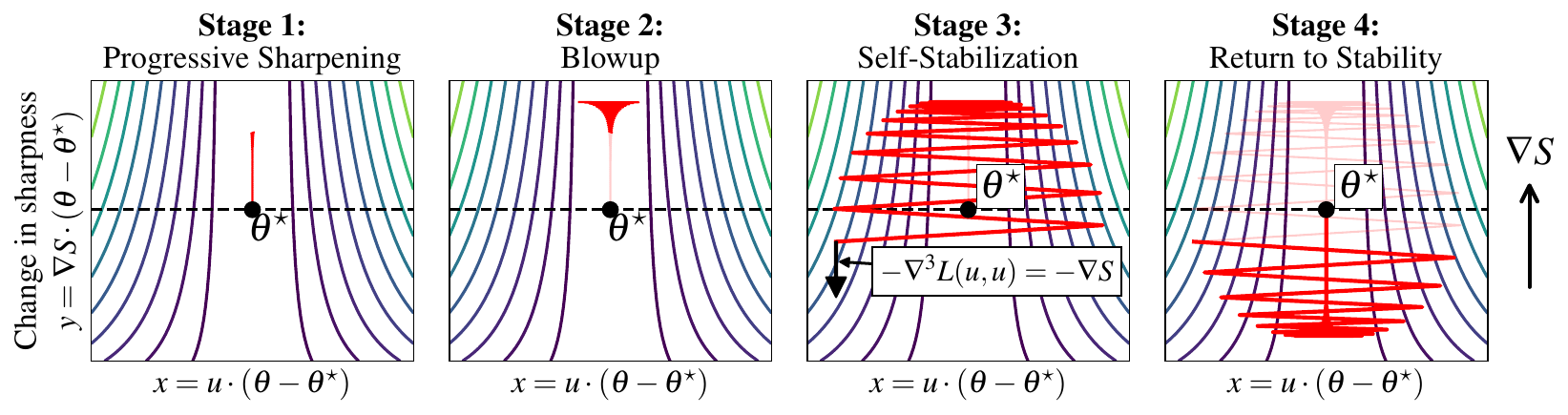}
    \caption{The four stages of edge of stability (see \Cref{sec:four_stages}), demonstrated on a simple loss function (see \Cref{sec:toy_model}).}
    \label{fig:stages}
\end{figure}

In this section, we derive a set of equations that predict the displacement between the gradient descent trajectory $\{\theta_t\}$ and the constrained trajectory $\{\thetad_t\}$. Viewed as a dynamical system, these equations give rise to a negative feedback loop, which prevents both the sharpness and the displacement in the unstable direction from diverging. These equations also allow us to predict the values of the sharpness and the loss throughout the gradient descent trajectory.

\subsection{The Four Stages of Edge of Stability: A Heuristic Derivation}\label{sec:four_stages}

The analysis in this section proceeds by a cubic Taylor expansion around a fixed reference point $\theta^\star := \thetad_0$.\footnote{Beginning in \Cref{sec:theory}, the reference points for our Taylor expansions change at every step to minimize errors. However, fixing the reference point in this section simplifies the analysis, better illustrates the negative feedback loop, and motivates the definition of the constrained trajectory.}
For notational simplicity, we will define the following quantities at $\theta^\star$:
\begin{alignat*}{5} 
    \nabla L &:= \nabla L(\theta^\star),&\qquad \nabla^2 L &:= \nabla^2 L(\theta^\star),&\qquad  u &:= u(\theta^\star) \\
    \nabla S &:= \nabla S(\theta^\star),&\qquad \alpha &:= \alpha(\theta^\star),&\qquad \beta &:= \norm{\nabla S}^2,
\end{alignat*}
where $\alpha = -\nabla L \cdot \nabla S > 0$ is the progressive sharpening coefficient at $\theta^\star$. For simplicity, in \Cref{sec:heuristic} we assume that $\nabla S \perp u$ and $\nabla L,\nabla S \in \mathrm{ker}(\nabla^2 L)$, and ignore higher order error terms.\footnote{We give an explicit example of a loss function satisfying these assumptions in \Cref{sec:toy_model}.} Our main argument in \Cref{sec:theory} does not require these assumptions and tracks all error terms explicitly.

We want to track the movement in the unstable direction $u$ and the direction of changing sharpness $\nabla S$, and thus define 
\begin{align*}
    x_t := u \cdot (\theta_t - \theta^\star) \qand y_t := \nabla S \cdot (\theta_t - \theta^\star).
\end{align*}
Note that $y_t$ is approximately equal to the change in sharpness from $\theta^\star$ to $\theta_t$, since Taylor expanding the sharpness yields
\begin{align*}
    S(\theta_t) \approx S(\theta^\star) + \nabla S \cdot (\theta_t - \theta^\star) = 2/\eta + y_t.
\end{align*}

The mechanism for edge of stability can be described in 4 stages (see \Cref{fig:stages}):
\paragraph{Stage 1: Progressive Sharpening} While $x,y$ are small, $\nabla L(\theta_t) \approx \nabla L$. In addition, because $\nabla L \cdot \nabla S < 0$, gradient descent naturally increases the sharpness at every step. In particular,
\begin{align*}
    y_{t+1} - y_t = \nabla S \cdot(\theta_{t+1} - \theta_t) \approx - \eta \nabla L \cdot \nabla S = \eta \alpha.
\end{align*}
The sharpness therefore increases linearly with rate $\eta \alpha$.
\paragraph{Stage 2: Blowup} As $x_t$ measures the deviation from $\theta^\star$ in the $u$ direction, the dynamics of $x_t$ can be modeled by gradient descent on a quadratic with sharpness $S(\theta_t) \approx 2/\eta + y_t$. In particular, the rule for gradient descent on a quadratic gives\footnote{A rigorous derivation of this update in terms of $S(\theta_t)$ instad of $S(\theta^\star)$ requires a third-order Taylor expansion around $\theta^\star$; see \Cref{sec:proofs} for more details.}
\begin{align*}
    x_{t+1} = x_t - \eta u \cdot \nabla L(\theta_t) \approx x_t - \eta S(\theta_t) x_t \approx x_t - \eta \qty[2/\eta + y_t] x_t = -(1 + \eta y_t) x_t.
\end{align*}
When the sharpness exceeds $2/\eta$, i.e. when $y_t > 0$, $\abs{x_t}$ begins to grow exponentially indicating divergence in the top eigenvector $u$ direction.

\paragraph{Stage 3: Self-Stabilization} Once the movement in the $u$ direction is sufficiently large, the loss is no longer locally quadratic. Understanding the dynamics necessitates a third order Taylor expansion. The missing cubic term in the Taylor expansion of $\nabla L(\theta_t)$ is
\begin{align*}
    \frac12\nabla^3 L(\theta - \theta^\star,\theta - \theta^\star) \approx \nabla^3 L(u,u)\frac{x_t^2}{2} = \nabla S\frac{x_t^2}{2}
\end{align*}
by \Cref{lem:nabla_eig}. This biases the optimization trajectory in the $-\nabla S$ direction, which decreases sharpness.
Recalling $\beta = \norm{\nabla S}^2$, the new update for $y$ becomes:
\begin{align*}
    y_{t+1} -y_t &= \eta \alpha + \nabla S \cdot \qty(-\eta\nabla^3L(u, u)\frac{x_t^2}{2}) = \eta\qty(\alpha - \beta \frac{x_t^2}{2})
\end{align*}
Therefore once $x_t > \sqrt{2\alpha/\beta}$, the sharpness begins to decrease and continues to do so until the sharpness goes below $2/\eta$ and the dynamics return to stability.

\paragraph{Stage 4: Return to Stability} At this point $\abs{x_t}$ is still large from stages 1 and 2. However, the self-stabilization of stage 3 eventually drives the sharpness below $2/\eta$ so that $y_t < 0$. Because the rule for gradient descent on a quadratic with sharpness $S(\theta_t) = 2/\eta + y_t$ is still
\begin{align*}
    x_{t+1} \approx -(1 + \eta y_t) x_t,
\end{align*}
$\abs{x_t}$ begins to shrink exponentially and the process returns to stage 1.

Combining the update for $x_t, y_t$ in all four stages, we obtain the following simplified dynamics:
\begin{align}\label{eq:simple_update}
    x_{t+1} \approx -(1+\eta y_t)x_t \qand y_{t+1} \approx y_t + \eta\qty(\alpha - \beta \frac{x_t^2}{2})
\end{align}
where we recall $\alpha = -\nabla L \cdot \nabla S$ is the progressive sharpening coefficient and $\beta = \norm{\nabla S}^2$.

\subsection{Analyzing the simplified dynamics}\label{sec:ode_potential}
\begin{figure}
    \centering
    \includegraphics[width=\textwidth]{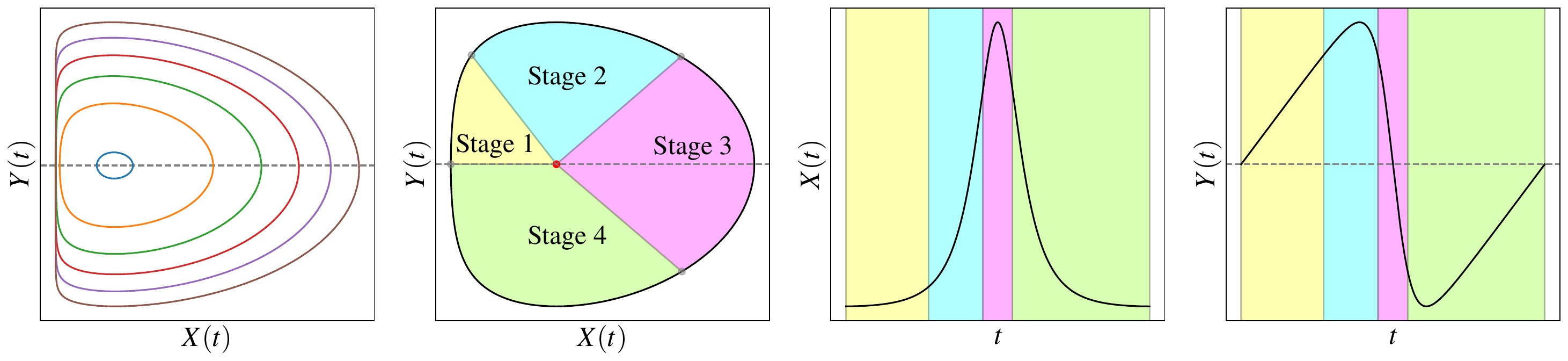}
    \caption{\textbf{The effect of $\mathbf{X(0)}$ (left):} We plot the evolution of the ODE in \cref{eq:bean_ode} with $\alpha = \beta = 1$ for varying $X(0)$. Observe that smaller $X(0)$'s correspond to larger curves. \textbf{The four stages of edge of stability (right):} We show how the four stages of edge of stability described in \Cref{sec:four_stages} and \Cref{fig:stages} correspond to different parts of the curve generated by the ODE in \cref{eq:bean_ode}.}
    \label{fig:bean_plots}
\end{figure}
We now analyze the dynamics in \cref{eq:simple_update}. First, note that $x_t$ changes sign at every iteration, and that, $x_{t+1} \approx -x_t$ due to the instability in the $u$ direction. While \cref{eq:simple_update} cannot be directly modeled by an ODE due to these rapid oscillations, we can instead model $\abs{x_t}, y_t$, whose update is controlled by $\eta$.
As a consequence, we can couple the dynamics of $\abs{x_t},y_t$ to the following ODE $X(t),Y(t)$:
\begin{align}\label{eq:bean_ode}
    X'(t) = X(t) Y(t) \qand Y'(t) = \alpha - \beta \frac{X(t)^2}{2}.
\end{align}
This system has the unique fixed point $(X,Y) = (\delta,0)$ where $\delta := \sqrt{2\alpha/\beta}$. We also note that this ODE can be written as a Lotka-Volterra predator-prey model after a change of variables, which is a classical example of a negative feedback loop. In particular, the following quantity is conserved:
\begin{lemma}
    Let $h(z) := z - \log z - 1$. Then the quantity 
    \begin{align*}
        g(X(t),Y(t)) := h\qty(\frac{\beta X(t)^2}{2\alpha}) + \frac{Y(t)^2}{\alpha}
    \end{align*} is conserved.
\end{lemma}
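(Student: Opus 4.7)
The plan is a direct verification via the chain rule: differentiate $g(X(t),Y(t))$ in $t$, substitute the ODE, and check that the two contributions cancel. Since the conserved quantity has already been guessed (motivated by the Lotka--Volterra change of variables alluded to in the text), there is no real search involved; the work is bookkeeping with $h(z) = z - \log z - 1$.

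Concretely, I first record that $h'(z) = 1 - 1/z$, so by the chain rule
\begin{align*}
    \frac{\partial g}{\partial X} = h'\!\left(\frac{\beta X^2}{2\alpha}\right)\cdot \frac{\beta X}{\alpha} = \left(1 - \frac{2\alpha}{\beta X^2}\right)\frac{\beta X}{\alpha} = \frac{\beta X}{\alpha} - \frac{2}{X}, \qquad \frac{\partial g}{\partial Y} = \frac{2Y}{\alpha}.
\end{align*}
Next, I substitute the ODE $X' = XY$ and $Y' = \alpha - \beta X^2/2$ into
\begin{align*}
    \frac{d}{dt} g(X(t),Y(t)) = \frac{\partial g}{\partial X}\, X'(t) + \frac{\partial g}{\partial Y}\, Y'(t).
\end{align*}
The first term becomes $\left(\frac{\beta X}{\alpha} - \frac{2}{X}\right)(XY) = \frac{\beta X^2 Y}{\alpha} - 2Y$, and the second becomes $\frac{2Y}{\alpha}\left(\alpha - \frac{\beta X^2}{2}\right) = 2Y - \frac{\beta X^2 Y}{\alpha}$. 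These sum to zero, which proves $g(X(t),Y(t))$ is constant along trajectories.

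A small side-remark I would include: the only point requiring care is that $h(z)$ (and hence $\partial_X g$) is singular at $X=0$, but the ODE preserves the sign of $X$ (since $X'= XY$ implies $X(t) = X(0)\exp(\int_0^t Y)$), so trajectories with $X(0)\ne 0$ stay away from the singularity and the computation is valid throughout. There is no substantive obstacle here; the main "insight'' is already built into the choice of $g$, and all that remains is the mechanical check above.
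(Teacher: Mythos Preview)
Your proof is correct and takes essentially the same approach as the paper: a direct chain-rule differentiation of $g(X(t),Y(t))$ followed by substitution of the ODE to see the terms cancel. The paper's version is simply a one-line compression of your computation, and your remark about the sign-preservation of $X$ (ensuring the trajectory avoids the singularity of $h$) is a reasonable clarification that the paper omits.
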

\begin{proof} 
\begin{align*}
    \frac{d}{dt} g(X(t),Y(t))
        = \frac{\beta X(t)^2 Y(t)}{\alpha} - 2 Y(t) + \frac{2}{\alpha} Y(t) \qty[\alpha - \beta \frac{X(t)^2}{2}] = 0. \qquad\qedhere
\end{align*}
\end{proof}
As a result we can use the conservation of $g$ to explicitly bound the size of the trajectory:
\begin{corollary}
    For all $t$, $X(0) \le X(t) \lesssim \delta\sqrt{\log(\delta/X(0))}$ and $\abs{Y(t)} \lesssim \sqrt{\alpha \log(\delta/X(0))}$.
\end{corollary}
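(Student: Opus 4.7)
The plan is to exploit the conservation of $g$ from the previous lemma together with elementary convex-analytic facts about $h(z) = z - \log z - 1$. Without loss of generality assume $Y(0) = 0$, which corresponds to the edge-of-stability entry point $S(\theta_0) = 2/\eta$, and that $X(0) < \delta$ (the progressive-sharpening regime). Introduce the rescaled variable $z_t := \beta X(t)^2/(2\alpha) = X(t)^2/\delta^2$, so that $z_0 = X(0)^2/\delta^2 \in (0,1)$ and conservation reads
\begin{equation*}
    h(z_t) + \frac{Y(t)^2}{\alpha} \;=\; h(z_0).
\end{equation*}

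Since $h \ge 0$ with equality only at $z = 1$, rearranging immediately gives $Y(t)^2 \le \alpha\, h(z_0)$. Because $z_0 \le 1$, we have $h(z_0) = z_0 - \log z_0 - 1 \le -\log z_0 = 2\log(\delta/X(0))$, yielding the claimed bound $|Y(t)| \lesssim \sqrt{\alpha \log(\delta/X(0))}$.

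For the bounds on $X(t)$, observe that $h$ is strictly convex on $(0,\infty)$ with a unique minimum at $z=1$, so the sublevel set $\{z : h(z) \le h(z_0)\}$ is a closed interval $[z_-, z_+]$ with $z_- = z_0 < 1 < z_+$. Conservation forces $z_t \in [z_-, z_+]$ for all $t$, which gives the lower bound $X(t) \ge X(0)$ directly. For the upper bound I would argue as follows: $z_+$ satisfies $z_+ - \log z_+ - 1 = h(z_0) \le -\log z_0$, and for $z \ge 2$ the elementary inequality $\log z \le z/2$ yields $z_+/2 - 1 \le -\log z_0$, hence $z_+ \lesssim \log(1/z_0) = 2\log(\delta/X(0))$. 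Translating back via $X(t)^2 = \delta^2 z_t \le \delta^2 z_+$ gives $X(t) \lesssim \delta \sqrt{\log(\delta/X(0))}$.

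The only nontrivial step is the upper-root estimate on $z_+$, and even this is routine once one has the sublevel-set picture; the rest is bookkeeping with the conserved quantity. A minor subtlety is handling the case $z_+ < 2$ separately (where the bound is immediate) so that the logarithmic bound $\log z_+ \le z_+/2$ may be applied, but this is handled by a trivial dichotomy.
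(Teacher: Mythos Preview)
Your argument is correct and is precisely the intended derivation: the paper states the corollary immediately after the conservation lemma without spelling out a proof, and your sublevel-set analysis of $h(z)=z-\log z-1$ together with the assumption $Y(0)=0$, $X(0)<\delta$ is exactly how one extracts the bounds from $g(X(t),Y(t))=g(X(0),0)$. The only cosmetic point is that $\log z \le z/2$ in fact holds for all $z>0$ (the minimum of $z/2-\log z$ is $1-\log 2>0$ at $z=2$), so the dichotomy on $z_+\gtrless 2$ is unnecessary, though harmless.
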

The fluctuations in sharpness are $\tilde O(\sqrt{\alpha})$, while the fluctuations in the unstable direction are $\tilde O(\delta)$. Moreover, the \emph{normalized} displacement in the $\nabla S$ direction, i.e. $\frac{\nabla S}{\norm{\nabla S}} \cdot (\theta - \thetas)$ is also bounded by $\tilde O(\delta)$, so the entire process remains bounded by $\tilde O(\delta)$. Note that the fluctuations increase as the progressive sharpening constant $\alpha$ grows, and decrease as the self-stabilization force $\beta$ grows.

\subsection[Relationship with the constrained trajectory]{Relationship with the constrained trajectory $\thetad_t$}
\Cref{eq:simple_update} completely determines the displacement $\theta_t - \theta^\star$ in the $u,\nabla S$ directions and \Cref{sec:ode_potential} shows that these dynamics remain bounded by $\tilde O(\delta)$ where $\delta = \sqrt{2\alpha/\beta}$. However, progress is still made in all other directions. Indeed, $\theta_t$ evolves in these orthogonal directions by a simple projected gradient update: $-\eta P_{u,\nabla S}^\perp \nabla L$ at every step where $P_{u,\nabla S}^\perp$ is the projection onto this orthogonal subspace. This can be interpreted as first taking a gradient step of $-\eta \nabla L$ and then projecting out the $\nabla S$ direction to ensure the sharpness does not change. \Cref{lem:dagger_step}, given in the Appendix, shows that this is precisely the update for $\thetad_t$ (\cref{eq:constrained_update}) up to higher order terms. The preceding derivation thus implies that $\|\theta_t - \thetad_t\| \le \tilde O(\delta)$ and that this $\tilde O(\delta)$ error term is controlled by the self-stabilizing dynamics in \cref{eq:simple_update}.

\section{The Predicted Dynamics and Theoretical Results}\label{sec:theory}
We now present the equations governing edge of stability for general loss functions.
\subsection{Notation}
Our general approach Taylor expands the gradient of each iterate $\theta_t$ around the corresponding iterate $\thetad_t$ of the constrained trajectory. We define the following Taylor expansion quantities at $\thetad_t$:
\begin{definition}[Taylor Expansion Quantities at $\thetad_t$]~
\begin{gather*}
    \nabla L_t := \nabla L(\thetad_t) \qc \nabla^2 L_t := \nabla^2 L(\thetad_t) \qc \nabla^3 L_t := \nabla^3 L(\thetad_t) \\ \nabla S_t := \nabla S(\thetad_t) \qc\qquad u_t := u(\thetad_t).
\end{gather*}
Furthermore, for any vector-valued function $v(\theta)$, we define $v_t^\perp := P_{u_t}^\perp v(\thetad_t)$ where $P_{u_t}^\perp$ is the projection onto the orthogonal complement of $u_t$.
\end{definition}

We also define the following quantities which govern the dynamics near $\thetas_t$.
\begin{definition}\label{def:alpha_beta_epsilon} Define
\begin{align*}
    \alpha_t := -\nabla L_t \cdot \nabla S_t \qc \beta_t := \norm{\nabla S_t^\perp}^2 \qc \delta_t := \sqrt{\frac{2\alpha_t}{\beta_t}} \qand \delta := \sup_t \delta_t.
\end{align*}
Furthermore, we define 
\begin{align*}
    \beta_{s \to t} := \nabla S_{t+1}^\perp\qty[\prod_{k = t}^{s+1} (I - \eta \nabla^2L_k)P_{u_k}^\perp] \nabla S_s^\perp.
\end{align*}
\end{definition}
Recall that $\alpha_t$ is the progressive sharpening force, $\beta_t$ is the strength of the stabilization force, and $\delta_t$ controls the size of the deviations from $\thetad_t$ and was the fixed point in the $x$ direction in \Cref{sec:ode_potential}. In addition, $\beta_{s \to t}$ admits a simple interpretation: it measures the change in sharpness between times $s$ and $t+1$ after a displacement of $\nabla S_s^\perp$ at time $s$. Unlike in \Cref{sec:heuristic} where $\nabla S^\perp$ is always perpendicular to the Hessian, the Hessian causes this displacement to change at every step. In particular, at time $k$, it multiplies the displacement by $(I - \eta \nabla^2 L_k)$. The displacement after $t$ steps is therefore
\begin{align*}
    \mathtt{displacement} \approx \qty[\prod_{k=t}^{s+1} (I - \eta \nabla^2 L_k) P_{u_k}^\perp] \nabla S_s^\perp.
\end{align*}
As the change in sharpness is approximately equal to $\nabla S_{t+1} \cdot \mathtt{displacement}$, the change in sharpness between times $s$ and $t+1$ is approximately captured by $\beta_{s \to t}$.

When $\nabla S$ is constant and orthogonal to the Hessian, as in \Cref{sec:heuristic}, this change is $\beta = \norm{\nabla S^\perp}^2$ because the change in sharpness is approximately $\nabla S \cdot \mathtt{displacement}$ and the displacement is $\nabla S^\perp$. Therefore in the setting of \Cref{sec:heuristic}, $\beta_{s \to t}$ is constant.

\subsection{The equations governing edge of stability}
We now introduce the equations governing edge of stability. We track the following quantities:
\begin{definition}\label{def:vxy}
    Define $v_t := \theta_t - \thetad_t$, $x_t := u_t \cdot v_t$, $y_t := \nabla S_t^\perp\cdot v_t$.
\end{definition}
Our predicted dynamics directly predict the displacement $v_t$ and the full definition is deferred to \Cref{sec: define predicted dynamics}. However, they have a relatively simple form in the $u_t,\nabla S_t^\perp$ directions that only depend on the remaining $v$ directions through the scalar quantities $\beta_{s \to t}$. 

\begin{lemma}[Predicted Dynamics for $x,y$]\label{lemma:predicted_x_y}
    Let $\vs_t$ denote our predicted dynamics (defined in \Cref{sec: define predicted dynamics}). Letting $\xs_t = u_t \cdot \vs_t$ and $\ys_t = \nabla S_t^\perp \cdot \vs_t$, we have
    \begin{align}\label{eq:predicted_x_y_only}
    \xs_{t+1} = - (1 + \eta \ys_t)\xs_t \qand \ys_{t+1} = \eta\sum_{s = 0}^t \beta_{s \to t}\qty[\frac{\delta_s^2 - {\xs_s}^2}{2}].
\end{align}
\end{lemma}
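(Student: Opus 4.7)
The plan is to unpack the predicted dynamics $\vs_t$ (to be defined in \Cref{sec: define predicted dynamics}) and project its recursion separately onto $u_{t+1}$ and onto $\nabla S_{t+1}^\perp$. Guided by \Cref{sec:four_stages} and by the derivation in \Cref{sec:ode_potential}, I expect $\vs_t$ to satisfy a recursion of the form
\begin{equation*}
\vs_{t+1} = (I - \eta \nabla^2 L_t)\vs_t + \tfrac{\eta(\delta_t^2 - \xs_t^2)}{2}\nabla S_t^\perp + (\text{terms parallel to } u_t),
\end{equation*}
where $(I - \eta \nabla^2 L_t)\vs_t$ is the linearized gradient descent step around $\thetad_t$, the $-\tfrac{\eta\xs_t^2}{2}\nabla S_t^\perp$ piece is the cubic Taylor correction $-\tfrac{\eta}{2}\nabla^3 L_t(\xs_t u_t, \xs_t u_t)$ rewritten via \Cref{lem:nabla_eig}, and the $+\tfrac{\eta\delta_t^2}{2}\nabla S_t^\perp = \tfrac{\eta\alpha_t}{\beta_t}\nabla S_t^\perp$ piece is the progressive-sharpening push coming from the fact that $\proj_\mathcal{M}$ removes the $\nabla S_t$ component of $-\eta\nabla L_t$ in the $\thetad$-update (cf.\ \Cref{lem:dagger_step}).

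\textbf{Recursion for $\xs$.} Project $\vs_{t+1}$ onto $u_{t+1} \approx u_t$. The linearization contributes $u_t \cdot (I - \eta \nabla^2 L_t)\vs_t = \xs_t - \eta S(\thetad_t)\xs_t = -\xs_t$, using $S(\thetad_t) = 2/\eta$. The $\nabla S_t^\perp$ source is orthogonal to $u_t$ by construction. The cubic term also has a cross contribution $-\eta \xs_t \nabla^3 L_t(u_t, w_t)$ where $w_t = P_{u_t}^\perp \vs_t$; projecting onto $u_t$ and applying \Cref{lem:nabla_eig} yields $u_t\cdot\nabla^3 L_t(u_t, w_t) = \nabla S_t \cdot w_t = \ys_t$, producing the extra $-\eta \xs_t \ys_t$ term. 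Combining gives $\xs_{t+1} = -(1+\eta \ys_t)\xs_t$.

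\textbf{Recursion for $\ys$.} Split $\vs_t = \xs_t u_t + w_t$; then the $u_t$-parallel piece is annihilated by $P_{u_{t+1}}^\perp$ (since $u_{t+1}$ is an eigenvector of $\nabla^2 L_{t+1}$ at level $2/\eta$, the operator $(I - \eta \nabla^2 L_t)$ maps $u_t\mapsto -u_t$ which is absorbed into the separately tracked $\xs$-dynamics). Therefore $w_t$ satisfies the linear recursion $w_{t+1} = (I-\eta\nabla^2 L_t) P_{u_t}^\perp w_t + \tfrac{\eta(\delta_t^2 - \xs_t^2)}{2}\nabla S_t^\perp$. Unrolling from $s=0$ to $t$:
\begin{equation*}
w_{t+1} = \sum_{s=0}^{t}\left[\prod_{k=t}^{s+1}(I-\eta\nabla^2 L_k)P_{u_k}^\perp\right]\tfrac{\eta(\delta_s^2 - \xs_s^2)}{2}\nabla S_s^\perp.
\end{equation*}
Taking the inner product with $\nabla S_{t+1}^\perp$ and recognizing the bracketed operator as exactly $\beta_{s\to t}$ from \Cref{def:alpha_beta_epsilon} yields the claimed closed form for $\ys_{t+1}$.

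\textbf{Main obstacle.} The hardest part is pinning down the source term $\tfrac{\eta(\delta_t^2 - \xs_t^2)}{2}\nabla S_t^\perp$ from first principles. This requires cleanly separating two independent effects in the evolution of $v_t = \theta_t - \thetad_t$: the cubic self-stabilization kick $-\tfrac{\eta\xs_t^2}{2}\nabla S_t$ produced by the third-order Taylor expansion of $\nabla L(\theta_t)$ around $\thetad_t$, and the progressive-sharpening push $+\tfrac{\eta\delta_t^2}{2}\nabla S_t$ generated by the projection step that stabilizes $\thetad$. A secondary issue is handling the cubic cross terms $\nabla^3 L_t(u_t,w_t)$ and $\nabla^3 L_t(w_t,w_t)$: the first must be shown to contribute only along $u_t$ (supplying the $\eta\xs_t\ys_t$ term above), while the second should be absorbed into the higher-order error terms that the definition of $\vs_t$ explicitly discards, so that the simple recursions claimed by the lemma hold exactly for the predicted dynamics.
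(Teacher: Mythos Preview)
Your unrolling argument for $\ys_{t+1}$ is exactly what the paper does and is correct. The gap is in how you treat $\xs_{t+1}$ and, relatedly, in what you assume the definition of $\vs_t$ looks like.

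In the paper, the predicted dynamics are \emph{defined} (Definition in \Cref{sec: define predicted dynamics}) by the recursion
\[
\vs_{t+1} \;=\; P_{u_{t+1}}^\perp(I-\eta\nabla^2 L_t)P_{u_t}^\perp \vs_t \;+\; \eta P_{u_{t+1}}^\perp \nabla S_t^\perp\Big[\tfrac{\delta_t^2-\xs_t^2}{2}\Big] \;-\; (1+\eta\ys_t)\xs_t\, u_{t+1},
\]
so the $u_{t+1}$-component is \emph{prescribed} to be $-(1+\eta\ys_t)\xs_t$. The identity $\xs_{t+1}=-(1+\eta\ys_t)\xs_t$ is therefore literally the definition, not something to be derived. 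Your derivation of the $-\eta\xs_t\ys_t$ piece via the cubic cross term $u_t\cdot\nabla^3 L_t(u_t,w_t)=\nabla S_t\cdot w_t$ is a correct calculation, but it belongs to a different statement: it is the mechanism by which the \emph{true} displacement $v_t=\theta_t-\thetad_t$ approximately satisfies the predicted recursion, and it appears in the proof of the two-step coupling lemma (\Cref{lem:2step}), not here. The same remark applies to your ``main obstacle'': pinning down the source term $\tfrac{\eta(\delta_t^2-\xs_t^2)}{2}\nabla S_t^\perp$ from the cubic Taylor expansion of $\nabla L(\theta_t)$ and from \Cref{lem:dagger_step}, and arguing that $\nabla^3 L_t(w_t,w_t)$ is higher order, are real issues---but they are the content of \Cref{lem:2step} and \Cref{thm:coupling}, not of this lemma.

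Once you have the actual definition, the proof is a two-liner: read off $\xs_{t+1}$ from the $u_{t+1}$-component; for $\ys_{t+1}$, note that $P_{u_{t+1}}^\perp\vs_{t+1}$ obeys the linear inhomogeneous recursion you wrote for $w_t$ (with $A_k=(I-\eta\nabla^2 L_k)P_{u_k}^\perp$), unroll it, and dot with $\nabla S_{t+1}^\perp$ to recover $\beta_{s\to t}$. Your proposal does this second step correctly.
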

As we will see in Theorem \ref{thm:coupling}, the $x,y$ directions alone suffice to determine the loss and sharpness values and, in fact, \textit{the EOS dynamics can be fully captured by this 2d dynamical system with time-dependent coefficients}.

 Note that when $\beta_{s \to t}$ are constant, our update reduces to the simple case discussed in \Cref{sec:heuristic}, which we analyze fully. When $x_t$ is large, \cref{eq:predicted_x_y_only} demonstrates that there is a self-stabilization force which acts to decrease $y_t$; however, unlike in \Cref{sec:heuristic}, the strength of this force changes with $t$.

\subsection{Coupling Theorem}
We now show that, under a mild set of assumptions which we verify to hold empirically in \Cref{sec:verify_assumptions}, the true dynamics are accurately governed by the predicted dynamics. This lets us use the predicted dynamics to predict the loss, sharpness, and the distance to the constrained trajectory $\thetad_t$.

Our errors depend on the unitless quantity $\epsilon$, which we verify is small in \Cref{sec:verify_assumptions}.
\begin{definition}
    Let $\epsilon_t := \eta \sqrt{\alpha_t}$ and $\epsilon := \sup_t \epsilon_t$.
\end{definition}
To control Taylor expansion errors, we require upper bounds on $\nabla^3 L$ and its Lipschitz constant:\footnote{For simplicity of exposition, we make these bounds on $\nabla^3 L$ globally, however our proof only requires them in a small neighborhood of the constrained trajectory $\theta^\dagger$.}
\begin{assumption}\label{assume:rho4}
    Let $\rho_3$, $\rho_4$ to be the minimum constants such that for all $\theta$, $\norm{\nabla^3 L(\theta)}_{op} \le \rho_3$ and $\nabla^3 L$ is $\rho_4$-Lipschitz with respect to $\norm{\cdot}_{op}$. Then we assume that $\rho_4 = O(\eta \rho_3^2)$.
\end{assumption}
Next, we require the following generalization of \Cref{assumption:progressive_sharpening}: 
\begin{assumption}\label{assume:prog_general}
    For all $t$,
\begin{align*}
    \frac{-\nabla L_t \cdot \nabla S_t}{\norm{\nabla L_t}\norm{\nabla S_t^\perp}} = \Theta(1) \qand \norm{\nabla S_t^\perp} = \Theta(\rho_3).
\end{align*}
\end{assumption}

Finally, we require a set of ``non-worst-case" assumptions, which are that the quantities $\nabla^2 L, \nabla^3 L,$ and $\lambda_{min}(\nabla^2 L)$ are nicely behaved in the directions orthogonal to $u_t$, which generalizes the eigengap assumption. We verify the assumptions on $\nabla^2 L$ and $\nabla^3 L$ empirically in \Cref{sec:verify_assumptions}.
\begin{assumption}\label{assume:non_worst} For all $t$ and $v,w \perp u_t$,
\begin{align*}\frac{\norm{\nabla^3L_t(v, w)}}{\norm{\nabla^3 L_t}_{op}\norm{v}\norm{w}} \qc \frac{|\nabla^2 L_t( \vs^\perp_t,\vs^\perp_t)|}{\norm{\nabla^2 L_t}\norm{\vs^\perp_t}^2} \qand \frac{\abs{\lambda_{min}(\nabla^2 L_t)}}{\|\nabla^2 L_t\|_2} \le O(\epsilon).
\end{align*}
\end{assumption}

With these assumptions in place, we can state our main theorem which guarantees $\xs,\ys,\vs$ predict the loss, sharpness, and deviation from the constrained trajectory up to higher order terms:
\begin{theorem}\label{thm:coupling}
    Let $\mathscr{T} := O(\epsilon^{-1})$ and assume that $\min_{t \le \mathscr{T}} \abs{\xs_t}\ge c_1\delta.$ Then for any $t \le \mathscr{T}$, we have
    \begin{align*}
        L(\theta_t) &= L(\thetad_t) + \xs_t^2/\eta +  O\qty(\epsilon \delta^2/\eta) \tag{Loss} \\
        S(\theta_t) &= 2/\eta + \ys_t + \qty(S_t \cdot u_t) \xs_t + O\qty(\epsilon^2/\eta) \tag{Sharpness} \\
        \theta_t &= \thetad_t + \vs_t + O\qty(\epsilon\delta) \tag{Deviation from $\theta^\dagger$}
    \end{align*}
\end{theorem}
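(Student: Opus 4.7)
The plan is to prove Theorem \ref{thm:coupling} by induction on $t \le \mathscr{T}$, with the deviation estimate $\theta_t = \thetad_t + \vs_t + O(\epsilon \delta)$ as the primary invariant; the loss and sharpness estimates then follow as corollaries via Taylor expansions at $\thetad_t$. First I would set up a one-step recursion for the error $e_t := \theta_t - \thetad_t - \vs_t$. Taylor expanding the gradient at the reference point $\thetad_t$ yields
\begin{align*}
    \nabla L(\theta_t) = \nabla L_t + \nabla^2 L_t\, v_t + \tfrac{1}{2}\nabla^3 L_t(v_t, v_t) + O(\rho_4 \|v_t\|^3),
\end{align*}
with $v_t = \vs_t + e_t$. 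Subtracting the true update from the sum of the constrained update $\thetad_{t+1} = \proj_\mathcal{M}(\thetad_t - \eta \nabla L_t)$ (using \Cref{lem:dagger_step}) and the predicted-dynamics update for $\vs_t$ should give a recursion of the form $e_{t+1} = (I - \eta \nabla^2 L_t) e_t + r_t$ where $r_t$ collects higher-order Taylor terms, the drift of the basis vectors $u_t, \nabla S_t^\perp$ between steps, and the projection error. The identity $\nabla^3 L_t(u_t, u_t) = \nabla S_t$ from \Cref{lem:nabla_eig} is precisely what makes the leading cubic correction match the self-stabilization term $\eta\tfrac{1}{2}(\delta_s^2 - \xs_s^2)$ in \Cref{lemma:predicted_x_y}, so it cancels rather than contributing to $r_t$.

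The main obstacle is controlling error accumulation in the single unstable direction $u_t$ over $\mathscr{T} = O(\epsilon^{-1})$ steps. The $u_t$-component of $e_t$ is multiplied each step by $-(1 + \eta \ys_t)$ with $|\ys_t| = O(\sqrt{\alpha})$ by the bounds of \Cref{sec:ode_potential}, giving per-step amplification $1 + O(\epsilon)$ and cumulative amplification $e^{O(1)}$ over the full horizon; this is compatible with a final error of $O(\epsilon \delta)$ only if the per-step perturbation $r_t$ is of order $\epsilon^2 \delta$. Establishing this crucially uses \Cref{assume:non_worst}: the quadratic term $\nabla^2 L_t (e_t, e_t)$, cubic terms like $\nabla^3 L_t(v_t,v_t)$ evaluated on components orthogonal to $u_t$, and the Hessian's action on $\vs_t^\perp$ would each naively contribute at order $\delta^2/\eta$, but the assumed small operator norms on subspaces orthogonal to $u_t$ (together with \Cref{assume:rho4}) cut them down to the needed $O(\epsilon^2 \delta)$. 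The orthogonal component of $e_t$ is stable under $(I - \eta \nabla^2 L_t)$ by \Cref{assumption:eigval_gap}, so it only requires summing the per-step errors geometrically. A further technical point is that the natural decomposition is in the basis at time $t$, but the recursion passes to the basis at $t+1$; the basis drift $\|u_{t+1} - u_t\|, \|\nabla S_{t+1}^\perp - \nabla S_t^\perp\| = O(\eta \rho_3/\text{eigengap})$ must be absorbed into $r_t$.

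Given the deviation estimate, the loss bound follows by Taylor expanding $L(\theta_t)$ to third order at $\thetad_t$. The linear term $\nabla L_t \cdot v_t$ vanishes in the $u_t$-direction because $\thetad_t \in \mathcal{M}$ and is further reduced in orthogonal directions by \Cref{assume:non_worst}; the quadratic form $\tfrac{1}{2} v_t^\top \nabla^2 L_t v_t$ is dominated by the $u_t$-mode with eigenvalue $2/\eta$, yielding the claimed $\xs_t^2/\eta$, while the off-$u_t$ contribution is $O(\epsilon \|v_t\|^2) = O(\epsilon \delta^2)$ and hence $O(\epsilon \delta^2/\eta)$ after dividing by $\eta$ is not needed — it is absorbed directly. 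The sharpness bound follows similarly by Taylor expanding $S$: writing $\nabla S_t = (\nabla S_t \cdot u_t)u_t + \nabla S_t^\perp$ produces the two main terms $\ys_t + (\nabla S_t \cdot u_t)\xs_t$, and the remainder is controlled using $\|\nabla^2 S\| = O(\rho_3/\text{eigengap})$ from \Cref{lem:nabla_eig} along with $\|v_t\| = O(\delta)$. The hypothesis $|\xs_t| \ge c_1 \delta$ is used here to guarantee $S$ is smooth along the trajectory (so Taylor expansion of $S$ is legitimate) and to ensure the dominant $\xs_t^2/\eta$ term in the loss is not dwarfed by the $O(\epsilon \delta^2/\eta)$ remainder.
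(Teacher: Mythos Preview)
Your one-step recursion has a genuine gap in the unstable direction. Writing out the cubic term in the Taylor expansion of $\nabla L(\theta_t)$ and dotting with $u_t$, the contribution $\tfrac12\eta\,\nabla^3 L_t(u_t,u_t)\cdot u_t\cdot x_t^2 = \tfrac12\eta\,(\nabla S_t\cdot u_t)\,x_t^2$ appears in the true update for $x_{t+1}$ but is \emph{absent} from the predicted dynamics \eqref{eq:predicted_x_y_only}. This term is of order $\eta\rho_3\delta^2 = O(\epsilon\delta)$, not $O(\epsilon^2\delta)$, and it is \emph{not} touched by \Cref{assume:non_worst} (which only controls $\nabla^3 L_t(v,w)$ for $v,w\perp u_t$). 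With a per-step residual of $O(\epsilon\delta)$ and $\mathscr{T}=O(\epsilon^{-1})$ steps at amplification $1+O(\epsilon)$, you would only obtain $\|v_t-\vs_t\|=O(\delta)$, losing exactly the factor of $\epsilon$ the theorem claims.

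The paper handles this with a \emph{two-step} comparison (\Cref{lem:2step}): because $x_{t+1}\approx -x_t$, the offending term $\tfrac12\eta\,(\nabla S_t\cdot u_t)\,x_t^2$ at step $t$ and the analogous term at step $t+1$ have nearly equal magnitude; after the intervening sign flip they cancel, leaving a genuine $O(\epsilon^2\delta)$ two-step residual. The coupling (\Cref{lem:coupling}) then tracks the error in $x$ \emph{multiplicatively} via $\widehat x_t/\xs_t$, and this is where the hypothesis $|\xs_t|\ge c_1\delta$ is actually used: it keeps the denominator bounded away from zero so that the ratio recursion is well-posed. Your stated reasons for that hypothesis (smoothness of $S$, dominance of $\xs_t^2/\eta$) are not the operative ones; smoothness of $S$ already follows from the eigengap, and the loss identity holds regardless of whether $\xs_t^2/\eta$ dominates the remainder.
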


\Cref{thm:coupling} says that up to higher order terms, the predicted dynamics $\vs_t$ capture the deviation from the constrained trajectory and allow us to predict the sharpness and loss of the current iterate. In particular, the GD trajectory $\{\theta_t\}_t$ should be though of as following the constrained PGD trajectory $\{\theta^\dagger\}_t$ plus a rapidly oscillating process whose dynamics are given by the predicted dynamics \cref{eq:predicted full}.

The sharpness is controlled by the slowly evolving quantity $\ys_t$ and the period-2 oscillations of $(\nabla S \cdot U) \xs_t$. This combination of gradual and rapid periodic behavior was observed by \citet{cohen2021eos} and appears in our experiments. \Cref{thm:coupling} also shows that the loss at $\theta_t$ spikes whenever $\xs_t$ is large. On the other hand, when $\xs_t$ is small, $L(\theta_t)$ approaches the loss of the constrained trajectory.

\section{Experiments}\label{sec:experiments}

\begin{figure}
    \centering
    \includegraphics[width=0.9\textwidth]{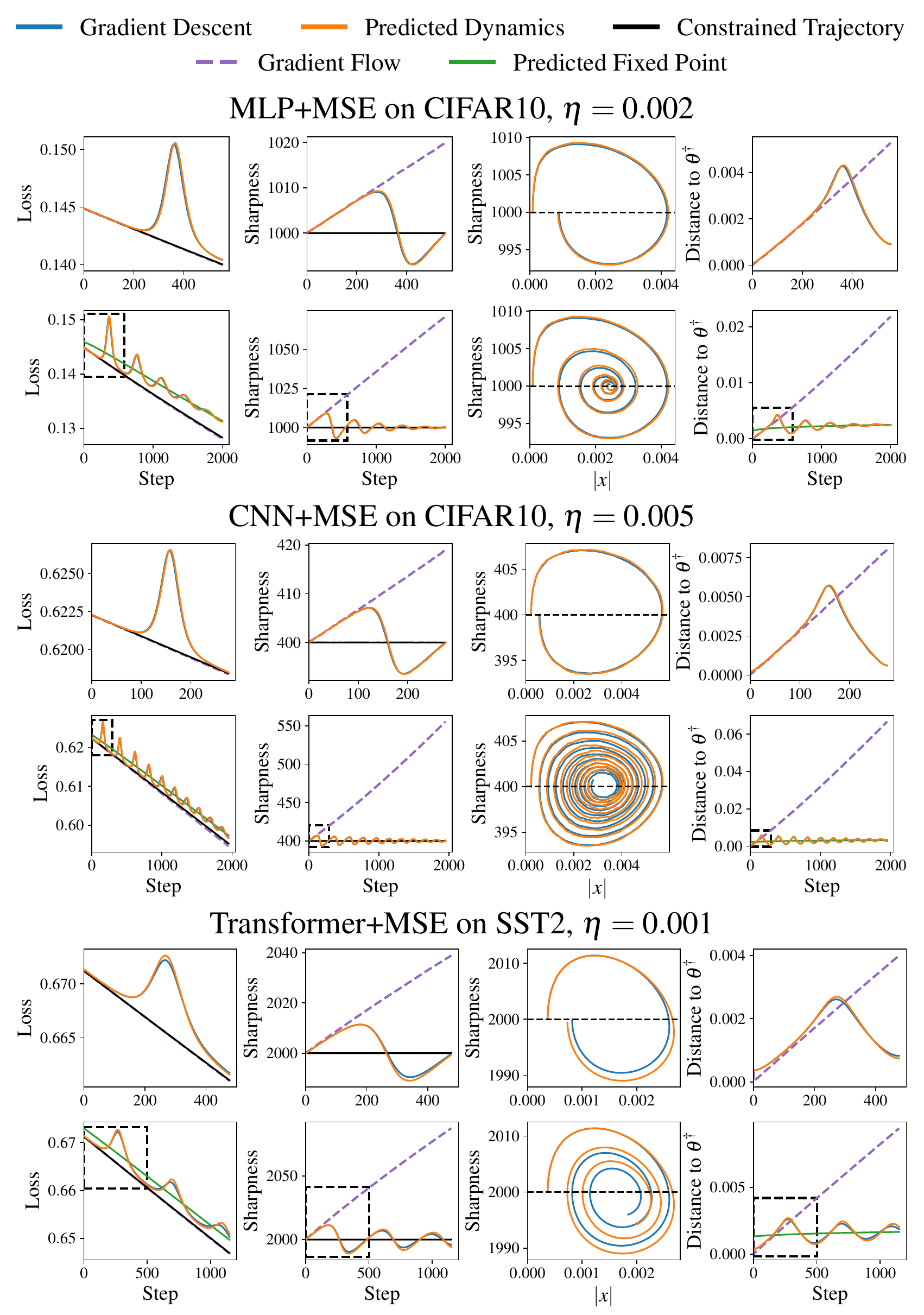}
    \caption{We empirically demonstrate that the predicted dynamics given by \cref{eq:predicted_x_y_only} track the true dynamics of gradient descent at the edge of stability. For each learning rate, the top row is a zoomed in version of the bottom row which isolates one cycle and is reflected by the dashed rectangle in the bottom row. Reported sharpnesses are two-step averages for visual clarity. For additional experimental details, see \Cref{sec:experiments} and \Cref{sec:experimental_details}.}
    \label{fig:loss_sharpness}
\end{figure}

We verify that the predicted dynamics defined in \cref{eq:predicted_x_y_only} accurately capture the dynamics of gradient descent at the edge of stability by replicating the experiments in \citep{cohen2021eos} and tracking the deviation of gradient descent from the constrained trajectory. In \Cref{fig:loss_sharpness}, we evaluate our theory on a 3-layer MLP and a 3-layer CNN trained with mean squared error (MSE) on a 5k subset of CIFAR10 and a 2-layer Transformer \citep{Vaswani2017AttentionIA} trained with MSE on SST2 \citet{socher-etal-2013-recursive}. We provide additional experiments varying the learning rate and loss function in \Cref{sec:additional_experiments}, which use the generalized predicted dynamics described in \Cref{sec:generalized_discussion}. For additional details, see \Cref{sec:experimental_details}.

\Cref{fig:loss_sharpness} confirms that the predicted dynamics \cref{eq:predicted_x_y_only} accurately predict the loss, sharpness, and distance from the constrained trajectory. In addition, while the gradient flow trajectory diverges from the gradient descent trajectory at a linear rate, the gradient descent trajectory and the constrained trajectories remain close \emph{throughout training}. In particular, the dynamics converge to the fixed point $(\abs{x_t},y_t) = (\delta_t,0)$ described in \Cref{sec:ode_potential} and $\|\theta_t - \theta^\dagger_t\| \to \delta_t$. This confirms our claim that gradient descent implicitly follows the constrained trajectory~\cref{eq:constrained_update}.

In \Cref{sec:theory}, various assumptions on the model were made to obtain the edge of stability behavior. In \Cref{sec:verify_assumptions}, we numerically verify these assumptions to ensure the validity of our theory.

\section{Discussion}\label{sec:discussion}

\subsection{Takeaways from the Predicted Dynamics}

Recall that the predicted dynamics $\vs_t,\xs_t,\ys_t$ describe the deviation of the GD trajectory $\{\theta_t\}_t$ from the PGD constrained trajectory $\{\thetad_t\}_t$. These dynamics enable many interesting observations about the EOS dynamics. First, the loss and sharpness only depend on the quantities $(\xs_t, \ys_t)$, which are governed by the 2D dynamical system with time-dependent coefficients \cref{eq:predicted_x_y_only}. When $\alpha_t, \beta_{s \to t}$ are constant, we showed that this system cycles and has a conserved potential. In general, understanding the edge of stability dynamics only requires analyzing the 2D system \cref{eq:predicted_x_y_only}, which is generally well behaved (\Cref{fig:loss_sharpness}).

In the limit, we expect $\xs_t, \ys_t$ to approach $(\pm \delta_t, 0)$, the fixed point of the system \cref{eq:predicted_x_y_only}.
In fact, \Cref{fig:loss_sharpness} shows that after a few cycles, $(\xs_t, \ys_t)$ indeed converges to this fixed point. We are able to accurately predict its location, as well as the loss increase from the constrained trajectory due to $\xs_t \neq 0$.
\subsection{Generalized Predicted Dynamics}\label{sec:generalized_discussion}
In order for our cubic Taylor expansions to track the true gradients, we need a bound on the fourth derivative of the loss (\Cref{assume:rho4}). This is usually sufficient to capture the dynamics at the edge of stability as demonstrated by \Cref{fig:loss_sharpness} and \Cref{sec:verify_assumptions}. However, this condition was violated in some of our experiments, especially when using logistic loss. To overcome this challenge, we developed a generalized form of the predicted dynamics whose definition we defer to \Cref{sec:generalized_dynamics}. These generalized predictions are qualitatively similar to those given by the predicted dynamics in \Cref{sec:theory}; however, they precisely track the dynamics of gradient descent in a wider range of settings. See \Cref{sec:additional_experiments} for empirical verification of the generalized predicted dynamics.

\subsection{Implications for Neural Network Training}
\paragraph{Non-Monotonic Loss Decrease} A central phenomenon at edge of stability is that despite non-monotonic fluctuations of the loss, the loss still decreases over long time scales. Our theory provides a clear explanation for this decrease. We show that the gradient descent trajectory remains close to the constrained trajectory (\Cref{sec:heuristic,sec:theory}). Since the constrained trajectory is \emph{stable}, it satisfies a descent lemma (\Cref{lem:dagger_descent}), and has monotonically decreasing loss. Over short time periods, the loss is dominated by the rapid fluctuations of $x_t$ described in \Cref{sec:heuristic}. Over longer time periods, the loss decrease of the constrained trajectory due to the descent lemma overpowers the bounded fluctuations of $x_t$, leading to an overall loss decrease. This is reflected in our experiments in \Cref{sec:experiments}.

\paragraph{Generalization \& the Role of Large Learning Rates} Prior work has shown that in neural networks, both decreasing sharpness of the learned solution~\citep{keskar2017, dziugaite2017, neyshabur2017, Jiang2020Fantastic} and increasing the learning rate~\citep{smith2018dont, li2019large, Lewkowycz2020} are correlated with better generalization. Our analysis shows that gradient descent implicitly constrains the sharpness to stay near $2/\eta$, which suggests larger learning may improve generalization by reducing the sharpness. In \Cref{fig:training_speed} we confirm that in a standard setting, full-batch gradient descent generalizes better with large learning rates.

\paragraph{Training Speed} Additional experiments in \citep[Appendix F]{cohen2021eos} show that, despite the instability in the training process, larger learning rates lead to faster convergence. 
This phenomenon is explained by our analysis. Gradient descent is coupled to the constrained trajectory which minimizes the loss while constraining movement in the $u_t,\nabla S_t^\perp$ directions. Since only two directions are ``off limits,'' the constrained trajectory can still move quickly in the orthogonal directions, using the large learning rate to accelerate convergence. We demonstrate this empirically in \Cref{fig:training_speed}.

\paragraph{Connection to Sharpness Aware Minimization (SAM)} \citet{foret2021sharpnessaware} introduced the sharpness-aware minimization (SAM) algorithm, which aims to control sharpness by solving the optimization problem $\min_\theta\max_{\|\delta\| \le \epsilon} L(\theta + \delta)$. This is roughly equivalent to minimizing $S(\theta)$ over all global minimizers, and thus SAM tries to explicitly minimize the sharpness. Our analysis shows that gradient descent \emph{implicitly} minimizes the sharpness, and for a fixed $\eta$ looks to minimize $L(\theta)$ subject to $S(\theta) = 2/\eta$. 

\paragraph{Connection to Warmup} \citet{gilmer2022} demonstrated that \emph{learning rate warmup}, which consists of gradually increasing the learning rate, empirically leads to being able to train with a larger learning rate. The self-stabilization property of gradient descent provides a plausible explanation for this phenomenon. If too large of an initial learning rate $\eta_0$ is chosen (so that $S(\theta_0)$ is much greater than $2/\eta_0$), then the iterates may diverge before self stabilization can decrease the sharpness to $2/\eta_0$. On the other hand, if the learning rate is chosen that $S(\theta_0)$ is only slightly greater than $2/\eta_0$, self-stabilization will decrease the sharpness to $2/\eta_0$. Repeatedly increasing the learning rate slightly could then lead to small decreases in sharpness without the iterates diverging, thus allowing training to proceed with a large learning rate.

\paragraph{Connection to Weight Decay and Sharpness Reduction.} \citet{Lyu2022Normalization} proved that when the loss function is scale-invariant, gradient descent with weight decay and sufficiently small learning rate converges leads to reduction of the \emph{normalized} sharpness $S(\theta/\|\theta\|)$. In fact, the mechanism behind the sharpness reduction is exactly the self-stabilization force described in this paper restricted to the setting in \citep{Lyu2022Normalization}. In section \Cref{sec:scale_invariant} we present a heuristic derivation of this equivalence.

\section{Future Work}\label{sec:future_work}

\subsection{Towards A Complete Theory of Optimization}

Our main result \Cref{thm:coupling} gives sufficient and verifiable conditions under which the GD trajectory $\{\theta_t\}_t$ and the PGD constrained trajectory $\{\thetad_t\}_t$ can be coupled for $O(\epsilon^{-1})$ steps. However, these conditions are not strictly necessary and our local coupling result is not sufficient to prove global convergence to a stationary point of \cref{eq:constrained_update}. These suggest two important directions for future work: Can we precisely characterize the prerequisites on the loss function and learning rate for self-stabilization to occur? Can we couple for longer periods of time or repeat this local coupling result to prove convergence to KKT points of \cref{eq:constrained_update}?

\subsection{Multiple Unstable Eigenvalues} Our work focuses on explaining edge of stability in the presence of a single unstable eigenvalue (\Cref{assumption:eigval_gap}). However, \citet{cohen2021eos} observed that progressive sharpening appears to apply to \emph{all} eigenvalues, even after the largest eigenvalue has become unstable. As a result, all of the top eigenvalues will successively enter edge of stability (see \Cref{fig:multiple_eval}). In particular, \Cref{fig:multiple_eval} shows that the dynamics are fairly well behaved in the period when only a single eigenvalue is unstable, yet appear to be significantly more chaotic when multiple eigenvalues are unstable.

\begin{figure}[h!]
    \centering
    \includegraphics[width=0.9\textwidth]{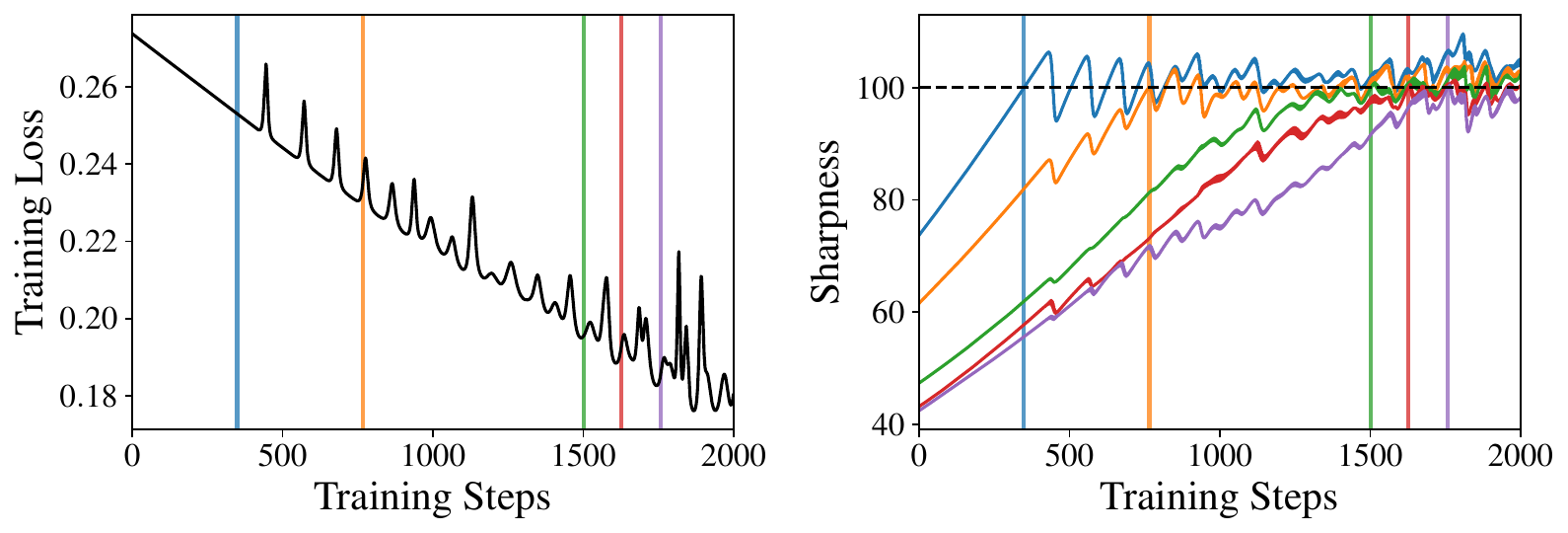}
    \caption{Edge of stability with multiple unstable eigenvalues. Each vertical line is the time at which the corresponding eigenvalue of the same color becomes unstable.}
    \label{fig:multiple_eval}
\end{figure} 

One technical challenge with dealing with multiple eigenvalues is that, when the top eigenvalue is not unique, the sharpness is no longer differentiable and it is unclear how to generalize our analysis. However, one might expect that gradient descent can still be coupled to projected gradient descent under the non-differentiable constraint $S(\thetad_T) \le 2/\eta$. When there are $k$ unstable eigenvalues, with corresponding eigenvectors $u_t^1, \ldots, u_t^k$, the constrained update is roughly equivalent to projecting out the subspace $\mathrm{span}\{\nabla^3L_t(u_t^i, u_t^j) : i,j \in [k] \}$ from the gradient update $-\eta\nabla L_t$. Demonstrating self-stabilization thus requires analyzing the dynamics in the subspace $\mathrm{span}\qty(\{u_t^i~:~i \in [k]\} \cup \{\nabla^3L_t(u_t^i, u_t^j) : i,j \in [k] \})$.
We leave investigating the dynamics of multiple unstable eigenvalues for future work.

\subsection{The Mystery of Progressive Sharpening}

Our analysis directly assumed the existence of progressive sharpening (\Cref{assumption:progressive_sharpening}), and focused on explaining the edge of stability dynamics using this assumption. However, this leaves open the question of \emph{why} neural networks exhibit progressive sharpening, which is an important question for future work. Partial progress towards understanding the mechanism behind progressive sharpening in neural networks has been made in the concurrent works~\citet{Li2022EoS,xingyu2022eosminimalist,agarwal2022quadraticeos}.

\subsection{Connections to Stochasticity}

Our analysis focused on understanding the edge of stability dynamics for gradient descent. However, phenomena similar to the edge of stability have also been observed for SGD \citep{jastrzebski2019on,Jastrzebski2020The,cohen2022adaptive_eos}. While these phenomena do not exhibit as simple of a characterization as the full batch gradient descent dynamics do, understanding the optimization dynamics of neural networks used in practice requires understanding the connections between edge of stabity and SGD. Important questions include what the correct notion of ``stability'' is for SGD and what form the self-stabilization force takes.

One possible hypothesis for how self-stabilization occurs in SGD is as a byproduct of the implicit regularization of SGD described in \citet{BlancGVV20, damian2021label, li2022what}. These works show that the stochasticity in SGD has the effect of decreasing a quantity related to the trace of the Hessian, rather than directly constraining the operator norm of the Hessian (i.e the sharpness).

Furthermore, \citet{damian2021label} showed that the implicit bias of label noise SGD is proportional to $-\log(1-\frac{\eta}{2} \nabla^2 L)$ which blows up as $S(\theta) \to 2/\eta$. This regularizer therefore heuristically acts as a log-barrier which enforces $S(\theta) \le 2/\eta$, rather than as a hard constraint. The precise ``break-even'' point \citep{Jastrzebski2020The} could then be approximated by the point at which this regularization force balances with progressive sharpening. It is an interesting direction to better understand the interactions between the self-stabilization mechanism described in this paper and the implicit regularization effects of SGD described in in \citet{BlancGVV20, damian2021label, li2022what}.

\section*{Acknowledgements}
AD acknowledges support from a NSF Graduate Research Fellowship. EN acknowledges support from a National Defense Science \& Engineering Graduate Fellowship, and NSF grants CIF-1907661 and DMS-2014279. JDL, AD, EN acknowledge support of  the Sloan Research Fellowship, NSF CCF 2002272, NSF IIS 2107304, NSF CIF 2212262, and NSF-CAREER under award \#2144994.

The authors would like to thank Jeremy Cohen, Kaifeng Lyu, and Lei Chen for helpful discussions throughout the course of this project. We would especially like to thank Jeremy Cohen for suggesting the term ``self-stabilization'' to describe the negative feedback loop derived in this paper.

\bibliography{ref}

\clearpage

\appendix

\renewcommand{\contentsname}{Table of Contents}
\tableofcontents

\section{Notation}
We denote by $\nabla^k L(\theta)$ the $k$-th order derivative of the loss $L$ at $\theta$. Note that $\nabla^k L(\theta)$ is a symmetric $k$-tensor in $(\mathbb{R}^{d})^{\otimes k}$ when $\theta \in \mathbb{R}^d$.

For a symmetric $k$-tensor $T$, and vectors $u_1,\ldots,u_j \in \mathbb{R}^d$ we will use $T(u_1,\ldots,u_j)$ to denote the tensor contraction of $T$ with $u_1,\ldots,u_j$, i.e.
\begin{align*}
    \qty[T(u_1,\ldots,u_k)]_{i_1,\ldots,i_{k-j}} := T_{i_1,\ldots,i_k} (u_1)_{i_{k-j+1}} \cdots (u_j)_{i_k}.
\end{align*}

We use $P_{u_1,\ldots,u_k}$ to denote the orthogonal projection onto $\mathrm{span}(u_1,\ldots,u_k)$ and $P_{u_1,\ldots,u_k}^\perp$ is the projection onto the corresponding orthogonal complement.

For matrices $A_1,\ldots,A_k$, we define
\begin{align*}
    \prod_{k=1}^t A_k := A_1 \ldots A_t \qand \prod_{k=t}^1 A_k := A_t \ldots A_1.
\end{align*}

\section{A Toy Model for Self-Stabilization}\label{sec:toy_model}

For $\alpha,\beta > 0$, consider the function
\begin{align*}
    L(x,y,z) := \qty(\frac{2}{\eta} + \sqrt{\beta} y)\frac{x^2}{2} - \frac{\alpha}{\sqrt{\beta}} y - z
\end{align*}
initialized at the point $(x_0,0,0)$. Note that the constrained trajectory will follow $x^\dagger_t = 0$, $y^\dagger_t = 0$, $z^\dagger_t = -\eta t$ as it cannot decrease $y$ without increasing the sharpness past $2/\eta$. We therefore have:
\begin{align*}
    \nabla L_t = \qty[0,-\frac{\alpha}{\sqrt{\beta}},1],\ u_t = [1,0,0],\ S_t = 2/\eta + \sqrt{\beta} y,\ \nabla^2 L_t = S_t u_t u_t^t,\ \nabla S_t = \qty[0,\sqrt{\beta},0].
\end{align*}
Note that this satisfies all of the assumptions in \Cref{sec:heuristic} and it satisfies $\alpha = -\nabla L_t \cdot \nabla S_t$ and $\beta = \|\nabla S_t\|^2$. This process will then follow \cref{eq:bean_ode} in the $x,y$ directions while it tracks the constrained trajectory $\theta^\dagger_t$ moving linearly in the $-P_{u,\nabla S}^\perp \nabla L = [0,0,-1]$ direction.

\section{Definition of the Predicted Dynamics}\label{sec: define predicted dynamics}
Below, we present the full definition of the predicted dynamics:

\begin{definition}[Predicted Dynamics, full]\label{def:predicted_full}
Define $\vs_0 =v_0$, and let $\xs_t = \vs_t \cdot u_t, \ys_t = \nabla S^\perp \cdot \vs_t$. Then
\begin{align}\label{eq:predicted full}
    v^*_{t+1} = P_{u_{t+1}}^\perp(I - \eta \nabla^2 L_t) P_{u_t}^\perp v^*_t + \eta P_{u_{t+1}}^\perp\nabla S_t^\perp \qty[\frac{\delta_t^2 - {x^*_t}^2}{2}] -(1 + \eta y^*_t)x^*_t\cdot u_{t+1}
\end{align}
\end{definition}
For convenience, we will define the map $\step_t : \mathbb{R}^d \rightarrow \mathbb{R}^d$ as follows:
\begin{definition}
    Given a vector $v$ and a timestep $t$, define $\step_t(v)$ by
    \begin{align}
        P_{u_{t+1}}^\perp \step_t(v) &= P_{u_{t+1}}^\perp \qty[(I - \eta \nabla^2 L_t) P_{u_t}^\perp v + \eta \nabla S_t^\perp \qty[\frac{\delta_t^2 - x^2}{2}]] \\
        u_{t+1} \cdot \step_t(v) &= -(1 + \eta y)x.
    \end{align}
    where $x = u_t \cdot v$ and $y = \nabla S_t^\perp \cdot v$.
\end{definition}
It is easy to see that $\vs_{t+1} = \step_t(\vs_t)$.

\begin{proof}[Proof of \Cref{lemma:predicted_x_y}]
Defining $A_t = (I - \eta\nabla^2L_t)P_{u_t}^\perp$, we can unfold the recursion in $\cref{eq:predicted full}$ to obtain the following formula for $\vs_t$.
\begin{align}
    v^*_{t+1} = \eta\sum_{s = 0}^t P_{u_{t+1}}^\perp \qty[\prod_{k = t}^{s+1} A_k] \nabla S_s^\perp \qty[\frac{\delta_s^2 - {x^*_s}^2}{2}] - (1 + \eta y^*_t)x^*_t\cdot u_{t+1}.\label{eq:vt_star_formula}
\end{align}

It is then immediate to see that $\xs_t = \vs_t \cdot u_t, \ys_t = \nabla S_t^\perp \cdot \vs_t$ have the following simple update:
\begin{align*}
    x^*_{t+1} = - (1 + \eta y^*_t)x^*_t \qand y^*_{t+1} = \eta\sum_{s = 0}^t \beta_{s \to t}\qty[\frac{\delta_s^2 - {x^*_s}^2}{2}],
\end{align*}
where we recall that we have defined
\begin{align}
    \beta_{s \to t} := \nabla S_{t+1}^\perp\qty[\prod_{k = t}^{s+1} A_k] \nabla S_s^\perp.
\end{align}
\end{proof}

\section{Experimental Details}\label{sec:experimental_details}
\subsection{Architectures}
We evaluated our theory on four different architectures. The 3-layer MLP and CNN are exact copies of the MLP and CNN used in \citep{cohen2021eos}. The MLP has width $200$, the CNN has width $32$, and both are using the swish activation \citep{ramachandran2017swish}. We also evaluate on a ResNet18 with progressive widths $16,32,64,128$ and on a 2-layer Transformer with hidden dimension $64$ and two attention heads.

\subsection{Data}
We evaluated our theory on three primary tasks: CIFAR10 multi-class classification with both categorical MSE loss and cross-entropy loss, CIFAR10 binary classification (cats vs dogs) with binary MSE loss and logistic loss, and SST2 \citep{socher-etal-2013-recursive} with binary MSE loss and logistic loss.

\subsection{Experimental Setup}
For every experiment, we tracked the gradient descent dynamics until they reached instability and then began tracking the constrained trajectory, gradient descent, gradient flow, and both our predicted dynamics (\Cref{sec:theory}) and our generalized predicted dynamics (\Cref{sec:generalized_dynamics}). In addition, we tracked the various quantities on which we made assumptions for \Cref{sec:theory} in order to validate these assumptions. We also tracked the second eigenvalue of the Hessian at the constrained trajectory throughout training and stopped training once it reached $1.9/\eta$, to ensure the existence of a single unstable eigenvalue. Finally, as the edge of stability dynamics are very sensitive to small perturbation when $|x|$ is small (see \Cref{fig:bean_plots}), we switched to computing gradients with 64-bit precision after first reaching instability to avoid propagating floating point errors.

Eigenvalues were computed using the LOBPCG sparse eigenvalue solver in JAX \citep{jax2018github}. To compute the constrained trajectory, we computed a linearized approximation for $\proj_{\mathcal{M}}$ inspired by \Cref{lem:dagger_step} along with a Newton step in the $u_t$ direction to ensure that $\nabla L \cdot u = 0$. Each linearized approximation step required recomputing the sharpness and top eigenvector and each projection step then consisted of three linearized projection steps, for a total of three eigenvalue computations per projection step.

Our experiments were conducted in JAX \citep{jax2018github}, using \url{https://github.com/locuslab/edge-of-stability} as a reference for replicating the experimental setup used in \citep{cohen2021eos}. All experiments were conducted on two servers, each with 10 NVIDIA GPUs. Code is available at \url{https://github.com/adamian98/EOS}.

\section{Empirical Verification of the Assumptions}\label{sec:verify_assumptions}

For each of the experimental settings considered (MLP+MSE, CNN+MSE, CNN+Logistic, ResNet18+MSE, Transformer+MSE, Transformer+Logistic), we plot a number of quantities along the constrained trajectory to verify that the assumptions made in the main text hold. For each learning rate $\eta$ we have 8 plots tracking various quantities, which verify the assumptions as follows: \Cref{assumption:progressive_sharpening} is verified by the 1st plot, $\epsilon$ being small is verified by the 2nd plot, \Cref{assume:prog_general} is verified by the 3rd and 4th plots, \Cref{assume:rho4} is verified by the 5th plot, and \Cref{assume:non_worst} is verified by the last 3 plots. As described in the experimental setup, training is stopped once the second eigenvalue is $1.9/\eta$, so \Cref{assumption:eigval_gap} always holds with $c=1.9$ as well.
\begin{center}
\newpage
\includegraphics[width=\linewidth]{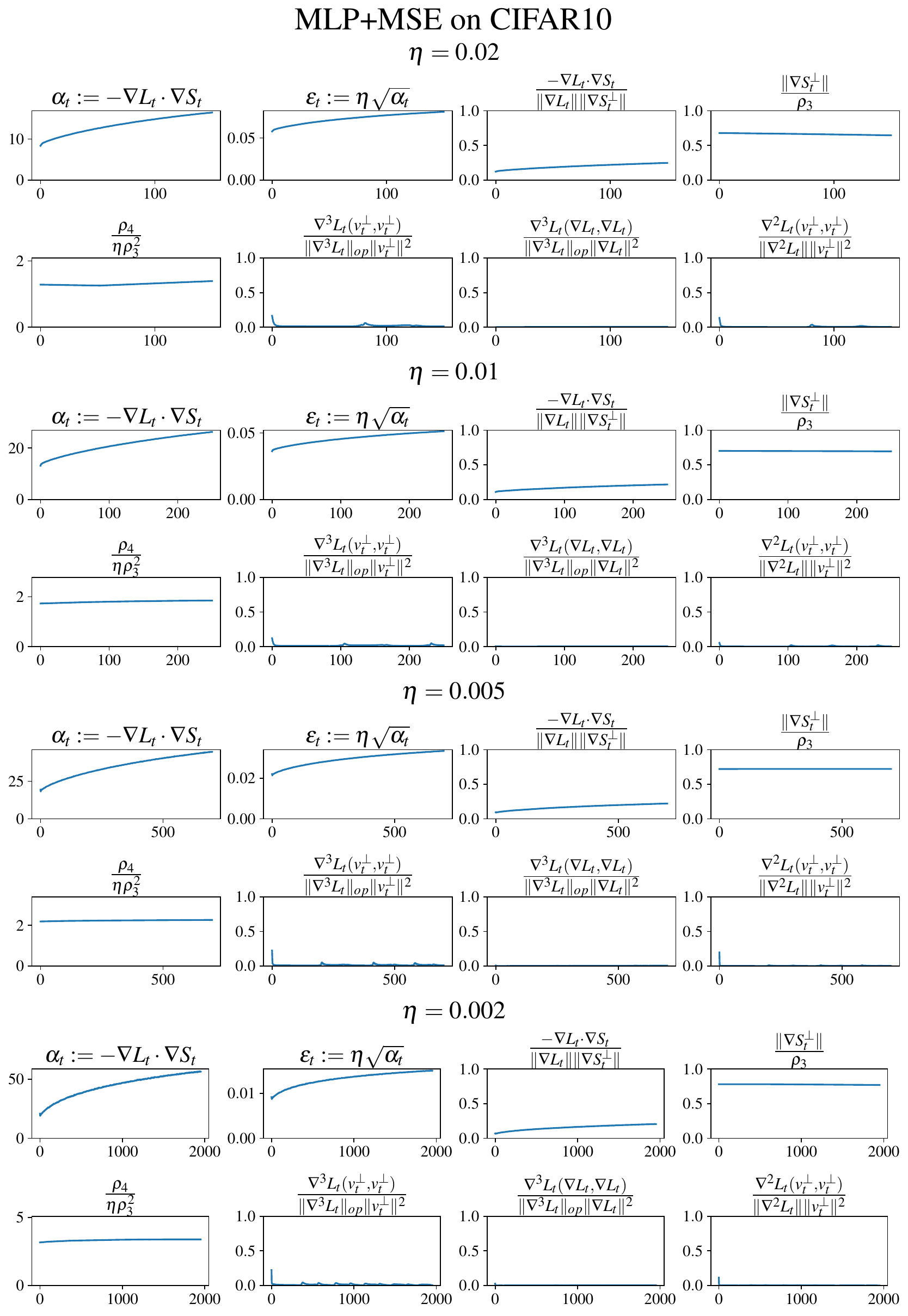}
\newpage
\includegraphics[width=\linewidth]{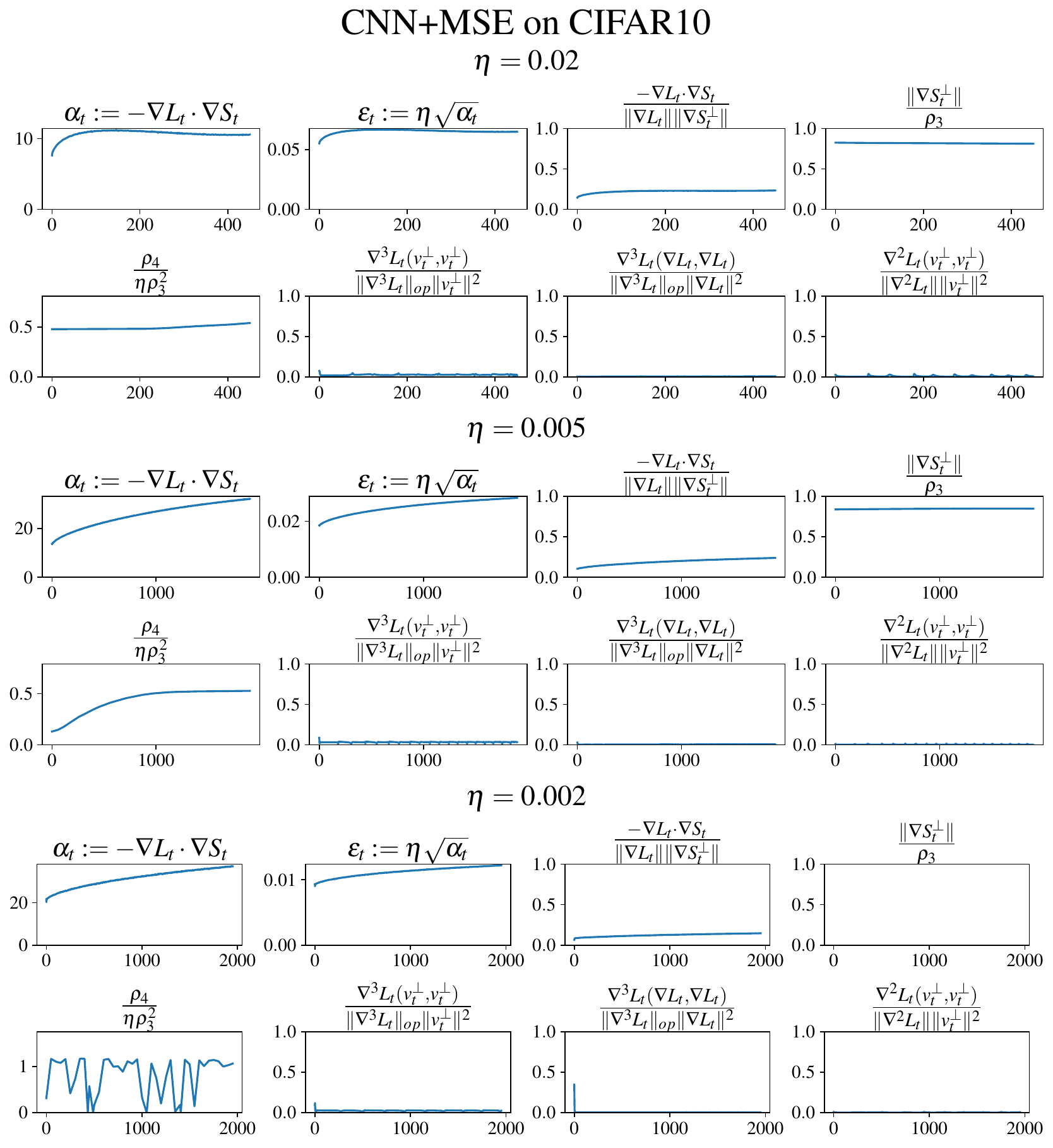}
\newpage
\includegraphics[width=\linewidth]{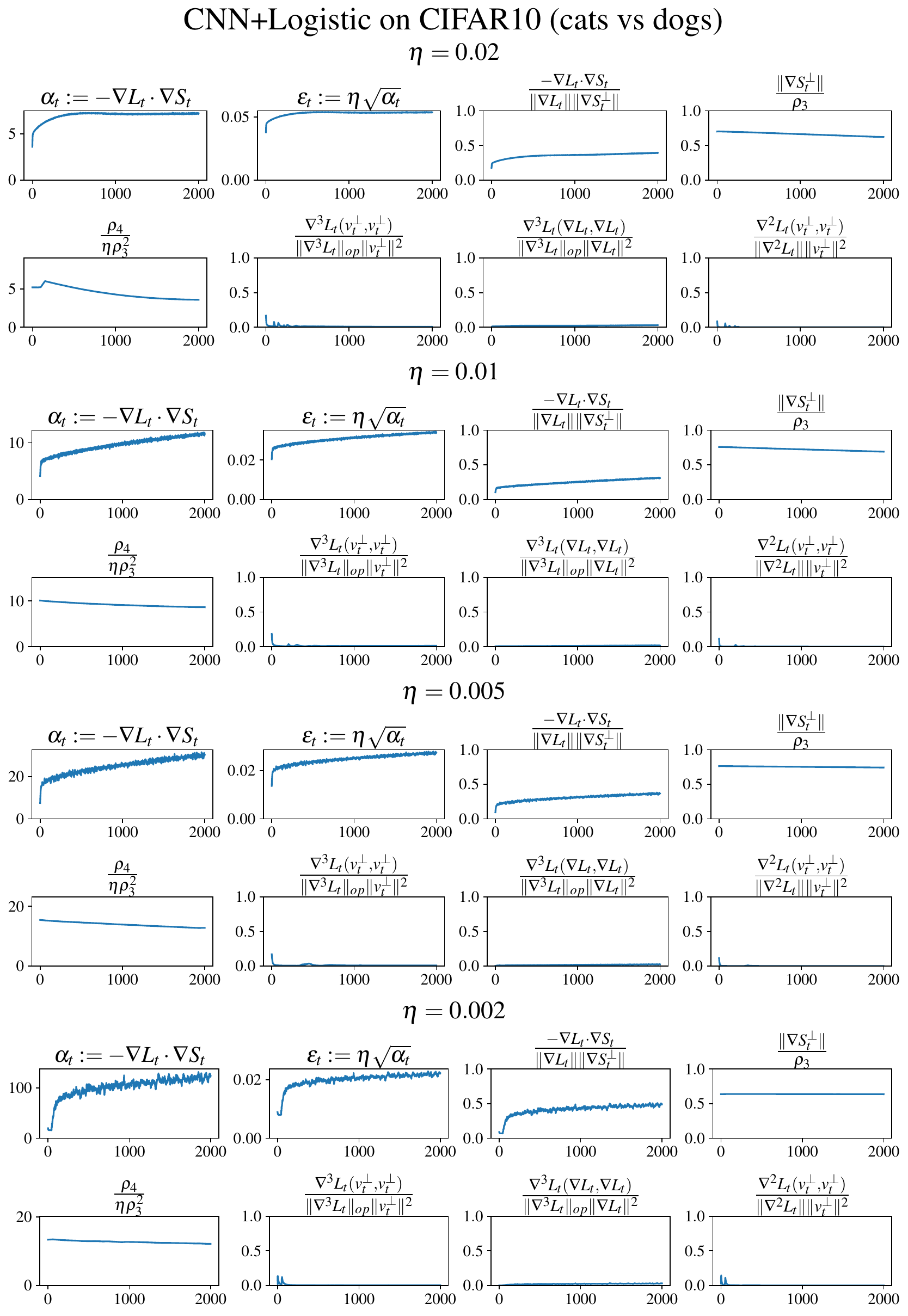}
\newpage
\includegraphics[width=\linewidth]{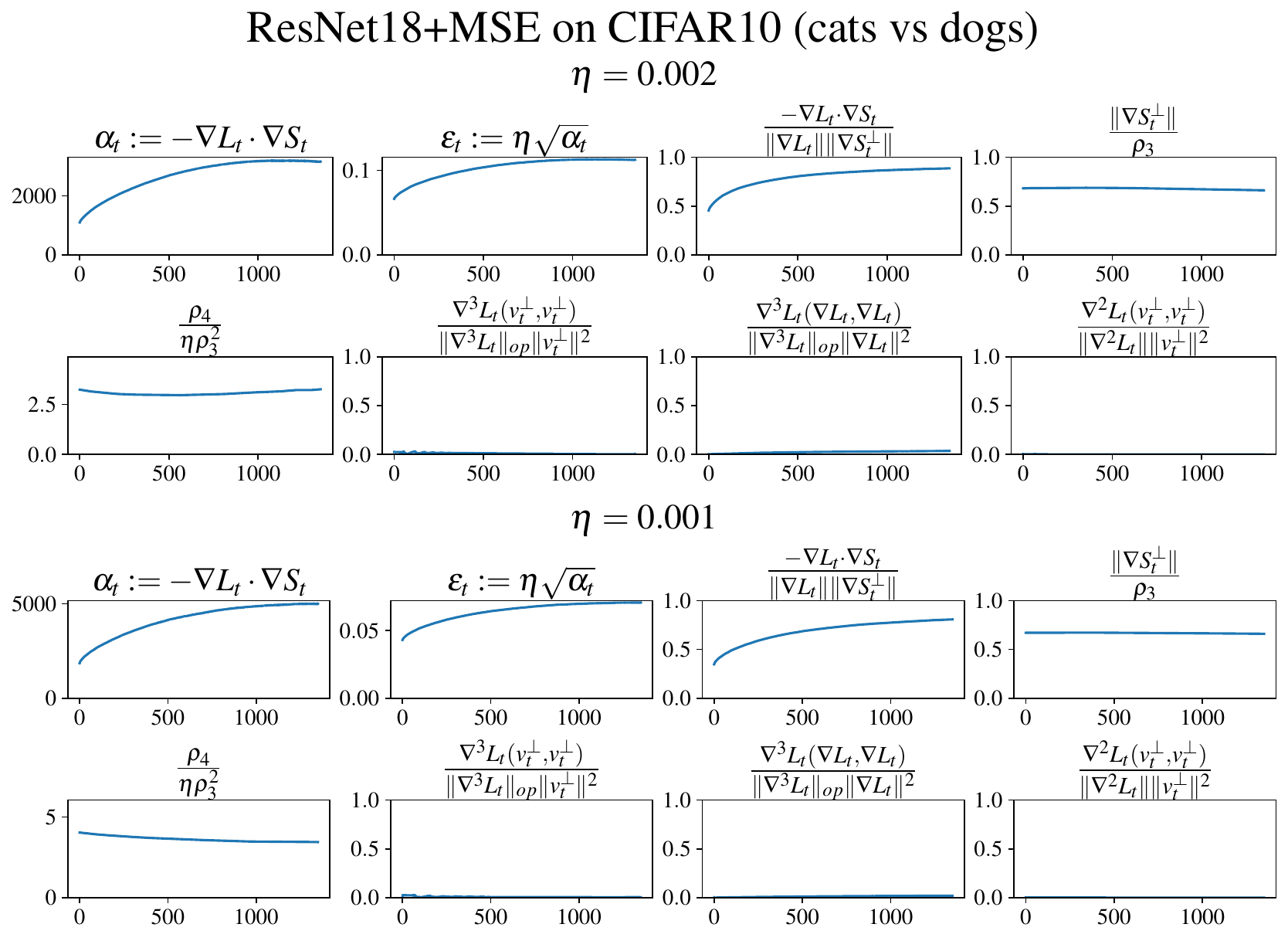}
\includegraphics[width=\linewidth]{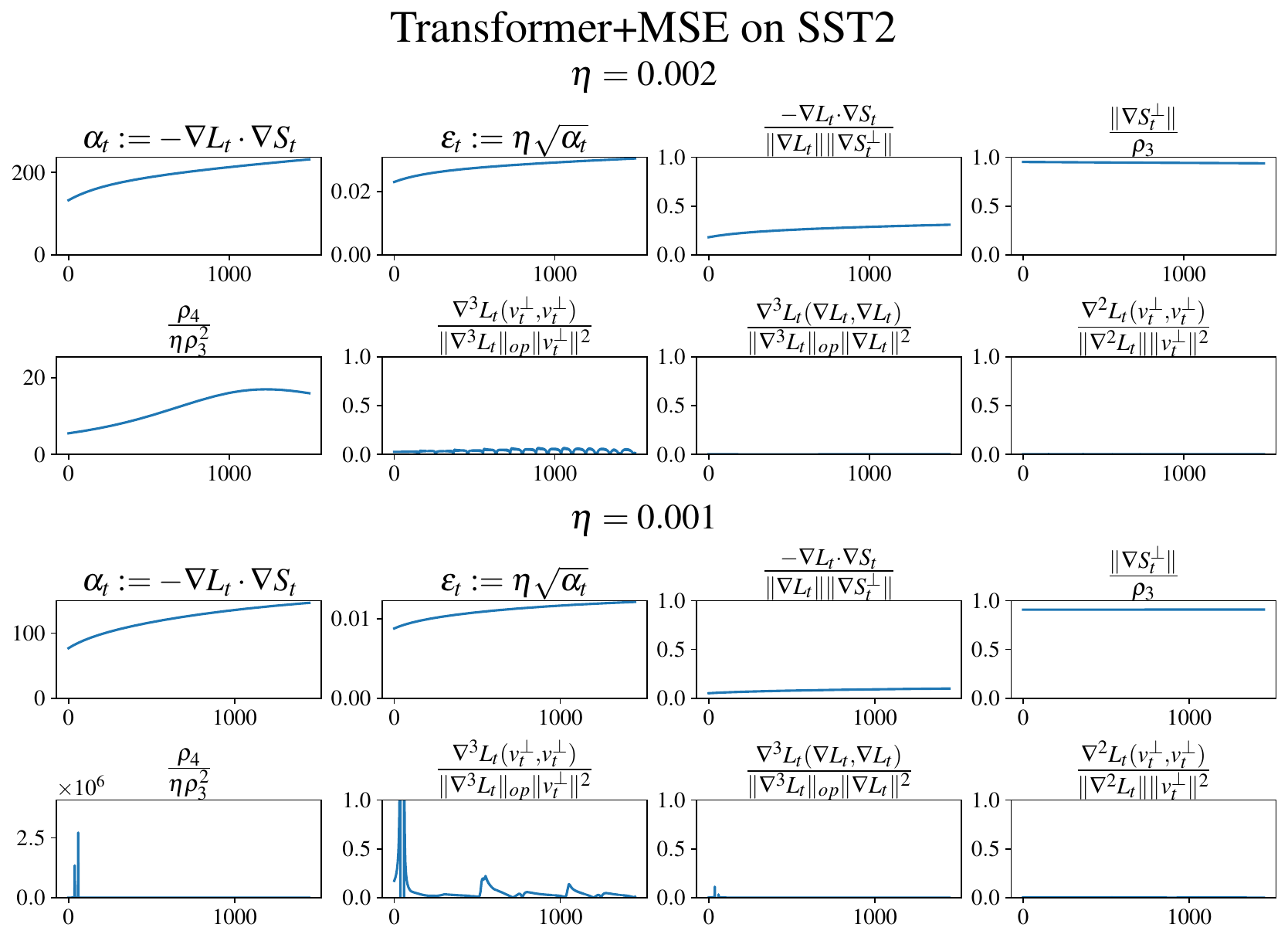}
\newpage
\includegraphics[width=\linewidth]{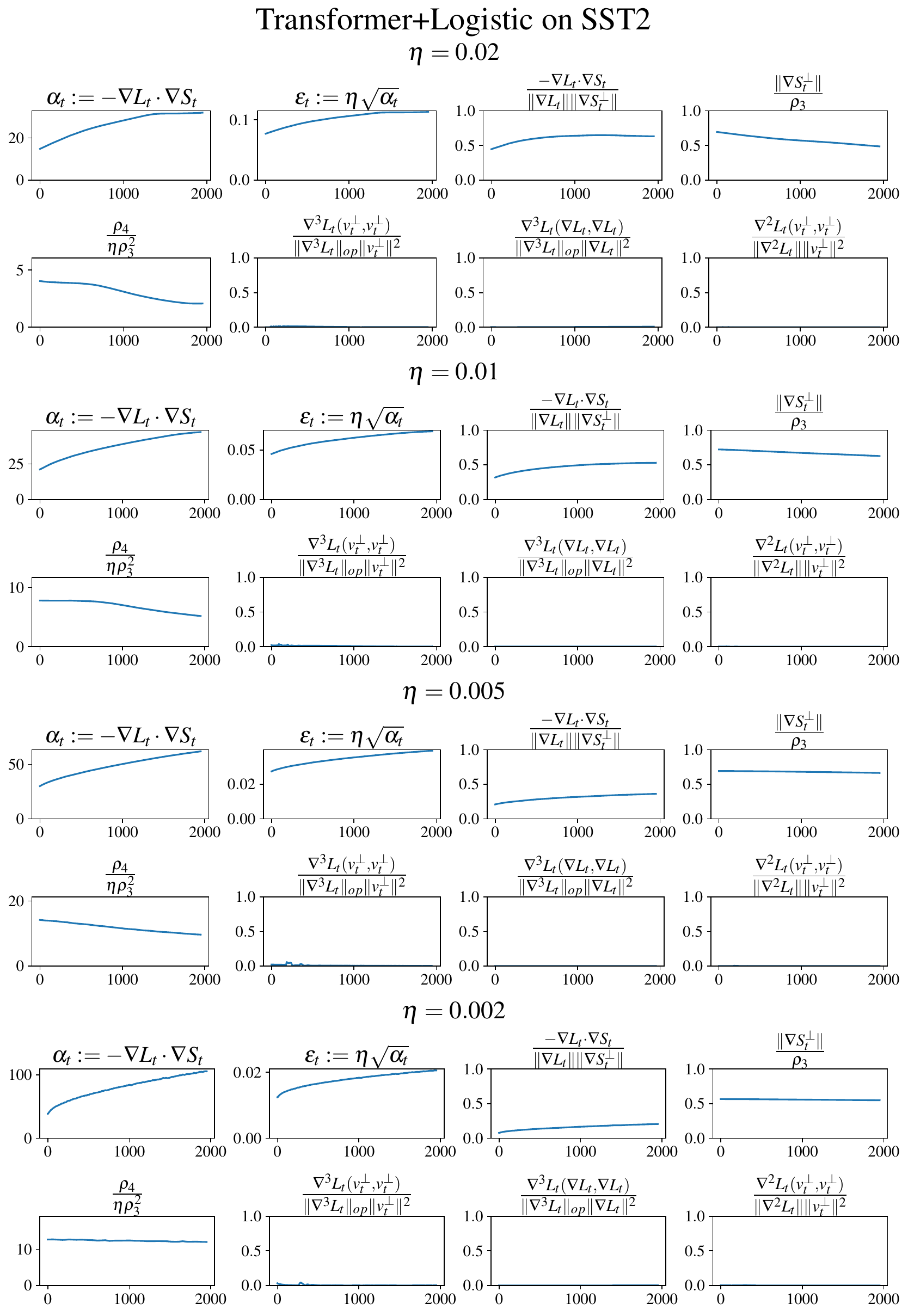}
\end{center}
\newpage

\section{The Generalized Predicted Dynamics}\label{sec:generalized_dynamics}

Our analysis relies on a cubic Taylor expansion of the gradient. However, in order for this Taylor expansion to accurately track the gradients we need a bound on the fourth derivative of the loss (\Cref{assume:rho4}). \Cref{sec:experiments} and \Cref{sec:verify_assumptions} show that this approximation is sufficient to capture the dynamics of gradient descent at the edge of stability for many standard models when the loss criterion is the mean squared error. However, for certain architectures and loss functions, including ResNet18 and models trained with the logistic loss, this condition is often violated.

In these situations, the loss function in the top eigenvector direction is either \emph{sub-quadratic}, meaning that the quadratic Taylor expansion overestimates the loss and sharpness\footnote{This sub-quadratic phenomenon was also observed in \citep{Ma2022TheMS}.}, or \emph{super-quadratic}, meaning that the quadratic Taylor expansion underestimates the loss and sharpness. To capture this phenomenon, we derive a more general form of the predicted dynamics which reduces to the standard predicted dynamics in \Cref{sec:theory} when the loss in the top eigenvector direction is approximately quadratic. In addition, \Cref{sec:additional_experiments} shows that the generalized predicted dynamics capture the dynamics of gradient descent at the edge of stability for both mean squared error and cross-entropy in all settings we tested.

\subsection{Deriving the Generalized Predicted Dynamics}

To derive the generalized predicted dynamics, we will abstract away the dynamics in the top eigenvector direction. Specifically, for every $t$ we define
\begin{align*}
    F_t(x) := L(\thetad_t + xu_t) - L(\thetad_t) - \frac{x^2}{\eta}.
\end{align*}
We say that $L$ is sub-quadratic at $t$ if $F_t(x) < 0$ and super-quadratic if $F_t(x) > 0$.

Note that knowing $F_t$ is not sufficient to capture the dynamics in the $u_t$ direction. Specifically,
\begin{align*}
    x_{t+1} = x_t - \eta u_t \cdot \nabla L(\thetad_t + v_t) \ne x_t - \eta u_t \cdot \nabla L(\thetad_t + x u_t).
\end{align*}
It is still critically important to track the effect that the movement in the $\nabla S_t^\perp$ direction has on the dynamics of $x$. As in \Cref{sec:four_stages}, the effect of the movement in the $\nabla S_t^\perp$ direction on the dynamics of $x$ is changing the sharpness by $y_t$. This gives us the generalized predicted dynamics update:
\begin{align*}
    v^*_{t+1} = P_{u_{t+1}}^\perp(I - \eta \nabla^2 L_t) P_{u_t}^\perp v^*_t + \eta P_{u_{t+1}}^\perp\nabla S_t^\perp \qty[\frac{\delta_t^2 - {x^*_t}^2}{2}] -x^\star_{t+1} \cdot u_{t+1}
\end{align*}
where $x^\star_{t+1} = -(1 + \eta y_t^\star) x^\star_t - \eta F'(x^\star_t).$

Note that when $F_t(x) = 0$ is exactly quadratic, this reduces to the standard predicted dynamics update in \cref{eq:predicted full}. Note that the update for $y$ is completely unchanged:
\begin{lemma}
    Restricted to the $u_t,\nabla S_t$ directions, the generalized predicted dynamics $v^\star_t$ imply:
    \begin{align}\label{eq:gen_pred_xy}
        x^\star_{t+1} = -(1 + \eta y_t^\star) x^\star_t - \eta F'(x^\star_t) \qand y^\star_{t+1} = \eta\sum_{s = 0}^t \beta_{s \to t}\qty[\frac{\delta_s^2 - {x^*_s}^2}{2}].
    \end{align}
\end{lemma}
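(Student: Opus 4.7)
The plan is to derive the two equations by taking inner products of the generalized predicted dynamics update for $v^\star_{t+1}$ against $u_{t+1}$ and $\nabla S_{t+1}^\perp$ respectively. The key structural observation is that the generalized update decomposes into a component lying in $\mathrm{range}(P_{u_{t+1}}^\perp)$ that is \emph{identical} to the corresponding component of the original predicted dynamics in \Cref{def:predicted_full}, plus a coefficient along $u_{t+1}$ that now incorporates the nonlinear correction $-\eta F'(x^\star_t)$.

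For the $x^\star$ equation, I would take $u_{t+1} \cdot v^\star_{t+1}$. The first two terms on the right-hand side of the generalized update vanish since they lie in $\mathrm{range}(P_{u_{t+1}}^\perp)$, so only the parallel component survives, directly yielding $x^\star_{t+1} = -(1 + \eta y^\star_t)\,x^\star_t - \eta F'(x^\star_t)$. This step is essentially an unpacking of how $x^\star_{t+1}$ is defined.

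For the $y^\star$ equation, I would mimic the unfolding argument already used in the proof of \Cref{lemma:predicted_x_y}. Since $u_{t+1} \perp \nabla S_{t+1}^\perp$, the $u_{t+1}$-component of $v^\star_{t+1}$ does not contribute to $\nabla S_{t+1}^\perp \cdot v^\star_{t+1}$, so the $y^\star$ update depends only on the perpendicular part of the recursion. Setting $A_k := (I - \eta \nabla^2 L_k) P_{u_k}^\perp$ and iterating exactly as before gives
\begin{align*}
    P_{u_{t+1}}^\perp v^\star_{t+1} = \eta \sum_{s=0}^{t} P_{u_{t+1}}^\perp \qty[\prod_{k=t}^{s+1} A_k] \nabla S_s^\perp \qty[\frac{\delta_s^2 - {x^\star_s}^2}{2}];
\end{align*}
contracting with $\nabla S_{t+1}^\perp$ and invoking the definition of $\beta_{s\to t}$ from \Cref{def:alpha_beta_epsilon} yields the claimed formula for $y^\star_{t+1}$.

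Both computations reduce to direct algebraic bookkeeping that mirrors the proof of \Cref{lemma:predicted_x_y}, so no substantive obstacle arises. The only conceptual subtlety worth articulating is that the scalar correction $-\eta F'(x^\star_t)$ lives \emph{along} $u_{t+1}$ and is therefore annihilated by $\nabla S_{t+1}^\perp$; consequently the generalized $y^\star$ update coincides exactly with the original one despite the nonlinear modification in the $x^\star$ dynamics.
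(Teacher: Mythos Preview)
Your proposal is correct and matches the paper's approach exactly: the paper simply states that ``the proof is identical to the proof of \Cref{lemma:predicted_x_y},'' and your write-up reproduces that argument while making explicit the one new observation---that the correction $-\eta F'(x^\star_t)$ lives along $u_{t+1}$ and hence leaves the $y^\star$ recursion untouched.
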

The proof is identical to the proof of \Cref{lemma:predicted_x_y}.

\subsection{Properties of the Generalized Predicted Dynamics}
Note that due to the sign flipping argument in \Cref{sec:proofs}, we can assume that $F$ is an even function as the odd part will only influence the dynamics through additional oscillations of period 2, so throughout the remainder of this section we will assume that $F_t(x) = F_t(-x)$. Otherwise, we can simply redefine $F$ by its even part.

Next, note that the fixed point of \cref{eq:gen_pred_xy} is still when $x_t = \delta_t$, regardless of the shape of $F_t$, due to the need to stabilize the $\nabla S_t^\perp$ direction. This contradicts previous 1-dimensional analyses of edge of stability in which the fixed point in the top eigenvector direction strongly depends on the shape of $F_t$, the loss in the $u_t$ direction.

The limiting value of $y_t$ can therefore be read from the update for $x_t$. If $(\delta_t,y)$ is an orbit of period 2 of \cref{eq:gen_pred_xy}, then
\begin{align*}
    -\delta_t = -(1+\eta y)\delta_t - \eta F'(\delta_t) \implies y = -\frac{F'(\delta_t)}{\delta_t}.
\end{align*}
In addition, note that the sharpness can no longer be approximated as $S(\theta_t) \approx 2/\eta + y_t$ as the sharpness now changes along the $u_t$ direction. In particular, it changes by $F''(x)$ so that
\begin{align*}
    S(\theta_t) \approx 2/\eta + y_t + F''(x_t).
\end{align*}
Therefore, the limiting sharpness of \cref{eq:gen_pred_xy} is
\begin{align*}
    S(\theta_t) \to 2/\eta - \frac{F_t'(\delta_t)}{\delta_t} + F_t''(\delta_t).
\end{align*}
When $F_t = 0$ and the loss is exactly quadratic in the $u$ direction, this update reduces to fixed point predictions in \Cref{sec:four_stages}.

One interesting phenomenon observed by \citet{cohen2021eos} is that with cross-entropy loss, the sharpness was never exactly $2/\eta$, but usually hovered above it. This contradicts the predictions of the standard predicted dynamics which predict that the fixed point has sharpness $0$. However, using the generalized predicted dynamics \cref{eq:gen_pred_xy}, we can give a clear explanation.

When the loss is sub-quadratic, e.g. when $F_t(x) = -\rho_4 \frac{x^4}{24}$, we have
\begin{align*}
    S(\theta_t) \to 2/\eta + \rho_4 \frac{\delta_t^2}{6} - \rho_4 \frac{\delta_t^2}{2} = 2/\eta - \rho_4 \frac{\delta_t^2}{3} < 2/\eta
\end{align*}
so the sharpness will converge to a value \emph{below} $2/\eta$. On the other hand if the loss is super-quadratic, the sharpness converges to a value \emph{above} $2/\eta$. More generally, whether the loss converges to a value above or below $2/\eta$ depends on the sign of $F_t''(\delta_t) - \delta_t F_t'(\delta_t)$.

In our experiments in \Cref{sec:additional_experiments}, we observed both sub-quadratic and super-quadratic loss functions. In particular, the loss was usually sub-quadratic when it first reached instability but gradually became super-quadratic as training progressed at the edge of stability.

\section{Additional Experiments}\label{sec:additional_experiments}

\subsection{The Benefit of Large Learning Rates: Training Time and Generalization}

We trained ResNet18 with full batch gradient descent on the full 50k training set of CIFAR10 with various learning rates, in addition to the commonly proposed learning rate schedule $\eta_t := 1/S(\theta_t)$. We show that despite entering the edge of stability, large learning rates converge much faster. In addition, due to the self-stabilization effect of gradient descent, the final sharpness is bounded by $2/\eta$ which is smaller for larger learning rates and leads to better generalization (see \Cref{fig:training_speed}).

\begin{figure}[h]
    \centering
    \includegraphics[width=\textwidth]{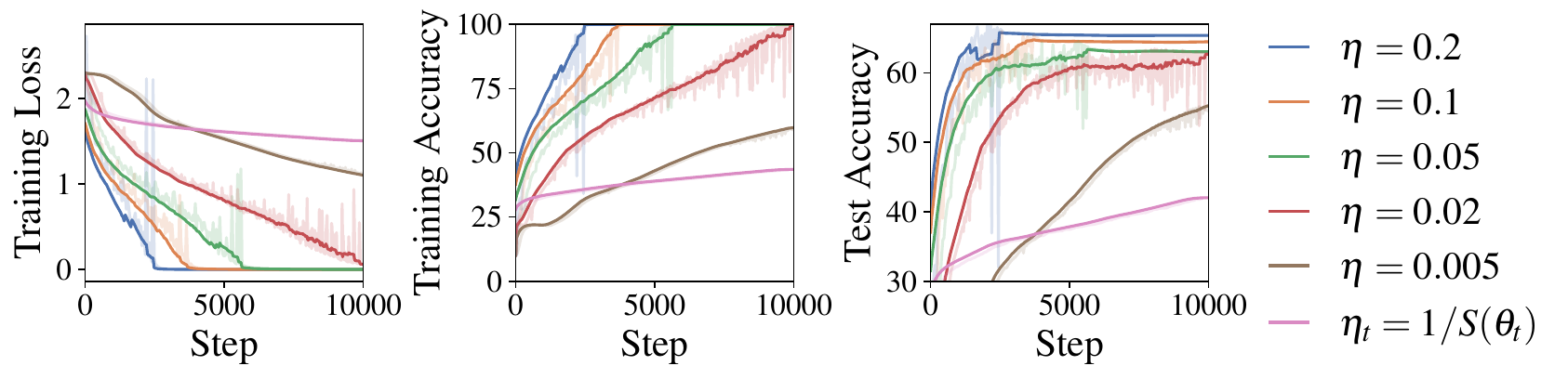}
    \caption{Large learning rates converge faster and generalize better (ResNet18 and CIFAR10).}
    \label{fig:training_speed}
\end{figure}

\begin{center}
\newpage
\subsection{Experiments with the Generalized Predicted Dynamics}
\includegraphics[width=\linewidth]{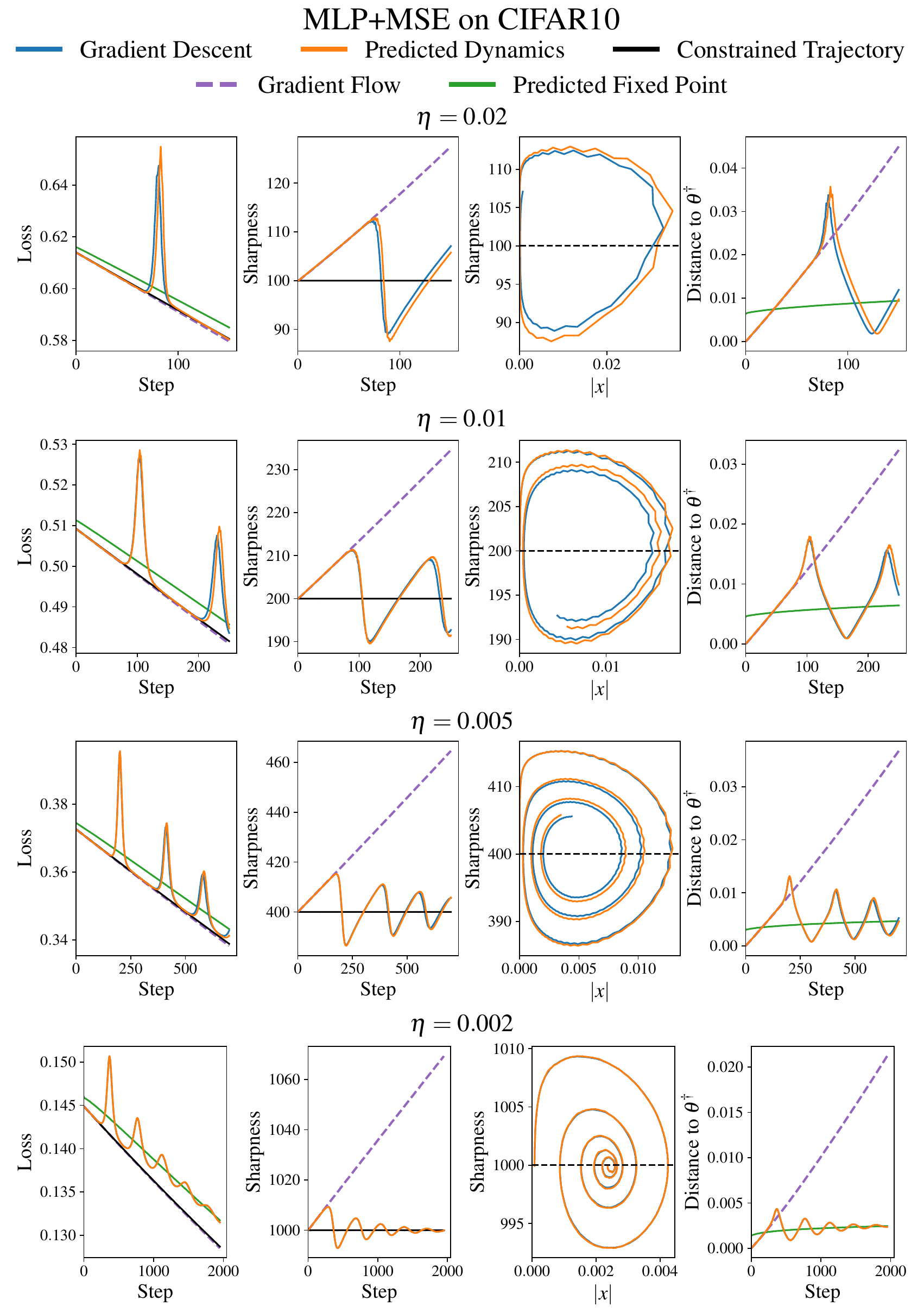}
\newpage
\includegraphics[width=\linewidth]{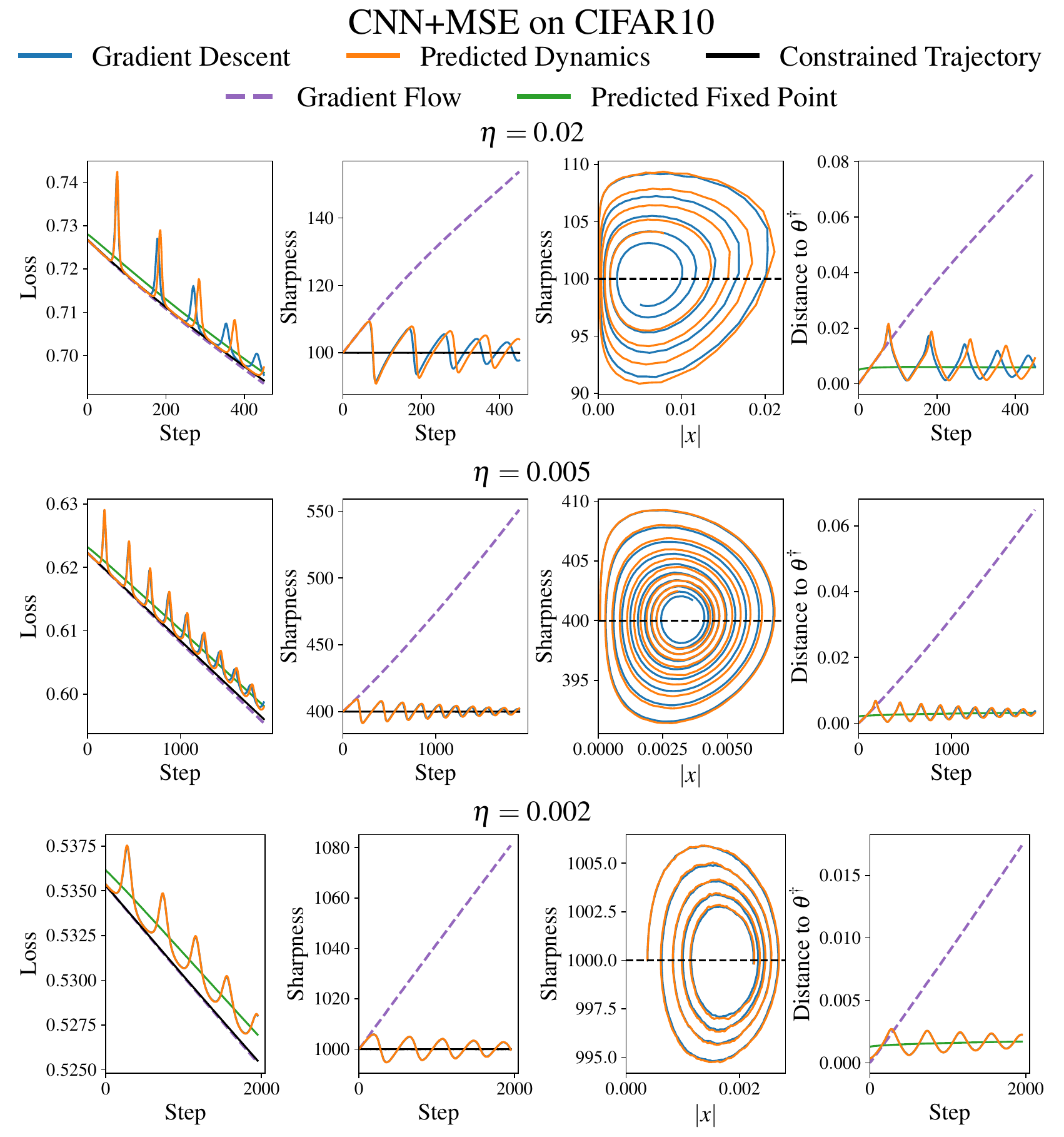}
\newpage
\includegraphics[width=\linewidth]{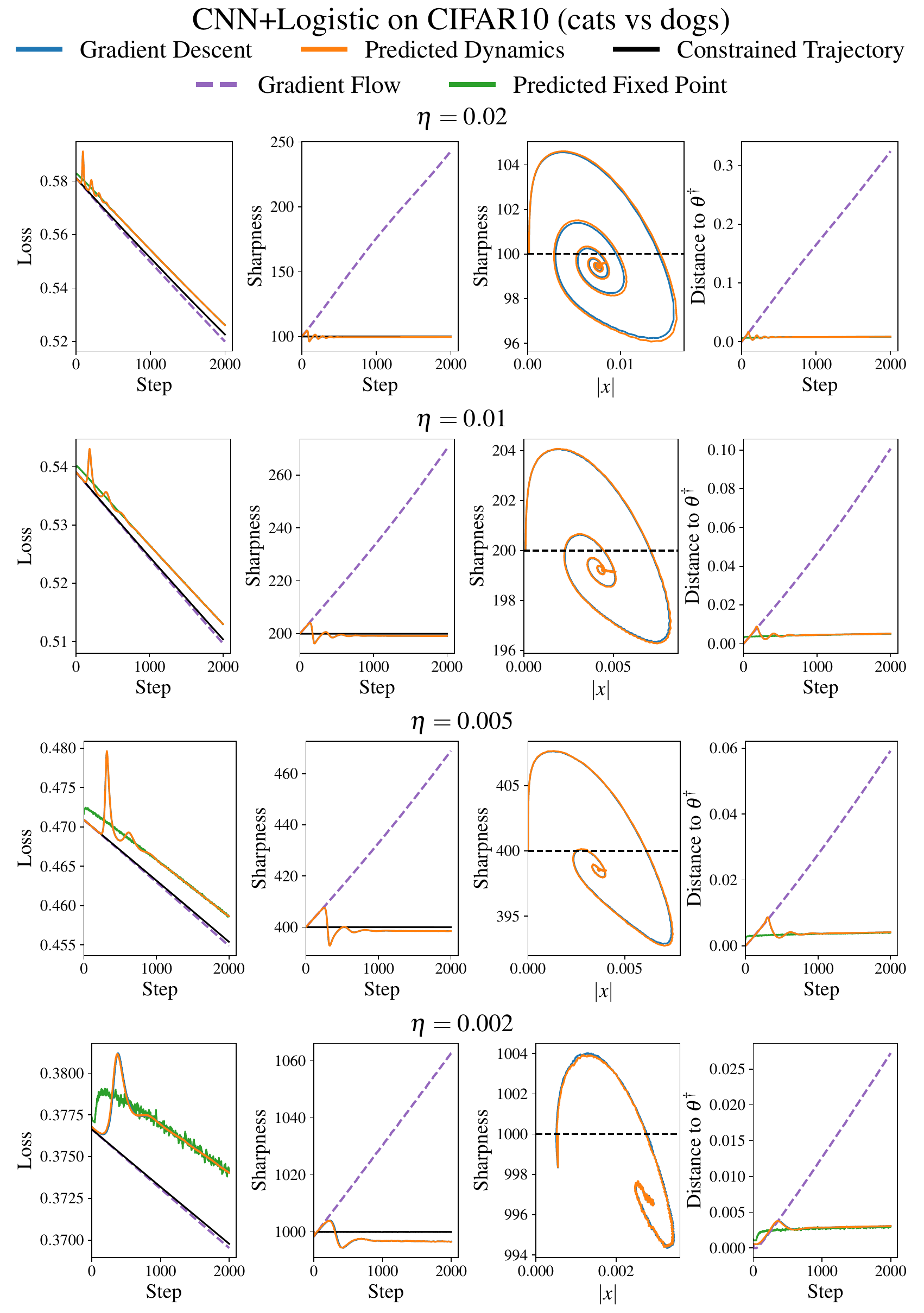}
\newpage
\includegraphics[width=\linewidth]{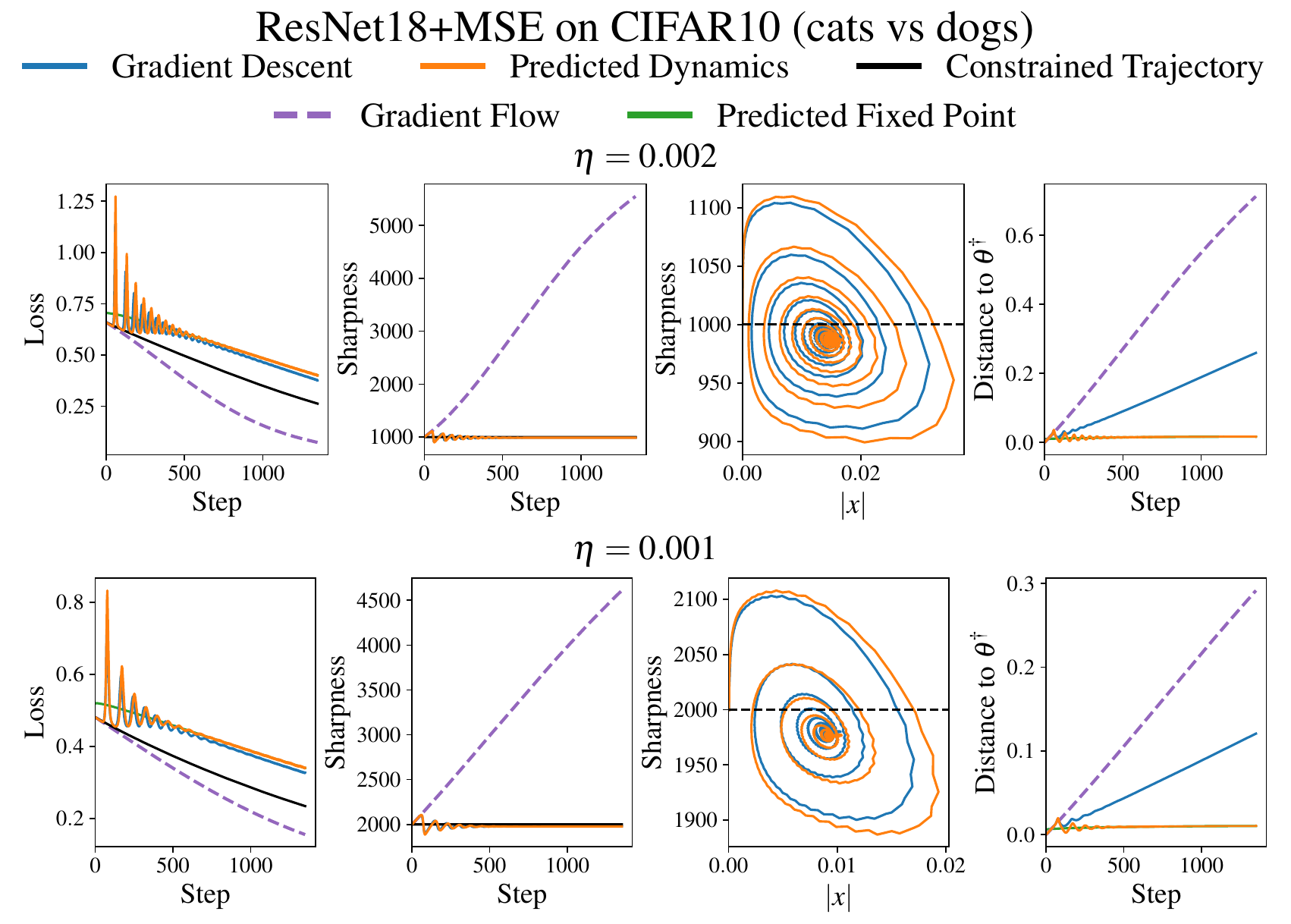}
\includegraphics[width=\linewidth]{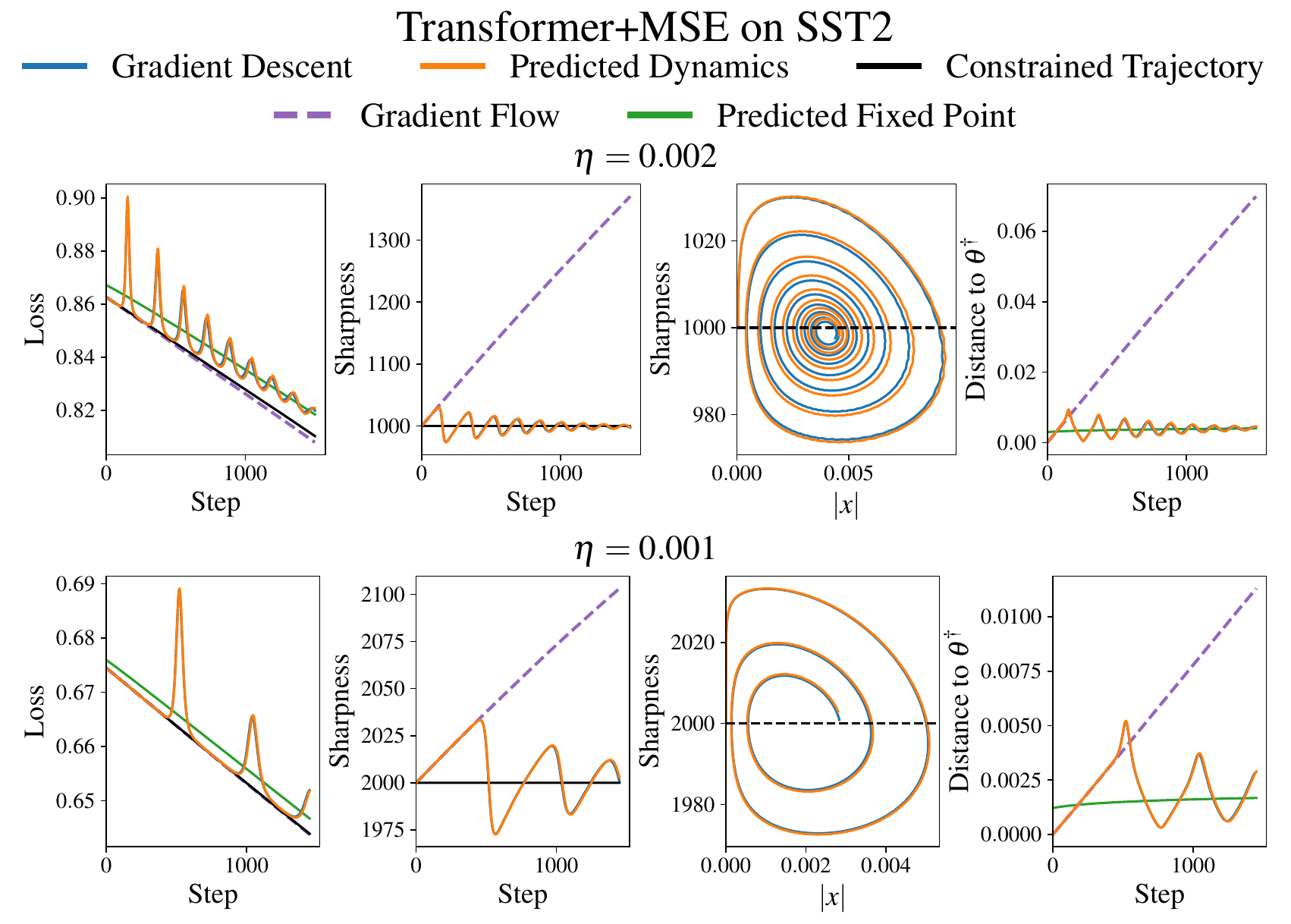}
\newpage
\includegraphics[width=\linewidth]{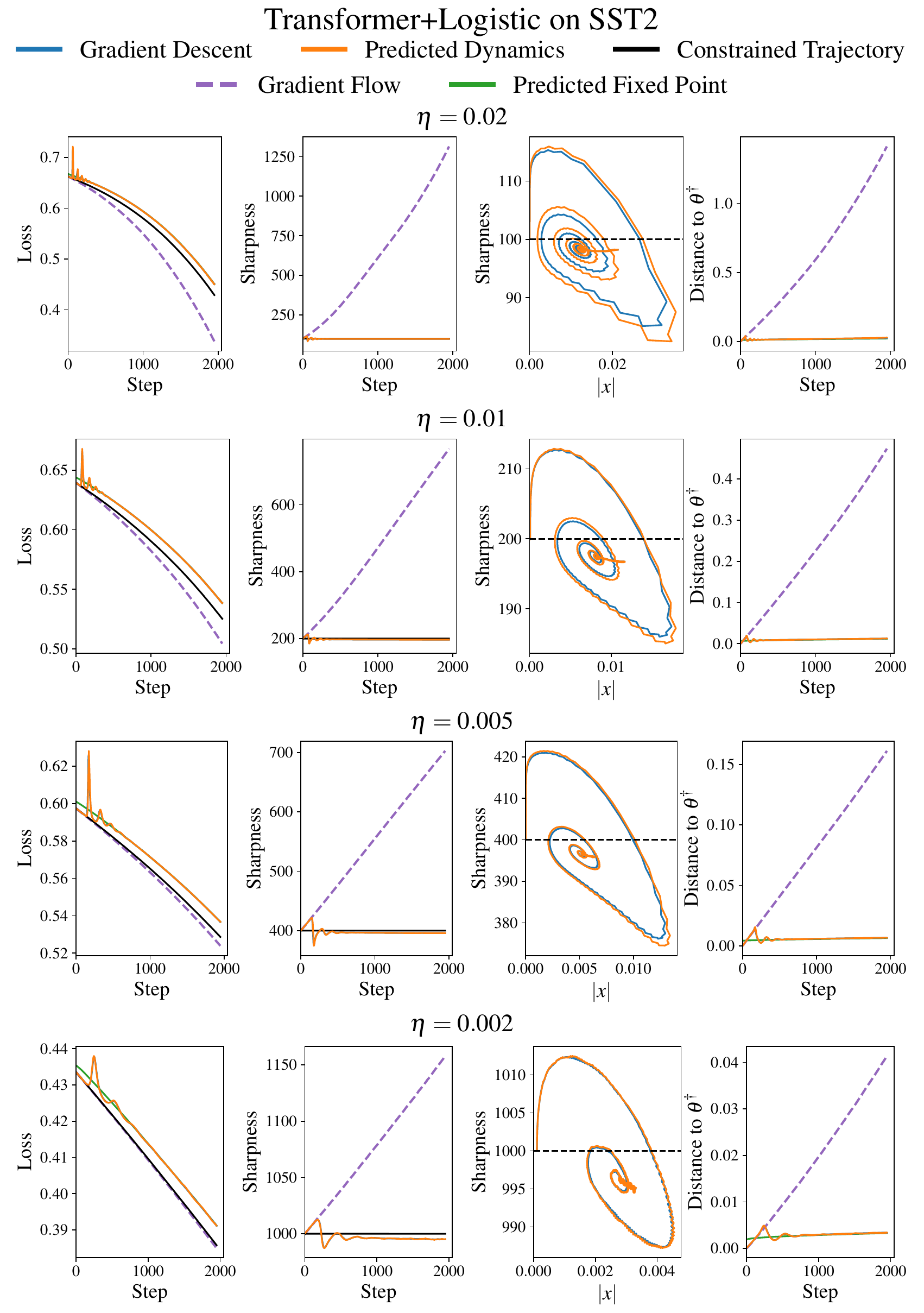}
\end{center}
\newpage

\section{Scale Invariant Losses}\label{sec:scale_invariant}

\begin{lemma}\label{lem:scale_invariant_assumptions}
    Let $f$ be a scale invariant loss function, i.e. $f(\theta) = f(c \theta)$. Let $L(\theta) = f(\theta) + \frac{\lambda}{2} \norm{\theta}^2$. Then for any local minimizer $\theta$ of $f(\theta)$ such that $S(\theta) = 2/\eta$,
    \begin{itemize}
        \item $\nabla L(\theta) \perp u(\theta)$
        \item $\rho_4 = O(\eta \rho_3^2)$
        \item $\alpha(\theta) > 0$
        \item $\frac{\alpha(\theta)}{\norm{\nabla L(\theta)}\norm{\nabla S(\theta)}} = \Theta(1)$
        \item $\norm{\nabla S(\theta)} = \Theta(\rho_3)$
    \end{itemize}
\end{lemma}

Our primary result is that gradient descent solves the constrained problem $\min_\theta L(\theta)$ such that $S(\theta) \le 2/\eta$. Let $S_f(\theta)$ denote the largest eigenvalue of $\nabla^2 f(\theta)$. To prove equivalence to the sharpness reduction, we will need the following lemma from \citep{Lyu2022Normalization} which follows from the scale invariance of the $f$:
\begin{align*}
    S_f(\theta) = \frac{1}{\|\theta\|^2} S_f(\theta/\|\theta\|).
\end{align*}
Let $\overline \theta := \frac{\theta}{\|\theta\|}$. Then we have the following equality between minimization problems:
\begin{align*}
    &\min_\theta L(\theta) \qq{such that} S(\theta) \le 2/\eta \\
    &\iff \min_\theta f(\overline \theta) + \lambda \frac{\norm{\theta}^2}{2} \qq{such that} S_f(\theta) \le 2/\eta - \lambda  \\
    &\iff \min_{\overline \theta,\|\theta\|} f(\overline \theta) + \lambda \frac{\norm{\theta}^2}{2} \qq{such that} \frac{1}{\|\theta\|^2}S_f(\overline \theta) \le \frac{2-\eta\lambda}{\eta} \\
    &\iff \min_{\overline \theta} f(\overline\theta) + \frac{\eta \lambda}{2-\eta \lambda} S_f(\overline \theta)
\end{align*}
where the last line follows from the scale-invariance of the loss function. In particular if $\eta \lambda$ is sufficiently small and the dynamics are initialized near a global minimizer of the loss, this will converge to the solution of the constrained problem:
\begin{align*}
    \min_{\|\overline \theta\| = 1} S_f(\overline \theta) \qq{such that} f(\overline \theta) = 0.
\end{align*}

\subsection{Scale Invariant Lemmas}

Let $\theta$ denote an arbitrary parameter and let $\overline \theta = \theta/\|\theta\|$. Throughout this section, let $f$ be a scale invariant function with non-vanishing Hessian.

\begin{lemma}
    \begin{align*}
        \nabla f(\theta) &= \frac{P_\theta^\perp \nabla f(\overline \theta)}{\norm{\theta}} \\
        \nabla^2 f(\theta) &= \frac{P_\theta^\perp \nabla^2 f(\overline \theta) P_\theta^\perp - P_\theta^\perp \nabla f(\overline \theta) \overline \theta^T - \overline \theta (P_\theta^\perp \nabla f(\overline \theta))^T}{\norm{\theta}^2}
    \end{align*}
\end{lemma}
\begin{proof}
    We start with the equality:
    \begin{align*}
        f(\theta) = f(\overline \theta).
    \end{align*}
    Differentiating with respect to $\theta$ and using that $\nabla_\theta \overline\theta = \frac{P_{\overline \theta}^\perp}{\norm{\theta}}$ gives,
    \begin{align*}
        \nabla f(\theta) = \frac{P_\theta^\perp \nabla f(\overline \theta)}{\norm{\theta}}.
    \end{align*}
    Differentiating this again gives:
    \begin{align*}
        \nabla^2 f(\theta) = \frac{P_\theta^\perp \nabla^2 f(\overline \theta) P_\theta^\perp - P_\theta^\perp \nabla f(\overline \theta) \overline \theta^T - \overline \theta (P_\theta^\perp \nabla f(\overline \theta))^T}{\norm{\theta}^2}.
    \end{align*}
\end{proof}

A few corollaries immediately follow:
\begin{corollary}\label{cor:si:hessian_at_critical_point}
    For any critical point $\theta$ of $f$,
    \begin{align*}
        \nabla^2 f(\theta) = \frac{P_\theta^\perp \nabla^2 f(\overline \theta) P_\theta^\perp}{\norm{\theta}^2}.
    \end{align*}
\end{corollary}
\begin{corollary}\label{cor:si:u_perp_dL}
    For any critical point $\theta$ of $f$,
    \begin{align*}
        u(\theta) \perp \nabla L(\theta)
    \end{align*}
\end{corollary}
\begin{proof}
    Note that from \Cref{cor:si:hessian_at_critical_point}, the top eigenvector of $\nabla^2 f$ is perpendicular to $\theta$. In addition,
    \begin{align*}
        \nabla^2 L(\theta) = \nabla^2 f(\theta) + \lambda I
    \end{align*}
    so this is also the top eigenvector of $\nabla^2 L(\theta)$, i.e. $u(\theta)$. Finally,
    \begin{align*}
        \nabla L(\theta) = \nabla f(\theta) + \lambda \theta = \lambda \theta
    \end{align*}
    which is parallel to $\theta$ and concludes the proof.
\end{proof}

\begin{lemma}
    \begin{align*}
        \nabla S(\theta) = \frac{P_{\overline \theta}^\perp \nabla S(\overline \theta) - (S(\overline\theta) - \lambda)\overline\theta}{\norm{\theta}^3}
    \end{align*}
\end{lemma}
\begin{proof}
    Let $S_f(\theta)$ denote the largest eigenvalue of $\nabla^2 f(\theta)$. Then by scale invariance, $\nabla^2 f(\theta) = \nabla^2 f(\overline \theta)/\norm{\theta}^2$. This implies that
    \begin{align*}
        S_f(\theta) = \frac{S_f(\overline \theta)}{\norm{\theta}^2}.
    \end{align*}
    Differentiating this gives:
    \begin{align*}
        \nabla S_f(\theta) = \frac{P_{\overline \theta}^\perp \nabla S_f(\overline \theta) - S_f(\overline \theta) \overline \theta}{\norm{\theta}^3}.
    \end{align*}
    Finally, we have from $\nabla^2 L(\theta) = \nabla^2 f(\theta) + \lambda I$ that $S(\theta) = S_f(\theta) + \lambda$ so
    \begin{align*}
        \nabla S(\theta) = \frac{P_{\overline \theta}^\perp \nabla S(\overline \theta) - (S(\overline\theta) - \lambda)\overline\theta}{\norm{\theta}^3}
    \end{align*}
\end{proof}

\begin{lemma}\label{lem:si:rho3_rho4}
    Let
    \begin{align*}
       \rho_3 = \sup_{S(\theta) \le 4/\eta} \norm{\nabla^3 L(\theta)}_{op} \qand \rho_4 = \sup_{S(\theta) \le 4/\eta} \norm{\nabla^4 L(\theta)}_{op}.
    \end{align*}
    Then
    \begin{align*}
        \rho_3 = \Theta(\eta^{-3/2}) \qand \rho_4 = O(\eta^{-2}).
    \end{align*}
    In particular, $\rho_4 = O(\eta \rho_3^2)$.
\end{lemma}
\begin{proof}
    Note that $S(\theta) < 4/\eta$ implies that
    \begin{align*}
        4/\eta = \frac{S_f(\overline\theta)}{\norm{\theta}^2} + \lambda \implies \norm{\theta} = \Theta{\eta^{-1/2}}.
    \end{align*}
    Therefore,
    \begin{align*}
        \nabla^3 L(\theta) = \frac{\nabla^3 L(\overline \theta)}{\norm{\theta}^3} = O(\theta^{-3/2})
    \end{align*}
    and
    \begin{align*}
        \nabla^4 L(\theta) = \frac{\nabla^4 L(\overline \theta)}{\norm{\theta}^4} = O(\theta^{-2})
    \end{align*}
    by compactness of $S^{d-1}$. In addition, note that $\norm{\nabla^3 L(u,u)} = \norm{\nabla S(\theta)} \ge \frac{S(\overline \theta) - \lambda}{\norm{\theta}^3} = \Theta(\eta^{-3/2})$ so $\rho_3 = \Theta(\eta^{-3/2})$.
\end{proof}

\begin{lemma}\label{lem:si:alpha_positive}
    At any second order stationary point $\theta$ of $f$,
    \begin{align*}
        \alpha(\theta) = \frac{\lambda(S(\overline \theta) - \lambda)}{\norm{\theta}^2} > 0.
    \end{align*}
\end{lemma}
\begin{proof}
    \begin{align*}
        \alpha(\theta) = -\nabla L(\theta) \cdot \nabla S(\theta) = -\lambda \theta \cdot \qty[-\frac{(S(\overline \theta) - \lambda)\overline \theta}{\norm{\theta}^3}] = \frac{\lambda(S(\overline \theta) - \lambda)}{\norm{\theta}^2}.
    \end{align*}
\end{proof}

\begin{lemma}\label{lem:si:alpha_ratio}
    At any second order stationary point $\theta$ of $f$,
    \begin{align*}
        \frac{\alpha(\theta)}{\norm{\nabla L(\theta)}\norm{\nabla S(\theta)}} = \frac{\lambda(S(\overline \theta) - \lambda)}{\norm{\theta}^2} = \Theta(1).
    \end{align*}
\end{lemma}
\begin{proof}
    \begin{align*}
        \frac{\alpha(\theta)}{\norm{\nabla L(\theta)}\norm{\nabla S(\theta)}}
        &= \frac{\lambda(S(\overline \theta) - \lambda)}{\norm{\theta}^2} \cdot \frac{1}{\lambda \norm{\theta}} \cdot \frac{\norm{\theta}^3}{\sqrt{\norm{P_{\overline\theta}^\perp \nabla S(\overline\theta)}^2 + (S(\overline\theta)-\lambda)^2}} \\
        &= \frac{1}{\sqrt{1 + \norm{\frac{P_{\overline \theta}^\perp \nabla S(\overline\theta)}{S(\overline\theta)-\lambda}}^2}} \\
        &= \Theta(1)
    \end{align*}
    where the last step follows from compactness of $S^{d-1}$ and the fact that $\nabla^2 f$ is non-vanishing.
\end{proof}

\begin{proof}[Proof of \Cref{lem:scale_invariant_assumptions}]
    The lemma is simply a restatement of \Cref{cor:si:u_perp_dL}, \Cref{lem:si:alpha_positive}, \Cref{lem:si:rho3_rho4}, and \Cref{lem:si:alpha_ratio}.
\end{proof}

\section{Proofs}\label{sec:proofs}

\subsection{Properties of the Constrained Trajectory}
We next prove several nice properties of the constrained trajectory. Before, we require the following auxiliary lemma, which shows that several quantities are Lipschitz in a neighborhood around the constrained trajectory:

\begin{definition}[Lipschitz Sets]
    $\mathcal{S}_t := B(\thetad_t,\frac{2-c}{4 \eta \rho_3})$ where $c$ is the constant in \Cref{assumption:eigval_gap} and $B(x,r)$ denotes the ball of radius $r$ centered at $x$.
\end{definition}

\begin{lemma}[Lipschitz Properties]\label{lem:lipschitz}\leavevmode
    \begin{enumerate}
        \item $\theta \to \nabla L(\theta)$ is $O(\eta^{-1})$-Lipschitz in each set $\mathcal{S}_t$.
        
        \item $\theta \to \nabla^2 L(\theta)$ is $\rho_3$-Lipschitz with respect to $\norm{\cdot}_2$.
        
        \item $\theta \to \lambda_i(\nabla^2 L(\theta))$ is $\rho_3$-Lipschitz.
        
        \item $\theta \to u(\theta)$ is $O(\eta \rho_3)$-Lipschitz in each set $\mathcal{S}_t$.
        
        \item $\theta \to \nabla S(\theta)$ is $O(\eta \rho_3^2)$-Lipschitz in each set $\mathcal{S}_t$.
    \end{enumerate}
\end{lemma}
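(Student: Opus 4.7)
The plan is to treat each item as a standard perturbation bound: item 2 is immediate from the definition of $\rho_3$; items 1 and 3 then follow mechanically; and items 4 and 5 reduce to eigenvector perturbation theory, where the eigengap of order $1/\eta$ guaranteed by $S(\thetad_t) = 2/\eta$ and \Cref{assumption:eigval_gap} is the crucial ingredient. Throughout, $\mathcal{S}_t$ should be taken to be a sufficiently small neighborhood of $\thetad_t$ on which the eigengap and the bound $\|\nabla^2 L\|_{\mathrm{op}} = O(1/\eta)$ both persist (up to constants); since $\nabla^2 L$ is $\rho_3$-Lipschitz (item 2) and $\rho_3 = O(1/\eta)$ would ordinarily be too loose, we instead use the fact that $\mathcal{S}_t$ only needs to contain points within $O(\delta)$ of $\thetad_t$, so the eigenvalues shift by at most $O(\rho_3 \delta) = o(1/\eta)$ and the gap is preserved.

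For item 2, $\theta \mapsto \nabla^2 L(\theta)$ has derivative $\nabla^3 L(\theta)$, and by the definition of $\rho_3$ we have $\|\nabla^3 L(\theta)\|_{\mathrm{op}} \le \rho_3$, giving Lipschitzness with constant $\rho_3$ via the fundamental theorem of calculus. For item 3, Weyl's inequality gives $|\lambda_i(\nabla^2 L(\theta)) - \lambda_i(\nabla^2 L(\theta'))| \le \|\nabla^2 L(\theta) - \nabla^2 L(\theta')\|_{\mathrm{op}}$, so the bound from item 2 transfers directly. For item 1, the Lipschitz constant of $\nabla L$ is $\sup_{\theta \in \mathcal{S}_t} \|\nabla^2 L(\theta)\|_{\mathrm{op}}$, which on $\mathcal{S}_t$ is $O(1/\eta)$ by the argument above.

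For item 4, I would invoke an eigenvector perturbation result (Davis--Kahan, or equivalently the implicit function theorem applied to the eigenproblem). Writing $u(\theta)$ as the top unit eigenvector, its derivative in the direction $h$ is
\begin{align*}
    Du(\theta)[h] = -\bigl(\nabla^2 L(\theta) - S(\theta) I\bigr)^\dagger \bigl(\nabla^3 L(\theta)[h]\bigr) u(\theta),
\end{align*}
where $\dagger$ denotes the pseudoinverse restricted to $u(\theta)^\perp$. On $\mathcal{S}_t$ the operator $\bigl(\nabla^2 L(\theta) - S(\theta) I\bigr)$ restricted to $u(\theta)^\perp$ has smallest singular value at least the eigengap $\Theta(1/\eta)$, so $\|Du(\theta)\|_{\mathrm{op}} \le O(\eta) \cdot \rho_3 \cdot 1 = O(\eta \rho_3)$, which gives the claim. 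The main obstacle in this step is ensuring the eigengap holds uniformly on $\mathcal{S}_t$, which is why the neighborhood must be chosen small enough that items 2 and 3 propagate the gap from $\thetad_t$.

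Finally, for item 5, apply the chain/product rule to $\nabla S(\theta) = \nabla^3 L(\theta)(u(\theta),u(\theta))$. The three resulting terms are bounded by (a) the Lipschitz constant of $\nabla^3 L$, which is $\rho_4 = O(\eta \rho_3^2)$ by \Cref{assume:rho4}, and (b) $\|\nabla^3 L\|_{\mathrm{op}} \cdot \|Du\|_{\mathrm{op}} \le \rho_3 \cdot O(\eta \rho_3) = O(\eta \rho_3^2)$ for each of the two $u$ factors. Summing, the Lipschitz constant of $\nabla S$ on $\mathcal{S}_t$ is $O(\eta \rho_3^2)$, completing the proof.
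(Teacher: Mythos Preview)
Your proposal is correct and follows essentially the same approach as the paper: item~2 from the definition of $\rho_3$, item~3 from Weyl's inequality, item~4 from the eigenvector derivative formula combined with the $\Theta(1/\eta)$ eigengap, and item~5 from the product rule on $\nabla S(\theta)=\nabla^3 L(\theta)(u(\theta),u(\theta))$ together with \Cref{assume:rho4}. The paper's proof actually omits item~1 entirely, so your inclusion of the $\sup_{\theta\in\mathcal{S}_t}\|\nabla^2 L(\theta)\|_{\mathrm{op}}=O(1/\eta)$ argument is a small addition; your handling of what $\mathcal{S}_t$ must be is also slightly more explicit than the paper's, which implicitly takes $\mathcal{S}_t$ to be a ball of radius $\frac{2-c}{4\eta\rho_3}$ around $\thetad_t$ so that the eigengap degrades by at most a factor of two.
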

\begin{proof}
    The Lipschitzness of $\nabla^2 L(\theta)$ follows immediately from the bound $\norm{\nabla^3 L(\theta)}_{op} \le \rho_3$. Weil's inequality then immediately implies the desired bound on the Lipschitz constant of the eigenvalues of $\nabla^2 L(\theta)$. Therefore for any $t$, we have for all $\theta \in \mathcal{S}_t$:
    \begin{align*}
        \lambda_1(\nabla^2 L(\theta)) - \lambda_2(\nabla^2 L(\theta)) \ge \lambda_1(\nabla^2 L(\theta)) - \lambda_2(\nabla^2 L(\theta)) - 2 \rho_3 \frac{2-c}{4\eta\rho_3} \ge \frac{2-c}{2\eta}.
    \end{align*}
    Next, from the derivative of eigenvector formula:
    \begin{align*}
        \norm{\nabla u(\theta)}_2
        &= \norm{(\lambda_1(\nabla^2 L(\theta)) I - \nabla^2 L(\theta))^\dagger \nabla^3 L(\theta)(u(\theta))}_2 \\
        &\le \frac{\rho_3}{\lambda_1(\nabla^2 L(\theta)) - \lambda_2(\nabla^2 L(\theta))} \\
        &\le \frac{2\eta\rho_3}{2-c} \\
        &= O(\eta \rho_3)
    \end{align*}
    which implies the bound on the Lipschitz constant of $u$ restricted to $\mathcal{S}_t$. Finally, because $\nabla S(\theta) = \nabla^3 L(\theta)(u(\theta),u(\theta))$,
    \begin{align*}
        \norm{\nabla^2 S(\theta)}_2 \le \norm{\nabla^4 L(\theta)}_{op} + 2\norm{\nabla^3 L(\theta)}_{op}\norm{\nabla u(\theta)}_2 \le O(\rho_4 + \eta \rho_3^2) \le O(\eta \rho_3^2)
    \end{align*}
    where the second to last inequality follows from the bound on $\norm{\nabla u(\theta)}_2$ restricted to $\mathcal{S}_t$ and the last inequality follows from \Cref{assume:rho4}.
\end{proof}

\begin{lemma}[First-order approximation of the constrained trajectory update $\{\thetad_t\}$]\label{lem:dagger_step}
    For all $t \le \mathscr{T}$,
    \begin{align*}
        \thetad_{t+1} = \thetad_t - \eta P_{u_t,\nabla S_t}^\perp \nabla L_t + O\qty(\epsilon^2\cdot \eta\norm{\nabla L_t}) \qand S_t = 2/\eta.
    \end{align*}
\end{lemma}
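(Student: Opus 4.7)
The plan is to establish both claims simultaneously by induction on $t$. The time-shift convention gives $S(\thetad_0) = 2/\eta$, so the base case holds.

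For the inductive step, suppose $\thetad_t \in \mathcal{M}$ with $S(\thetad_t) = 2/\eta$ and $\nabla L_t \cdot u_t = 0$. First I would check that the sharpness constraint stays active after the unconstrained step $\tilde\theta_t := \thetad_t - \eta\nabla L_t$. Taylor expanding $S$ and using the Lipschitz bound $\norm{\nabla^2 S}_{op} = O(\eta\rho_3^2)$ from \Cref{lem:lipschitz} gives $S(\tilde\theta_t) = 2/\eta + \eta\alpha_t + O(\eta^3\rho_3^2\norm{\nabla L_t}^2) > 2/\eta$ by \Cref{assumption:progressive_sharpening}. Hence $\proj_\mathcal{M}(\tilde\theta_t)$ lies on the boundary $S = 2/\eta$, which preserves the inductive claim $S(\thetad_{t+1}) = 2/\eta$.

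Next I would linearize $\mathcal{M}$ at $\thetad_t$. Its two active constraints have normals $\nabla S_t$ (for $S = 2/\eta$) and $\nabla_\theta(\nabla L \cdot u)|_{\thetad_t} = \nabla^2 L_t\, u_t + J_{u_t}^T\nabla L_t = (2/\eta)\,u_t + O(\eta\rho_3\norm{\nabla L_t})$, where the second term uses the bound $\norm{\nabla u} = O(\eta\rho_3)$ from \Cref{lem:lipschitz}. To leading order the normal space is $\spn(u_t, \nabla S_t)$, so the linearized projection coincides with $P_{u_t,\nabla S_t}^\perp$. A direct computation using $\nabla L_t \cdot u_t = 0$ gives $P_{u_t,\nabla S_t}^\perp \nabla L_t = \nabla L_t + (\alpha_t/\beta_t)\nabla S_t$, which is exactly the unconstrained step $-\eta\nabla L_t$ corrected by $-(\eta\alpha_t/\beta_t)\nabla S_t$ so as to cancel the sharpness increase $\eta\alpha_t$ computed above.

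The remaining task is the error bound. I would treat the exact projection onto $\mathcal{M}$ as a Newton iteration starting from $\tilde\theta_t$ and bound the residual after the first step. Three contributions enter: (i) the quadratic Taylor remainders for the two constraint functions, of size $(\eta\norm{\nabla L_t})^2\norm{\nabla^2 S}_{op}$ and its analog for $\nabla L \cdot u$; (ii) the first-order mismatch between the true normal of $\nabla L \cdot u = 0$ and $u_t$, contributing $O(\eta\rho_3\norm{\nabla L_t})$ in directions perpendicular to $u_t$; (iii) the contraction of subsequent Newton steps, which is strictly lower order. A naive bound on (i) yields $O(\eta^3\rho_3^2\norm{\nabla L_t}^2)$, and the missing factor of $\epsilon/\rho_3$ needed to match $\epsilon^2\eta\norm{\nabla L_t} = \Theta(\eta^3\rho_3\norm{\nabla L_t}^2)$ (using $\alpha_t = \Theta(\norm{\nabla L_t}\rho_3)$ from \Cref{assume:prog_general}) is recovered through \Cref{assume:non_worst}: the pieces $\nabla^4 L(u,u)$ and $\nabla^3 L(u,\nabla u)$ that assemble $\nabla^2 S$ only enter the residual through tensor contractions against tangent vectors orthogonal to $u_t$, where the non-worst-case bounds suppress them by a factor of $\epsilon$.

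The main obstacle is item (iii) combined with the bookkeeping for (i): setting up the Newton iteration for the nonlinear projection carefully enough to prove it contracts into $\mathcal{M}$, and simultaneously tracking each residual through its tensor contractions to verify that every otherwise-dangerous term lives in a direction where \Cref{assume:non_worst} applies and thus supplies the missing $\epsilon$.
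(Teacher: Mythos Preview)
Your overall structure---induction on $t$, linearizing $\mathcal{M}$ at $\thetad_t$, identifying the normal space as $\spn(u_t,\nabla S_t)$---matches the paper's approach, which writes out the KKT conditions for $\proj_\mathcal{M}$ and solves for the multipliers (this is the same linearization in different language). Your argument that the sharpness constraint stays active parallels the paper's complementary-slackness step.

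The gap is in your error analysis of item (i). The quantity $(\eta\norm{\nabla L_t})^2\norm{\nabla^2 S}_{op} = O(\eta^3\rho_3^2\norm{\nabla L_t}^2)$ is a residual in the \emph{value} of the constraint $S$, not a position error. Converting a value residual to a position correction requires dividing by the constraint gradient norm $\norm{\nabla S_t} = \Theta(\rho_3)$ (and analogously by $2/\eta$ for the $\nabla L\cdot u$ constraint), which already yields $O(\eta^3\rho_3\norm{\nabla L_t}^2) = O(\epsilon^2\eta\norm{\nabla L_t})$ with no missing factor. The paper makes this scaling explicit through the change of variables $(x,y)\to(x',y')$ in the KKT multipliers, which encodes precisely this division by $S_t = 2/\eta$ and by $\norm{\nabla S_t^\perp}=\Theta(\rho_3)$. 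Consequently \Cref{assume:non_worst} is not needed here and the paper's proof never invokes it; note also that the assumption controls $\nabla^3 L_t(v,w)$ only for $v,w\perp u_t$, so it would not apply to the $\nabla^4 L(u,u)$ or $\nabla^3 L(u,\nabla u)$ pieces of $\nabla^2 S$ anyway. Once this is corrected, your item (iii) ceases to be an obstacle: the second Newton step is already of the target order $O(\epsilon^2\eta\norm{\nabla L_t})$, so contraction is automatic.
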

\begin{proof}
    We will prove by induction that $S_t = 2/\eta$ for all $t$. The base case follows from the definitions of $\theta_0,\thetad_0$. Next, assume $S(\thetad_t) = 0$ for some $t \ge 0$. Let $\theta' = \thetad_t - \eta \nabla L_t$. Then because $\thetad_t \in \mathscr{M}$ we have $\norm{\thetad_{t+1}-\theta'} \le \norm{\thetad_{t}-\theta'} = \eta \norm{\nabla L_t}$. Then because $\thetad_{t+1} = \proj_{\mathcal{M}}(\theta')$, the KKT conditions for this minimization problem imply that there exist $x,y$ with $y \ge 0$ such that
    \begin{align*}
        \thetad_{t+1}
        &= \thetad_t - \eta \nabla L_t - x \nabla_\theta \qty[\nabla L(\theta) \cdot u(\theta)]\eval_{\theta = \thetad_{t+1}} - y \nabla S_{t+1} \\
        &= \thetad_t - \eta \nabla L_t - x \qty[S_{t+1}u_{t+1} + \nabla u_{t+1}^T \nabla L_{t+1}] - y \nabla S_{t+1} \\
        &= \thetad_t - \eta \nabla L_t - x \qty[S_{t+1} u_{t+1} + O(\eta \rho_3 \norm{\nabla L_{t+1}})] - y\nabla S_{t+1} \\
        &= \thetad_t - \eta \nabla L_t - x \qty[S_t u_t + O(\eta \rho_3 \norm{\nabla L_{t}})] - y\qty[\nabla S_t + O(\eta^2 \rho_3^2 \norm{\nabla L_t})] \\
        &= \thetad_t - \eta \nabla L_t - x S_t u_t - y \nabla S_t + O\qty((\abs{x} \eta \rho_3 + \abs{y} \eta^2 \rho_3^2) \norm{\nabla L_{t}}).
    \end{align*}
    Next, note that we can decompose $\nabla S_t = u_t (\nabla S_t \cdot u_t) + \nabla S_t^\perp$:
    \begin{align*}
        \thetad_{t+1}
        &= \thetad_t - \eta \nabla L_t - \qty[x S_t + y (\nabla S_t \cdot u_t)] u_t - y \nabla S_t^\perp + O\qty((\abs{x} \eta \rho_3 + \abs{y}\eta^2 \rho_3^2) \norm{\nabla L_{t}}).
    \end{align*}
    Let $s_t = \frac{\nabla S_t^\perp}{\norm{\nabla S_t^\perp}}$. We can now perform the change of variables
    \begin{align*}
        (x',y') = \qty(x S_t + y(\nabla S_t \cdot u_t),y\norm{\nabla S_t^\perp}) \qc (x,y) = \qty(\frac{x'-y'\frac{\nabla S_t \cdot u_t}{\norm{\nabla S_t^\perp}}}{S_t},\frac{y'}{\norm{\nabla S_t^\perp}})
    \end{align*}
    to get 
    \begin{align*}
        \thetad_{t+1} = \thetad_t - \eta \nabla L_t - x' u_t - y' s_t + O\qty(\eta^2 \rho_3\norm{\nabla L} (\abs{x'} + \abs{y'})).
    \end{align*}
    Note that
    \begin{align}
        O(\eta^2 \rho_3 \norm{\nabla L}(\abs{x} + \abs{y})) \le \frac{\sqrt{x^2 + y^2}}{2}
    \end{align}
    for sufficiently small $\epsilon$ so because $\norm{\thetad_{t+1} - \theta'} \le \eta \norm{\nabla L_t}$ we have
    \begin{align*}
        \frac{\sqrt{x^2 + y^2}}{2} \le \norm{\thetad_{t+1} - \theta'} \le \eta \norm{\nabla L_t}
    \end{align*}
    so $x,y = O(\eta \norm{\nabla L_t})$. Therefore,
    \begin{align*}
        \thetad_{t+1} &= \thetad_t - \eta \nabla L_t - x' u_t - y' s_t + O\qty(\eta^3 \rho_3\norm{\nabla L}^2)\\
        &= \thetad_t - \eta \nabla L_t - x' u_t - y' s_t + O\qty(\epsilon^2\cdot \eta\norm{\nabla L_t})
    \end{align*}
    Then Taylor expanding $\nabla L_{t+1}$ around $\thetad_t$ gives
    \begin{align*}
        \nabla L_{t+1} \cdot u_{t+1}
        &= \nabla L_t \cdot u_t + (\nabla L_{t+1}-\nabla L_t) \cdot u_t + \nabla L_{t+1} \cdot (u_{t+1}-u_t) \\
        &= u_t^T \nabla^2 L_t \qty[-\eta \nabla L_t - x' u_t - y' s_t + O(\epsilon^2\cdot \eta\norm{\nabla L_t}] + O\qty(\epsilon^2\cdot \norm{\nabla L_t}) \\
        &= - x' S_t + O\qty(\epsilon^2\cdot \norm{\nabla L_t})
    \end{align*}
    so $x' = O(\epsilon^2\cdot \eta\norm{\nabla L_t})$. We can also Taylor expand $S_{t+1}$ around $\thetad_t$ and use that $S_t = 2/\eta$ to get
    \begin{align*}
        S_{t+1}
        &= 2/\eta + \nabla S_t \cdot \qty[-\eta \nabla L_t - x' u_t - y' s_t + O\qty(\eta^3 \rho_3 \norm{\nabla L_t}^2)] + O\qty(\epsilon^2\cdot \rho_3\eta\norm{\nabla L_t}) \\
        &= 2/\eta + \eta \alpha_t - y' \|\nabla S_t^\perp\| + O\qty(\epsilon^2\cdot \rho_3\eta\norm{\nabla L_t}).
    \end{align*}
    Now note that for $\epsilon$ sufficiently small we have
    \begin{align*}
        O\qty(\epsilon^2\cdot \rho_3\eta\norm{\nabla L_t}) \le O\qty(\epsilon^2\cdot \eta\alpha_t) \le \eta \alpha_t.
    \end{align*}
    Therefore if $y'=0$, we would have $S_{t+1} > 2/\eta$ which contradicts $\thetad_{t+1} \in \mathcal{M}$. Therefore $y' > 0$ and therefore $y > 0$, which by complementary slackness implies $S_{t+1} = 2/\eta$. This then implies that
    \begin{align*}
        -\eta \nabla L_t \cdot \nabla S_t^\perp - y' \|\nabla S_t^\perp\| + O(\epsilon^2\cdot \rho_3\eta\norm{\nabla L_t}) = 0 \implies y' = -\eta \nabla L_t \cdot \frac{\nabla S_t^\perp}{\norm{\nabla S_t^\perp}} + O\qty(\epsilon^2\cdot \eta\norm{\nabla L_t}).
    \end{align*}
    Putting it all together gives
    \begin{align*}
        \thetad_{t+1}
        &= \thetad_t - \eta P_{\nabla S_t^\perp}^\perp \nabla L_t + O\qty(\epsilon^2\cdot \eta\norm{\nabla L_t}) \\
        &= \thetad_t - \eta P_{u_t,\nabla S_t}^\perp \nabla L_t + O\qty(\epsilon^2\cdot \eta\norm{\nabla L_t})
    \end{align*}
    where the last line follows from $u_t \cdot \nabla L_t = 0$.
\end{proof}

\begin{lemma}[Descent Lemma for $\theta^\dagger$]\label{lem:dagger_descent}
    For all $t \le \mathscr{T}$,
    \begin{align*}
        L(\thetad_{t+1}) \le L(\thetad_t) - \Omega\qty(\eta\norm{P_{u_t,\nabla S_t}^\perp \nabla L_t}^2).
    \end{align*}
\end{lemma}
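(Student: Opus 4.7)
The plan is to combine the first-order characterization of the constrained step from \Cref{lem:dagger_step} with a second-order Taylor expansion of $L$ around $\thetad_t$, and then use the eigengap \Cref{assumption:eigval_gap} to extract a descent. Define $g_t := P_{u_t,\nabla S_t}^\perp \nabla L_t$, so that \Cref{lem:dagger_step} gives $\Delta_t := \thetad_{t+1} - \thetad_t = -\eta g_t + e_t$ with $\|e_t\| = O(\epsilon^2 \eta \|\nabla L_t\|)$. Taylor expanding yields
\begin{equation*}
L(\thetad_{t+1}) = L(\thetad_t) + \nabla L_t \cdot \Delta_t + \tfrac{1}{2}\Delta_t^\top \nabla^2 L_t \Delta_t + O\qty(\rho_3 \|\Delta_t\|^3),
\end{equation*}
and since $g_t$ is the orthogonal projection of $\nabla L_t$, $\nabla L_t \cdot g_t = \|g_t\|^2$, so the first-order term equals $-\eta \|g_t\|^2 + O(\epsilon^2 \eta \|\nabla L_t\|^2)$ by Cauchy--Schwarz on the error.

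For the Hessian quadratic form, I would exploit that $\thetad_t \in \mathcal{M}$ forces $\nabla L_t \perp u_t$, hence $g_t \perp u_t$ and $u_t \cdot \Delta_t = u_t \cdot e_t = O(\epsilon^2 \eta \|\nabla L_t\|)$. Because $u_t$ is an eigenvector of $\nabla^2 L_t$ with eigenvalue $S_t = 2/\eta$, the cross term $u_t^\top \nabla^2 L_t P_{u_t}^\perp \Delta_t$ vanishes and the quadratic form splits. The $u_t$-contribution $S_t (u_t \cdot \Delta_t)^2$ is of order $\epsilon^4 \eta \|\nabla L_t\|^2$ and is absorbed, and on $u_t^\perp$ the eigengap \Cref{assumption:eigval_gap} together with the bound on $\lambda_{\min}$ from \Cref{assume:non_worst} gives $\nabla^2 L_t \preceq (c/\eta) I$, yielding $\tfrac12 \Delta_t^\top \nabla^2 L_t \Delta_t \le \tfrac{c \eta}{2}\|g_t\|^2 + O(\epsilon^2 \eta \|\nabla L_t\|^2)$. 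The cubic Taylor remainder is $O(\rho_3 \eta^3 \|\nabla L_t\|^3)$; converting via \Cref{assume:prog_general} (which yields $\alpha_t = \Theta(\rho_3 \|\nabla L_t\|)$) and the definition $\epsilon_t = \eta \sqrt{\alpha_t}$ shows this equals $O(\epsilon^2 \eta \|\nabla L_t\|^2)$.

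Assembling the pieces produces
\begin{equation*}
L(\thetad_{t+1}) \le L(\thetad_t) - \tfrac{2-c}{2}\eta \|g_t\|^2 + O(\epsilon^2 \eta \|\nabla L_t\|^2).
\end{equation*}
A final application of \Cref{assume:prog_general} gives $\|g_t\| = \Theta(\|\nabla L_t\|)$: the cosine of the angle between $\nabla L_t$ and $\nabla S_t^\perp$ is $\Theta(1)$ and in particular bounded away from $\pm 1$, so the projection $P_{u_t,\nabla S_t}^\perp$ (acting on a vector already orthogonal to $u_t$) loses only a constant fraction of the norm. Hence for $\epsilon$ small enough the error is absorbed into the main descent, producing the claimed $L(\thetad_{t+1}) \le L(\thetad_t) - \Omega(\eta \|g_t\|^2)$. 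The main obstacle is the bookkeeping required to confirm that every error term is genuinely higher order in $\epsilon$; the essential mechanism is the conversion $\rho_3 \|\nabla L_t\| = \Theta(\alpha_t)$ from \Cref{assume:prog_general} together with $\epsilon_t = \eta\sqrt{\alpha_t}$, which transforms the seemingly problematic $\rho_3 \eta^3 \|\nabla L_t\|^3$ cubic remainder into a manageable $\epsilon^2 \eta \|\nabla L_t\|^2$ correction.
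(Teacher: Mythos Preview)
Your proposal is correct and follows essentially the same approach as the paper: Taylor expand $L$ to second order around $\thetad_t$, plug in the step from \Cref{lem:dagger_step}, bound the quadratic term on $u_t^\perp$ via the eigengap $\lambda_2 \le c/\eta$, convert the cubic remainder $O(\rho_3\eta^3\|\nabla L_t\|^3)$ into $O(\epsilon^2\eta\|\nabla L_t\|^2)$ via \Cref{assume:prog_general}, and finally use $\|g_t\| = \Theta(\|\nabla L_t\|)$ to absorb the error. The only superfluous ingredient is your invocation of the $\lambda_{\min}$ bound from \Cref{assume:non_worst}, which is not needed for the upper bound $\Delta_t^\top \nabla^2 L_t \Delta_t \le (c/\eta)\|P_{u_t}^\perp\Delta_t\|^2$; the eigengap alone suffices.
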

\begin{proof}
    Taylor expanding $L(\thetad_{t+1})$ around $L(\thetad_t)$ and using \Cref{lem:dagger_step} gives
    \begin{align*}
        L(\thetad_{t+1})
        &= L(\thetad_t) + \nabla L_t \cdot (\thetad_{t+1}-\thetad_t) + \frac{1}{2}(\thetad_{t+1}-\thetad_t)^T \nabla^2 L_t (\thetad_{t+1}-\thetad_t) + O\qty(\rho_3 \norm{\thetad_{t+1}-\thetad_t}^3) \\
        &= L(\thetad_t) - \eta \norm{P_{u_t,\nabla S_t}^\perp \nabla L_t}^2 + \frac{\eta^2 \lambda_2(\nabla^2 L_t) \norm{P_{u_t,\nabla S_t}^\perp \nabla L_t}^2}{2} + O\qty(\eta^3 \rho_3 \norm{\nabla L_t}^3) \\
        &= L(\theta_t^\dagger) - \frac{\eta (2-c)}{2} \norm{P_{u_t,\nabla S_t}^\perp \nabla L_t}^2 + O\qty(\eta^3 \rho_3 \norm{\nabla L_t}^3).
    \end{align*}
    Next, note that because $\gamma_t = \Theta(1)$ we have $\|\nabla L_t\| = O(\norm{P_{u_t,\nabla S_t}^\perp \nabla L_t})$.Therefore for $\epsilon$ sufficiently small,
    \begin{align*}
        O\qty(\eta^3 \rho_3 \norm{\nabla L_t}^3) = O(\epsilon^2\cdot \eta\norm{\nabla L_t}^2) \le \frac{\eta (2-c)}{4} \norm{P_{u_t,\nabla S_t}^\perp \nabla L_t}^2.
    \end{align*}
    Therefore,
    \begin{align*}
        L(\thetad_{t+1})
        &\le L(\theta_t^\dagger) - \frac{\eta (2-c)}{4} \norm{P_{u_t,\nabla S_t}^\perp \nabla L_t}^2 = L(\theta_t^\dagger) - \Omega(\eta\norm{P_{u_t,\nabla S_t}^\perp \nabla L_t}^2)
    \end{align*}
    which completes the proof.
\end{proof}

\begin{corollary}
    Let $L^\star = \min_\theta L(\theta)$. Then there exists $t \le \mathscr{T}$ such that
    \begin{align*}
        \norm{P_{u_t,\nabla S_t}^\perp \nabla L_t}^2 \le O\qty(\frac{L(\thetad_0) - L^\star}{\eta \mathscr{T}}).
    \end{align*}
\end{corollary}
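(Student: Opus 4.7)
The plan is to apply the preceding Descent Lemma for $\theta^\dagger$ and run a standard telescoping--averaging argument, which is the classical way to convert a per-step descent inequality into an existence statement for a small projected gradient.

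First, I would sum the inequality
\begin{align*}
    L(\thetad_{t+1}) \le L(\thetad_t) - \Omega\qty(\eta\norm{P_{u_t,\nabla S_t}^\perp \nabla L_t}^2)
\end{align*}
over $t = 0,1,\ldots,\mathscr{T}-1$. The left-hand side telescopes, and since $L^\star \le L(\thetad_\mathscr{T})$ by definition of $L^\star$, this yields
\begin{align*}
    \Omega\qty(\eta) \sum_{t=0}^{\mathscr{T}-1} \norm{P_{u_t,\nabla S_t}^\perp \nabla L_t}^2 \le L(\thetad_0) - L(\thetad_\mathscr{T}) \le L(\thetad_0) - L^\star.
\end{align*}

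Next, I would use the trivial fact that the minimum of a finite sequence is bounded above by its average. Dividing the displayed sum by $\mathscr{T}$ gives
\begin{align*}
    \min_{t < \mathscr{T}} \norm{P_{u_t,\nabla S_t}^\perp \nabla L_t}^2 \le \frac{1}{\mathscr{T}}\sum_{t=0}^{\mathscr{T}-1} \norm{P_{u_t,\nabla S_t}^\perp \nabla L_t}^2 \le O\qty(\frac{L(\thetad_0) - L^\star}{\eta \mathscr{T}}),
\end{align*}
which produces the required index $t \le \mathscr{T}$.

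I do not anticipate any real obstacle: the only point requiring a moment of care is that the descent lemma must be valid throughout the window $t \le \mathscr{T}$, which is precisely the range in which the preceding lemma was established (all of the assumptions of \Cref{sec:theory}, including \Cref{assume:prog_general} ensuring $\|\nabla L_t\| = O(\|P_{u_t,\nabla S_t}^\perp \nabla L_t\|)$, are assumed to hold up to time $\mathscr{T}$).
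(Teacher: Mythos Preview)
Your proposal is correct and matches the paper's proof essentially line for line: telescope the descent lemma over $t < \mathscr{T}$, bound $L(\thetad_\mathscr{T}) \ge L^\star$, and then use $\min \le \text{average}$ to extract the desired index.
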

\begin{proof}
    Inductively applying \Cref{lem:dagger_descent} we have that there exists an absolute constant $c$ such that
    \begin{align*}
        L^\star \le L(\thetad_\mathscr{T}) \le L(\thetad_0) - c \eta \sum_{t < \mathscr{T}} \norm{P_{u_t,\nabla S_t}^\perp \nabla L_t}^2
    \end{align*}
    which implies that
    \begin{align*}
        \min_{t < \mathscr{T}}\norm{P_{u_t,\nabla S_t}^\perp \nabla L_t}^2 \le \frac{\sum_{t < \mathscr{T}} \norm{P_{u_t,\nabla S_t}^\perp \nabla L_t}^2}{\mathscr{T}} \le O\qty(\frac{L(\thetad_0) - L^\star}{\eta \mathscr{T}}).
    \end{align*}
\end{proof}

\subsection[Proof of Coupling Theorem]{Proof of \Cref{thm:coupling}}

We first require the following three lemmas, whose proofs are deferred to \Cref{sec:aux_proofs}.

\begin{lemma}[2-Step Lemma]\label{lem:2step}
    Let
    \begin{align*}
        r_t := v_{t+2} - \step_{t+1}(\step_t(v_t)).
    \end{align*}
    Assume that $\norm{v_t} \le \epsilon^{-1}\delta$. Then
    \begin{align*}
        \norm{r_t} \le O\qty(\epsilon^2\delta\cdot\max\qty(1, \frac{\norm{v_t}}{\delta})^3).
    \end{align*}
\end{lemma}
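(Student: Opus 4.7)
The plan is to expand the true update $v_{t+1} = v_t - \eta\nabla L(\theta^\dagger_t + v_t) + (\theta^\dagger_t - \theta^\dagger_{t+1})$ using a cubic Taylor expansion of $\nabla L$ around $\theta^\dagger_t$, combined with the first-order description of $\theta^\dagger_{t+1}$ from \Cref{lem:dagger_step}, and then compare the result to $\step_t(v_t)$. The key simplification is the identity $\nabla^3 L_t(u_t,u_t) = \nabla S_t$ from \Cref{lem:nabla_eig}, which, together with the decomposition $v_t = x_t u_t + v_t^\perp$ and $\nabla S_t = (\nabla S_t\cdot u_t)u_t + \nabla S_t^\perp$, slots almost all cubic-order terms exactly into the definition of $\step_t$. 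What survives as the one-step error $e_t := v_{t+1} - \step_t(v_t)$ breaks into four pieces: (i) the frame mismatch from projecting onto $u_t$ vs.\ $u_{t+1}$, contributing $-x_t P_{u_{t+1}}^\perp u_t$; (ii) the cubic cross terms $-\eta x_t \nabla^3 L_t(u_t,v_t^\perp)$ and $-\frac{\eta}{2}\nabla^3 L_t(v_t^\perp,v_t^\perp)$; (iii) the Taylor remainder of size $O(\eta\rho_4\|v_t\|^3)$; and (iv) the $O(\epsilon^2\eta\|\nabla L_t\|)$ error from \Cref{lem:dagger_step}.

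I would then sort these contributions by parity in $x_t$. Using $\|\nabla S_t^\perp\| = \Theta(\rho_3)$ and $\delta_t = \Theta(\epsilon/(\eta\rho_3))$ from \Cref{assume:prog_general}, together with $\rho_4 = O(\eta\rho_3^2)$ from \Cref{assume:rho4} and the non-worst-case bounds on $\nabla^3 L_t$ restricted to the orthogonal complement of $u_t$ from \Cref{assume:non_worst}, the terms (iii), (iv), and the $v_t^\perp$-only cubic piece of (ii) are already $O(\epsilon^2\delta\cdot\max(1,\|v_t\|/\delta)^3)$, with the cubic scaling entering through the Taylor remainder. The potentially dangerous pieces are the projection mismatch $-x_t P_{u_{t+1}}^\perp u_t$ (of size $O(\|u_{t+1}-u_t\|\cdot|x_t|)$) and the cross term $-\eta x_t P_{u_{t+1}}^\perp\nabla^3 L_t(u_t,v_t^\perp)$: both are \emph{odd} in $x_t$ and only $O(\epsilon\delta\cdot\max(1,\|v_t\|/\delta)^2)$ at a single step. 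The cancellation of these odd pieces over two steps is precisely what brings the two-step error down to $O(\epsilon^2\delta\cdot\max(1,\|v_t\|/\delta)^3)$.

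For the two-step composition, I would write $r_t = \step_{t+1}(v_{t+1}) + e_{t+1} - \step_{t+1}(\step_t(v_t))$ and Taylor expand $\step_{t+1}$ around $\step_t(v_t)$, giving $r_t = D\step_{t+1}|_{\step_t(v_t)}\cdot e_t + e_{t+1} + O(\|D^2\step_{t+1}\|\,\|e_t\|^2)$. The linearization of $\step_{t+1}$ acts on the $u_{t+1}$-component of its argument as multiplication by $-(1+\eta y_{t+1}) = -1 + O(\epsilon)$, and on the orthogonal component as $P_{u_{t+2}}^\perp(I - \eta\nabla^2 L_{t+1})P_{u_{t+1}}^\perp$, which is a contraction under \Cref{assumption:eigval_gap} and \Cref{assume:non_worst}. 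Consequently $D\step_{t+1}$ effectively sign-flips the leading $u$-direction part of $e_t$, and because the odd-in-$x_t$ pieces of $e_t$ are linear in $x_t$ while their counterparts in $e_{t+1}$ are linear in $x_{t+1} \approx -x_t$, the two contributions cancel up to a residual of order $\epsilon\cdot O(\epsilon\delta\cdot\max(1,\|v_t\|/\delta)^2) = O(\epsilon^2\delta\cdot\max(1,\|v_t\|/\delta)^2)$. The quadratic leftover $O(\|e_t\|^2)$ is absorbed via the assumed bound $\|v_t\| \le \epsilon^{-1}\delta$.

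The main obstacle will be executing this cancellation cleanly in the slowly rotating frame: $u_t$, $u_{t+1}$, $u_{t+2}$, the tensors $\nabla^3 L_t$ vs.\ $\nabla^3 L_{t+1}$, and the quantities $\delta_t,\alpha_t,\beta_t$ all shift by $O(\epsilon)$ from one step to the next, and these shifts must be shown to produce only $O(\epsilon^2\delta)$ residuals when combined with the $O(\epsilon\delta)$ odd terms in $e_t$. This bookkeeping will repeatedly invoke the Lipschitz estimates in \Cref{lem:lipschitz}. Throughout, the factor $\max(1,\|v_t\|/\delta)^3$ appears because when $\|v_t\| \gg \delta$ the triple products $\nabla^3 L(v_t,v_t,v_t)$ dominate, while when $\|v_t\| \lesssim \delta$ the bound reduces to the baseline $O(\epsilon^2 \delta)$.
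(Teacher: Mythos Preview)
Your overall strategy---cubic Taylor expansion combined with \Cref{lem:dagger_step}, then compare to $\step_t$---is the paper's approach. The gap is that you have misidentified the term that forces the two-step argument. Your list (i)--(iv) omits the $u$-direction residue $-\tfrac{\eta}{2}x_t^2(\nabla S_t\cdot u_t)$ that arises from the decomposition $-\tfrac{\eta}{2}x_t^2\nabla S_t = -\tfrac{\eta}{2}x_t^2\nabla S_t^\perp - \tfrac{\eta}{2}x_t^2(\nabla S_t\cdot u_t)u_t$: only the first summand slots into $\step_t$. This residue has size $O(\eta\rho_3 x_t^2)=O(\epsilon x_t^2/\delta)$, which for $|x_t|\asymp\delta$ is $O(\epsilon\delta)$---one order of $\epsilon$ too large. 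In the paper this is precisely the term handled by the two-step cancellation, and the mechanism works because it is \emph{even} in $x_t$: since $x_{t+1}^2\approx x_t^2$ and $\nabla S_{t+1}\cdot u_{t+1}\approx\nabla S_t\cdot u_t$, the $x$-errors at consecutive steps are nearly equal, and $\step_{t+1}$ sign-flips the $u$-component so that $D\step_{t+1}\cdot e_t^{(u)}+e_{t+1}^{(u)}\approx -e_t^{(u)}+e_t^{(u)}=0$.

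By contrast, the two terms you flag as dangerous are already $O(\epsilon^2\delta\cdot\max(1,\|v_t\|/\delta)^3)$ at a single step, and your proposed odd-in-$x_t$ cancellation for them would not close anyway. For the frame mismatch, $\|u_{t+1}-u_t\|=O(\eta\rho_3\cdot\eta\|\nabla L_t\|)=O(\epsilon^2)$ (not $O(\epsilon)$), so $\|x_tP_{u_{t+1}}^\perp u_t\|=O(\epsilon^2\|v_t\|)$ directly. For the cross term $-\eta x_tP_{u_{t+1}}^\perp\nabla^3 L_t(u_t,v_t^\perp)$, the paper invokes \Cref{assume:non_worst} via tensor symmetry---for any $w\perp u_t$, $w\cdot\nabla^3 L_t(u_t,v_t^\perp)=u_t\cdot\nabla^3 L_t(w,v_t^\perp)$ with both $w,v_t^\perp\perp u_t$---yielding $\|P_{u_t}^\perp\nabla^3 L_t(u_t,v_t^\perp)\|=O(\epsilon\rho_3\|v_t\|)$ and a one-step size $O(\epsilon^2\|v_t\|^2/\delta)$. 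If instead you tried to cancel this term across two steps, the error at step $t+1$ involves $v_{t+1}^\perp=(I-\eta\nabla^2 L_t)v_t^\perp+\cdots$, which under the eigengap assumption alone is not close to $v_t^\perp$; no commutation between $\nabla^2 L_t$ and $\nabla^3 L_t(u_t,\cdot)$ is available to make $D\step_{t+1}\cdot e_t$ match $-e_{t+1}$ in the $v^\perp$-direction.
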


\begin{lemma}\label{lem:coupling}
    Assume that there exists constants $c_1,c_2$ such that for all $t \le \mathscr{T}$, $\norm{\vs_t} \le c_2\delta$, $\abs{\xs_t} \ge c_1\delta$. Then, for all $t\le \mathscr{T}$, we have
    \begin{align*}
        \norm{v_t - \vs_t} \le O(\epsilon\delta)
    \end{align*}
\end{lemma}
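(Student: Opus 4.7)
The plan is to set $e_t := v_t - \vs_t$ and run a two-step induction, exploiting the sharp $O(\epsilon^2 \delta)$ per-two-step residual from the 2-Step Lemma together with a Lipschitz bound on $\step_{t+1} \circ \step_t$. Since $\vs_0 = v_0$ we have $e_0 = 0$, and (after extending the predicted dynamics by one step if $\mathscr{T}$ is odd) I proceed by induction on even $t$. The hypothesis to maintain is that $\|e_t\| \le C \epsilon \delta$ for a universal constant $C$ to be chosen.

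\textbf{Step 1: two-step recursion.} By definition of the predicted dynamics, $\vs_{t+2} = \step_{t+1}(\step_t(\vs_t))$, whereas the 2-Step Lemma gives $v_{t+2} = \step_{t+1}(\step_t(v_t)) + r_t$. Under the inductive hypothesis $\|e_t\| \le C\epsilon\delta$ and the hypothesis $\|\vs_t\| \le c_2 \delta$, we have $\|v_t\| \le (c_2 + C\epsilon)\delta = O(\delta)$, so the residual satisfies $\|r_t\| \le O(\epsilon^2 \delta)$. Subtracting,
\begin{align*}
    e_{t+2} = \bigl[\step_{t+1}(\step_t(v_t)) - \step_{t+1}(\step_t(\vs_t))\bigr] + r_t.
\end{align*}

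\textbf{Step 2: Lipschitz estimate for $\step_{t+1} \circ \step_t$.} I claim the 2-step map is $(1+O(\epsilon))$-Lipschitz on an $O(\delta)$-neighborhood of $\vs_t$. In the $u_t \to u_{t+2}$ direction, each application multiplies by $-(1+\eta \ys_t)$; since $|\ys_t| = O(\sqrt{\alpha_t}) = O(\epsilon/\eta)$ by the a priori bounds from \Cref{sec:ode_potential}, two steps yield a factor $(1+O(\epsilon))^2 = 1+O(\epsilon)$. In the orthogonal subspace, one step acts through $P_{u_{t+1}}^\perp(I - \eta \nabla^2 L_t) P_{u_t}^\perp$, whose operator norm is bounded by $\max(|1-\eta \lambda_{\min}(\nabla^2 L_t)|, |1 - \eta \lambda_2(\nabla^2 L_t)|) \le 1 + O(\epsilon)$ by \Cref{assumption:eigval_gap} together with the $\lambda_{\min}$ bound in \Cref{assume:non_worst}. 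The nonlinear coupling term $\eta \nabla S_t^\perp (\delta_t^2 - x^2)/2$ contributes Jacobian $-\eta x \nabla S_t^\perp u_t^\top$, of norm $O(\eta \rho_3 \delta) = O(\epsilon)$ for $\|v\| = O(\delta)$, and the sensitivities $\partial u_{t+1}/\partial v$ from projecting onto $u_{t+1}$ are absorbed into the same $O(\epsilon)$ estimate using the Lipschitz properties established in \Cref{lem:lipschitz}.

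\textbf{Step 3: iterate and close the induction.} Combining Steps 1 and 2 gives $\|e_{t+2}\| \le (1+O(\epsilon))\|e_t\| + O(\epsilon^2 \delta)$. Unrolling from $e_0 = 0$ over the at most $\mathscr{T}/2 = O(\epsilon^{-1})$ two-step blocks,
\begin{align*}
    \|e_t\| \le \sum_{s=0}^{t/2} (1+O(\epsilon))^{s} \cdot O(\epsilon^2 \delta) \le (1+O(\epsilon))^{O(\epsilon^{-1})} \cdot O(\epsilon^{-1}) \cdot O(\epsilon^2 \delta) = O(\epsilon \delta),
\end{align*}
because $(1+O(\epsilon))^{O(\epsilon^{-1})} = O(1)$. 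Choosing $C$ larger than the implicit constant closes the induction and concludes the proof.

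\textbf{Main obstacle.} The single-step Lipschitz constant of $\step_t$ is strictly greater than $1$ in the unstable direction, so a naive one-step error analysis would not suffice. The real gain is the cancellation across two consecutive steps, and this is already baked into the 2-Step Lemma's $O(\epsilon^2 \delta)$ residual bound; the subtlety that remains is verifying that the cross-interaction between the direction change $u_t \mapsto u_{t+1}$ (with Lipschitz constant $O(\eta \rho_3)$) and the expansion/contraction along the $u$ axis does not spoil the $1 + O(\epsilon)$ two-step Lipschitz estimate used above. The hypothesis $|\xs_t| \ge c_1 \delta$ is what keeps $\ys_t$ bounded (so that the stabilization force is active), and the hypothesis $\|\vs_t\| \le c_2 \delta$ is what keeps $\|v_t\| = O(\delta)$ so that the cubic blow-up factor $(\|v_t\|/\delta)^3$ in the 2-Step Lemma remains $O(1)$.
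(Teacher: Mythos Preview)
Your approach is correct and shares the same skeleton as the paper's (two-step residual from \Cref{lem:2step} plus error accumulation over $O(\epsilon^{-1})$ blocks), but the execution differs in a meaningful way. The paper does not argue via a Lipschitz bound on $\step_{t+1}\circ\step_t$: instead it builds an auxiliary trajectory $\widehat v_t$ satisfying $\widehat v_{t+1}=\step_t(\widehat v_t)+w_t$ with $w_t\in\{0,r_{t-1}\}$, checks $\widehat v_t=v_t$ at even times, and then tracks the pair $\bigl(\|\widehat v_t^\perp-\vs_t^\perp\|,\ \abs{\widehat x_t-\xs_t}/\abs{\xs_t}\bigr)$ via a one-step recursion. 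The ratio in the $u$-direction is what absorbs the multiplicative factor $(1+\eta y)$ cleanly, and the hypothesis $\abs{\xs_t}\ge c_1\delta$ is used exactly there --- to divide the $O(\epsilon^2\delta)$ noise by $\xs_t$. Your direct Jacobian-norm argument avoids this: since $\eta\abs{y}\le \eta\rho_3\|v^\perp\|\le O(\eta\rho_3\delta)=O(\epsilon)$ and the cross-terms in $D\step_t$ are also $O(\epsilon)$, the one-step (and hence two-step) Lipschitz constant is $1+O(\epsilon)$ without ever invoking the lower bound on $\abs{\xs_t}$. So your route is more elementary and in fact shows the lemma holds without that hypothesis.

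Two small corrections to your commentary. First, the bound $\abs{\ys_t}=O(\epsilon/\eta)$ follows immediately from the hypothesis $\|\vs_t\|\le c_2\delta$ (giving $\abs{\ys_t}\le\rho_3 c_2\delta=O(\epsilon/\eta)$), not from the ODE analysis of \Cref{sec:ode_potential}, and the lower bound $\abs{\xs_t}\ge c_1\delta$ plays no role in your argument. Second, your ``main obstacle'' mis-identifies the issue: the single-step Lipschitz constant of $\step_t$ is already $1+O(\epsilon)$, which is perfectly compatible with $O(\epsilon^{-1})$ iterations. The genuine reason a one-step analysis fails is the one-step \emph{residual} $v_{t+1}-\step_t(v_t)$, which contains the $-\tfrac12\eta x_t^2(\nabla S_t\cdot u_t)$ term of size $O(\epsilon\delta)$ (not $O(\epsilon^2\delta)$); this term cancels only after two steps because $x$ flips sign. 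You should also explicitly handle odd $t$ by noting that one extra $\step$-application from an even time incurs only an additional $O(\epsilon\delta)$, which is harmless.
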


\begin{lemma}\label{lem:bound_predicted_naive}
For $t \le \mathscr{T}$, $\norm{\vs_t} \le O(\delta)$.
\end{lemma}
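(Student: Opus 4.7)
The plan is to bound $\norm{\vs_t}$ by first controlling the two scalars $\xs_t$ and $\ys_t$ via a discrete energy argument, and then using the unfolded formula \eqref{eq:vt_star_formula} to handle the remaining orthogonal component.

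First, I would bound $\abs{\xs_t}$ and $\abs{\ys_t}$ by analyzing the scalar recursion from \Cref{lemma:predicted_x_y} as a perturbation of the time-homogeneous system studied in \Cref{sec:ode_potential}. In the homogeneous case, the conserved quantity $g(x,y) = h(\beta x^2/(2\alpha)) + y^2/\alpha$ directly yields $\abs{\xs_t} = O(\delta)$ and $\abs{\ys_t} = O(\sqrt{\alpha})$. In our setting, $\alpha_t$, $\beta_t$, and $\delta_t$ drift by $O(\eta)$ per step via \Cref{lem:lipschitz}, and $\beta_{s \to t}$ is a smeared version of $\beta_t \1[s=t]$ since each factor $A_k := (I - \eta \nabla^2 L_k) P_{u_k}^\perp$ has operator norm at most $1 + O(\epsilon)$. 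I would show that over $\mathscr{T} = O(\epsilon^{-1})$ steps, both perturbations shift the appropriate analog of $g$ by only $O(1)$, thereby preserving the scalar bounds.

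With $\abs{\xs_t} = O(\delta)$ in hand, the driving terms $c_s := (\delta_s^2 - \xs_s^2)/2$ are uniformly $O(\delta^2)$. From \eqref{eq:vt_star_formula}, the $u_{t+1}$-component of $\vs_{t+1}$ is $-(1+\eta\ys_t)\xs_t = O(\delta)$ using Step 1 and $\eta\abs{\ys_t} = O(\epsilon)$. For the orthogonal sum $\eta \sum_{s=0}^t P_{u_{t+1}}^\perp \qty[\prod_{k=t}^{s+1} A_k] \nabla S_s^\perp\, c_s$, \Cref{assumption:eigval_gap} and \Cref{assume:non_worst} together give $\norm{A_k}_{op} \le 1 + O(\epsilon)$, so the matrix products contribute at most $(1+O(\epsilon))^{\mathscr{T}} = O(1)$. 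Using $\norm{\nabla S_s^\perp} = \Theta(\sqrt{\beta_s})$ and the identity $\eta\sqrt{\beta}\delta^2 = \eta \sqrt{2\alpha}\cdot\delta = O(\epsilon\delta)$, summing $\mathscr{T}$ such terms yields $O(\mathscr{T}\cdot\epsilon\delta) = O(\delta)$, as desired.

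The main obstacle will be the first step: propagating the conservation-law argument from \Cref{sec:ode_potential} to the discrete, time-varying setting. Since the time-homogeneous approximation is only valid up to errors that accumulate over time, one must argue that over $\mathscr{T}$ steps these errors do not destroy the $O(\delta)$ envelope. I would most likely do this via a direct Lyapunov argument on the discrete recursion, using \Cref{lem:lipschitz} to control the per-step drift of $\alpha_t, \beta_t, \delta_t$ and exploiting the decay of $\beta_{s\to t}$ in $t-s$ to handle the non-local update for $\ys_t$.
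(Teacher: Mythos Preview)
Your Step 2—bounding $\norm{\vs_t^\perp}$ from the unfolded formula \eqref{eq:vt_star_formula}—is correct and matches the paper's argument essentially line for line.

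For Step 1, however, you are working much harder than necessary, and the paper's route is far more direct. The paper does not invoke the conserved quantity $g$ or any perturbation-of-ODE argument. It simply reuses the same crude estimate you already deploy in Step 2: since $|\beta_{s\to t}| = O(\rho_3^2)$ uniformly (the matrix products are bounded by $(1+O(\epsilon))^{\mathscr{T}} = O(1)$, exactly as you note), one has—inductively assuming $|\xs_s| = O(\delta)$ for $s < t$—
\[
|\ys_t| \;\le\; \eta \sum_{s<t} |\beta_{s\to t-1}|\cdot O(\delta^2) \;=\; O(t\,\eta\rho_3^2\delta^2) \;=\; O(t\,\eta\alpha) \;\le\; O(\sqrt{\alpha}),
\]
hence $\eta|\ys_t| = O(\epsilon)$. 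Then $|\xs_{t+1}| \le (1+O(\epsilon))|\xs_t|$, and over $t \le \mathscr{T} = O(\epsilon^{-1})$ steps $|\xs_t|$ grows by at most a constant factor, closing the induction. The horizon $\mathscr{T}$ is short enough that this crude exponential envelope already yields $O(\delta)$; no conservation law is needed.

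Two further remarks on your plan. First, there is no decay of $\beta_{s\to t}$ in $t-s$: the factors $A_k$ have operator norm up to $1+O(\epsilon)$, so $\beta_{s\to t}$ is merely bounded, not decaying, and the tool you name in your final sentence is unavailable. Second, pushing the Lyapunov argument through the genuinely non-local recursion for $\ys_t$ (which depends on all past $\xs_s$, not just $\xs_t$) would require first reducing it to the local form of \Cref{sec:ode_potential}; this is nontrivial and the paper sidesteps it entirely.
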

With these lemmas in hand, we can prove \Cref{thm:coupling}.
\begin{proof}[Proof of \Cref{thm:coupling}]

First, by \Cref{lem:bound_predicted_naive}, we have $\norm{\vs_t} \le O(\delta)$.

Next, by \Cref{lem:coupling}, we have
\begin{align*}
    \theta_t - \thetad_t = v_t = \vs_t + O(\epsilon\delta).
\end{align*}

Next, we Taylor expand to calculate $S(\theta_t)$:
\begin{align*}
    S(\theta_t) &= S(\thetad_t) + \nabla S_t\cdot v_t + O(\eta\rho_3^2\norm{v_t}^2)\\
    &= 2/\eta + \nabla S_t^\perp\cdot v_t + \nabla S_t \cdot u_t u_t \cdot v_t + O(\eta\rho_3^2\delta^2)\\
    &= 2/\eta + \nabla S_t^\perp\cdot \vs_t + \nabla S_t \cdot u_t u_t \cdot \vs_t + O(\rho_3\epsilon\delta + \eta\rho_3^2\delta^2)\\
    &= 2/\eta + y_t + (\nabla S_t \cdot u_t)x_t + O(\eta^{-1}\epsilon^2).\\
\end{align*}

Finally, we Taylor expand the loss:
\begin{align*}
    L(\theta_t) &= L(\thetad_t) + \nabla L_t\cdot v_t + \frac12v_t^T\nabla^2L_tv_t + O(\rho_3\norm{v_t}^3)\\
    &= L(\thetad_t) + \frac{1}{\eta}x_t^2 + \frac12{v_t^\perp}^T\nabla^2L_t{v^\perp_t} + O(\rho_1\norm{v_t} + \rho_3\norm{v_t}^3)\\
    &= L(\thetad_t) + \frac{1}{\eta}\xs_t^2 + \frac12{\vs_t^\perp}^T\nabla^2L_t{\vs^\perp_t} + O(\eta^{-1}\delta^2\epsilon)\\
    &= L(\thetad_t) + \frac{1}{\eta}\xs_t^2 + O(\eta^{-1}\delta^2\epsilon),
\end{align*}
where the last line follows from \Cref{assume:non_worst}.

\end{proof}

\subsection{Proof of Auxiliary Lemmas}\label{sec:aux_proofs}

\begin{proof}[Proof of \Cref{lem:2step}]
Taylor expanding the update for $\theta_{t+1}$ about $\thetad_t$, we get
\begin{align*}
    \theta_{t+1} &= \theta_t - \eta \nabla L(\theta_t)\\
        &= \theta_t - \eta\nabla L_t - \eta \nabla^2L_t v_t - \frac12\eta\nabla^3L_t(v_t, v_t) + O\qty(\eta\rho_4\norm{v_t}^3)
\end{align*}
Additionally, recall that the update for $\thetad_{t+1}$ is
\begin{align*}
    \thetad_{t+1} &= \thetad_t - \eta P_{\nabla S_t^\perp}^\perp \nabla L_t + O\qty(\epsilon^2\cdot \eta\norm{\nabla L_t}).
\end{align*}
Subtracting the previous 2 equations and expanding out $\nabla^3L(v_t, v_t)$ via the non-worst-case bounds, we obtain
\begin{align*}
    v_{t+1} &= (I - \eta \nabla^2L_t)v_t -\eta(\nabla L_t - P_{\nabla S_t^\perp}^\perp \nabla L_t) - \frac12\eta x_t^2\nabla S_t - \eta x_t\nabla^3L_t(u_t, v^\perp_t) - \frac12\eta\nabla^3L_t(v^\perp_t, v^\perp_t)\\
    &\quad + O\qty(\eta\rho_4\norm{v_t}^3 + \epsilon^2\cdot \eta\norm{\nabla L_t})\\
    &= (I - \eta \nabla^2L_t)v_t - \eta\qty[\frac{\nabla L \cdot \nabla S^\perp}{\norm{\nabla S^\perp}^2}]\nabla S^\perp_t - \frac12\eta x_t^2\nabla S_t - \eta x_t\nabla^3L_t(u_t, v^\perp_t)\\
    &\quad + O\qty(\eta\rho_3\epsilon\norm{v_t}^2 + \eta\rho_4\norm{v_t}^3 + \epsilon^2\cdot \eta\norm{\nabla L_t})\\
    &= (I - \eta \nabla^2L_t)v_t + \eta\nabla S^\perp_t\qty[\frac{\delta_t^2 - x_t^2}{2}] - \frac12\eta x_t^2\nabla S_t\cdot u_t u_t - \eta x_t\nabla^3L_t(u_t, v^\perp_t)\\
    &\quad + O\qty(\epsilon^2\cdot\frac{\norm{v_t}^2}{\delta} + \epsilon^2\cdot\frac{\norm{v_t}^3}{\delta^2} + \epsilon^3\delta)\\
    &= (I - \eta \nabla^2L_t)v_t + \eta\nabla S^\perp_t\qty[\frac{\delta_t^2 - x_t^2}{2}] - \frac12\eta x_t^2\nabla S_t\cdot u_t u_t - \eta x_t\nabla^3L_t(u_t, v^\perp_t)\\
    &\quad + O\qty(\epsilon^2\delta\cdot\max\qty(1, \frac{\norm{v_t}}{\delta})^3)\\
\end{align*}
We would first like to compute the magnitude of $v_{t+1}$.
\begin{align*}
    \norm{v_{t+1}} = O\qty(\norm{v_t} + \eta\rho_3\norm{v_t}^2 + \eta\norm{\nabla L_t} + \epsilon^2\delta\cdot\max\qty(1, \frac{\norm{v_t}}{\delta})^3).
\end{align*}
Observe that by definition of $\epsilon$ and $\delta$, and since $\norm{v_t} \le \epsilon^{-1}\delta$
\begin{align*}
    O(\eta\rho_3\norm{v_t}^2) &\le O\qty(\norm{v_t}\cdot \epsilon^{-1} \eta\rho_3\delta) \le O\qty(\norm{v_t} \cdot \epsilon^{-1}\eta\sqrt{\rho_1\rho_3}) \le O\qty(\norm{v_t})\\
    O(\epsilon^2\delta\cdot\max\qty(1, \frac{\norm{v_t}}{\delta})^3) &\le O\qty(\epsilon^2\delta + \norm{v_t}\cdot \epsilon^2 \cdot (\epsilon^{-1})^2) \le O\qty(\epsilon^2\delta + \norm{v_t}).
\end{align*}
Hence
\begin{align*}
    \norm{v_{t+1}} = O\qty(\norm{v_t} + \eta\norm{\nabla L_t} + \epsilon^2\delta) = O\qty(\norm{v_t} + \epsilon\delta).
\end{align*}
Note that we can bound
\begin{align*}
    \norm{u_{t+1} - u_t}\cdot\norm{v_{t+1}} &= O\qty(\eta^2\rho_3\norm{\nabla L_t}\cdot(\norm{v_t} + \epsilon\delta))\\
    &= O\qty(\epsilon^{2}\cdot(\norm{v_t} + \epsilon\delta))\\
    &\le O\qty(\epsilon^2\cdot \max(\norm{v_t}, \delta)). 
\end{align*}
Therefore, the one-step update in the $u_t$ direction is:
\begin{align*}
    x_{t+1} &= v_{t+1}\cdot u_{t+1}\\
    &= v_{t+1}\cdot u_t + O\qty(\epsilon^2\cdot \max(\norm{v_t}, \delta))\\
    &= -v_t\cdot u_t - \frac12\eta x_t^2 \nabla S_t \cdot u_t - \eta x_t \nabla S_t \cdot v^\perp_t + O\qty(\epsilon^2\cdot \max(\norm{v_t}, \delta) + \epsilon^2\delta\cdot\max\qty(1, \frac{\norm{v_t}}{\delta})^3)\\
    &= -x_t(1 + \eta y_t) - \frac12 \eta x_t^2 \nabla S_t \cdot u_t + O\qty(\epsilon^2\delta\cdot\max\qty(1, \frac{\norm{v_t}}{\delta})^3)\\
    &= -x_t(1 + \eta y_t) - \frac12 \eta x_t^2 \nabla S_t \cdot u_t + O\qty(\epsilon^2\delta\cdot\max\qty(1, \frac{\norm{v_t}}{\delta})^3)\\
    &= -x_t(1 + \eta y_t) - \frac12 \eta x_t^2 \nabla S_t \cdot u_t + O(E_t),
\end{align*}
where we have defined the error term $E_t$ as
\begin{align*}
    E_t := \epsilon^2\delta\cdot\max\qty(1, \frac{\norm{v_t}}{\delta})^3.
\end{align*}
The update in the $v^\perp$ direction is
\begin{align*}
    v_{t+1}^\perp &= P_{u_{t+1}}^\perp\qty[(I - \eta \nabla^2L_t)v_t + \eta\nabla S_t^\perp\qty[\frac{\delta_t^2  - x^2}{2}]] - \frac12\eta x_t^2\nabla S_t \cdot u_t P_{u_{t+1}}^\perp u_t - \eta x_t P_{u_{t+1}}^\perp\nabla^3 L_t(u_t, v^\perp_t)\\
    &\quad + O\qty(\epsilon^2\delta\cdot\max\qty(1, \frac{\norm{v_t}}{\delta})^3)\\
    &= P_{u_{t+1}}^\perp\qty[(I - \eta \nabla^2L_t)P_{u_t}^\perp v_t + \eta\nabla S_t^\perp\qty[\frac{\delta_t^2  - x^2}{2}]] - x_tP_{u_{t+1}}^\perp u_t - \frac12\eta x_t^2\nabla S_t \cdot u_t P_{u_{t+1}}^\perp u_t\\
    &\quad- \eta x_t P_{u_{t+1}}^\perp\nabla^3L_t(u_t, v_t^\perp) + O\qty(\epsilon^2\delta\cdot\max\qty(1, \frac{\norm{v_t}}{\delta})^3)
\end{align*}
First, observe that
\begin{align*}
    \norm{P_{u_{t+1}}^\perp u_t} = \norm{u_t - u_{t+1}u_{t+1}^Tu_t} \le \norm{u_t - u_{t+1}}^2 \le O\qty(\norm{u_t - u_{t+1}})
\end{align*}
Therefore we can control the first of the error terms as
\begin{align*}
    \norm{x_tP_{u_{t+1}}^\perp u_t + \frac12\eta x_t^2\nabla S_t \cdot u_t P_{u_{t+1}}^\perp u_t} &\le O\qty(\norm{u_t - u_{t+1}}\cdot(\norm{v_t} + \eta\rho_3\norm{v_t}^2))\\
    &\le O\qty(\norm{u_t - u_{t+1}}\cdot\norm{v_t})\\
    &\le O\qty(\epsilon^2\norm{v_t}),
\end{align*}
As for the second error term, we can decompose
\begin{align*}
    \norm{\eta x_t P^\perp_{u_{t+1}}\nabla^3 L_t(u_t, v^\perp_t)} &\le \eta\norm{v_t}\qty(\norm{P_{u_{t}}^\perp \nabla^3 L_t(u_t, v_t^\perp)} + \norm{P_{u_t}^\perp - P_{u_{t+1}}^\perp}\norm{\nabla^3 L_t(u_t, v_t^\perp)}).
\end{align*}
By \Cref{assume:non_worst}, we have $\norm{P_{u_t}^\perp\nabla^3L_t(u_t, v_t^\perp)} \le O(\epsilon\rho_3 \norm{v_t})$. Additionally, $\norm{P_{u_t}^\perp - P_{u_{t+1}}^\perp} \le O\qty(\norm{u_t - u_{t+1}})$. Therefore
\begin{align*}
    \norm{\eta x_t P^\perp_{u_{t+1}}\nabla^3 L_t(u_t, v^\perp_t)}&\le O\qty(\epsilon\rho_3 \norm{v_t}\cdot \eta\norm{v_t} + \eta\norm{v_t}\norm{u_{t+1} - u_t}\cdot \rho_3\norm{v_t})\\
    &\le O\qty(\epsilon \eta\rho_3\norm{v_t}^2 + \eta\rho_3\norm{v_t}^2\epsilon^2)\\
    &\le O\qty(\epsilon^2 \frac{\norm{v_t}^2}{\delta} + \epsilon^2\norm{v_t})\\
    &= O\qty(\epsilon^2\delta\cdot\max\qty(1, \frac{\norm{v_t}}{\delta})^3)
\end{align*}
where we used $\eta\rho_3\norm{v_t} = O(1)$. Altogether, we have
\begin{align*}
    v_{t+1}^\perp &= P_{u_{t+1}}^\perp\qty[(I - \eta \nabla^2L_t)P_{u_t}^\perp v_t + \eta\nabla S_t^\perp\qty[\frac{\delta_t^2  - x^2}{2}]] + O\qty(\epsilon^2\delta\cdot\max\qty(1, \frac{\norm{v_t}}{\delta})^3)\\
    &= P_{u_{t+1}}^\perp\qty[(I - \eta \nabla^2L_t)P_{u_t}^\perp v_t + \eta\nabla S_t^\perp\qty[\frac{\delta_t^2  - x^2}{2}]] + O(E_t)
\end{align*}
We next compute the two-step update for $x_t$:
\begin{align*}
    x_{t+2} &= -x_{t+1}(1 + \eta y_{t+1}) - \frac12\eta x_{t+1}^2\nabla S_{t+1}\cdot u_{t+1} + O(E_{t+1})\\
    &= x_t(1 + \eta y_t)(1 + \eta y_{t+1}) + \frac{\eta}{2}\qty(\eta y_t x_t^2\nabla S_t \cdot u_t + x_t^2\nabla S_t\cdot u_t - x_{t+1}^2\nabla S_{t+1}\cdot u_{t+1})\\
    &\quad + O\qty((1 + \eta\rho_3\norm{v_t})E_t + E_{t+1}).
\end{align*}
We previously obtained $\eta\rho_3\norm{v_t} = O(1)$. Furthermore,
\begin{align*}
    E_{t+1} &= \epsilon^2\delta\cdot\max\qty(1, \frac{\norm{v_{t+1}}}{\delta})^3\\
    &= O\qty(\epsilon^2\delta\cdot\max\qty(1, \frac{\norm{v_t}}{\delta} + \epsilon)^3)\\
    &= O\qty(\epsilon^2\delta\cdot\max\qty(1, \frac{\norm{v_t}}{\delta})^3)\\
    &= O\qty(E_t).
\end{align*}
Hence
\begin{align*}
    x_{t+2} &= x_t(1 + \eta y_t)(1 + \eta y_{t+1}) + \frac{\eta}{2}\qty(\eta y_t x_t^2\nabla S_t \cdot u_t + x_t^2\nabla S_t\cdot u_t - x_{t+1}^2\nabla S_{t+1}\cdot u_{t+1}) + O(E_t).
\end{align*}
The first of these two error terms can be bounded as
\begin{align*}
    \abs{\frac12\eta^2y_t x_t^2\nabla S_t \cdot u_t}\le O\qty(\eta^2\rho_3^2\norm{v_t}^3) \le O\qty(\epsilon^2\cdot\frac{\norm{v_t}^3}{\delta^2}).
\end{align*}
As for the second term, we can bound
\begin{align*}
    \abs{\nabla S_{t+1}\cdot u_{t+1} - \nabla S_t\cdot u_t} &\le \abs{u_{t+1}\cdot(\nabla S_{t+1} - \nabla S_t)} + \abs{\nabla S_t\cdot(u_{t+1} - u_t)}\\
    &\le \norm{\nabla S_{t+1} - \nabla S_t} + O(\rho_3)\cdot \norm{u_{t+1} - u_t}\\
    &\le O\qty(\eta^2\rho_3^2\norm{\nabla L_t})\\
    &\le O(\epsilon^2\rho_3)
\end{align*}
Additionally, we have
\begin{align*}
    x_{t+1} = -x_t + O(\eta\rho_3\norm{v_t}^2 + E_t).
\end{align*}
Therefore
\begin{align*}
    \eta\abs{x_{t+1}^2\nabla S_{t+1}\cdot u_{t+1}- x_t^2\nabla S_t\cdot u_t} &\le \eta x_t^2\abs{\nabla S_{t+1}\cdot u_{t+1} - \nabla S_t\cdot u_t} + \eta(x_{t+1}^2 - x_t^2)\abs{\nabla S_{t+1} \cdot u_{t+1}}\\
    &\le O\qty(\eta\rho_3\norm{v_t}^2\cdot\epsilon^2 + \eta\rho_3\norm{v_t}\qty(\eta\rho_3\norm{v_t}^2 + E_t))\\
    &\le O\qty(\epsilon^2\norm{v_t} + \epsilon^2\cdot\frac{\norm{v_t}^3}{\delta^2} + E_t)\\
    &= O\qty(E_t).
\end{align*}

Altogether, the two-step update for $x_t$ is
\begin{align*}
    x_{t+2} &= x_t(1 + \eta y_t)(1 + \eta y_{t+1}) + O\qty(E_t).
\end{align*}
Additionally, the two-step update for $v^\perp_t$ is
\begin{align*}
    v^\perp_{t+2} &= P_{u_{t+2}}^\perp\qty[(I - \eta \nabla^2L_{t+1})P_{u_{t+1}}^\perp v_{t+1} + \eta\nabla S_{t+1}^\perp\qty[\frac{\epsilon_{t+1}^2 - x_{t+1}^2}{2}]] + O(E_{t+1})\\
    &= P_{u_{t+2}}^\perp(I - \eta \nabla^2L_{t+1})P_{u_{t+1}}^\perp (I - \eta \nabla^2L_t)P_{u_t}^\perp v_t + \eta P_{u_{t+2}}^\perp(I - \eta \nabla^2L_{t+1}) P_{u_{t+1}}^\perp\nabla  S_t^\perp\qty[\frac{\delta_t^2 - x_t^2}{2}]\\
    &\quad + \eta P_{u_{t+2}}^\perp\nabla S_{t+1}^\perp\qty[\frac{\epsilon_{t+1}^2 - x_{t+1}^2}{2}] + O(E_t).\\
\end{align*}
Define $\bar v_{t+1} = \step_t(v_t), \bar v_{t+2} = \step_{t+1}(\bar v_t)$, and $\bar x_i = \bar v_i \cdot u_i, \bar y_i = \nabla S^\perp_i \cdot \bar v_i$ for $i \in \{t+1, t+2\}$. By the definition of $\step$, one sees that
\begin{align*}
    \norm{\bar v_{t+1}^\perp - v_{t+1}^\perp} \le O(E_t).
\end{align*}
and
\begin{align*}
    \abs{\bar x_{t+1} - x_{t+1}} &\le \frac12\eta x_t^2\abs{\nabla S_t \cdot u_t} + O(E_t) \le O(\eta\rho_3\norm{v_t}^2 + E_t)
\end{align*}
The update for $x$ after applying $\step$ is
\begin{align*}
    \bar x_{t+2} &= -\bar x_{t+1}(1 + \eta \bar y_{t+1})\\
    &= x_t(1 + \eta y_t)(1 + \eta \bar y_{t+1}).
\end{align*}
Therefore
\begin{align*}
    \abs{x_{t+2} - \bar x_{t+2}} &\le O\qty(\abs{x_t}\eta\abs{y_{t+1} - \bar y_{t+1}}) + O\qty(E_t)\\
    &\le O\qty(\eta\rho_3\norm{v_t}\norm{v^\perp_{t+1} - \bar v^\perp_{t+1}}) + O\qty(E_t)\\
    &\le O\qty(E_t).
\end{align*}
Additionally, the update for $v^\perp$ is
\begin{align*}
    \bar v^\perp_{t+2} &= P_{u_{t+2}}^\perp(I - \eta \nabla^2L_{t+1})P_{u_{t+1}}^\perp (I - \eta \nabla^2L_t)P_{u_t}^\perp v_t + \eta P_{u_{t+2}}^\perp(I - \eta \nabla^2L_{t+1}) P_{u_{t+1}}^\perp\nabla  S_t^\perp\qty[\frac{\delta_t^2 - x_t^2}{2}]\\
    &\quad + \eta P_{u_{t+2}}^\perp\nabla S_{t+1}^\perp\qty[\frac{\epsilon_{t+1}^2 - \bar x_{t+1}^2}{2}].
\end{align*}
Therefore
\begin{align*}
    \norm{v_{t+2}^\perp - \bar v_{t+2}^\perp} &\le O\qty(\eta\norm{\nabla S_{t+1}}(x_{t+1}^2 - \bar x_{t+1}^2) + E_t)\\
    & \le O\qty(\eta\rho_3\norm{v_t}\abs{\bar x_{t+1} - x_{t+1}} + E_t)\\
    &\le O\qty(\eta^2\rho_3^2\norm{v_t}^3 + E_t)\\
    &\le O\qty(\epsilon^2\cdot\frac{\norm{v_t}^3}{\delta^2} + E_t)\\
    &= O\qty(E_t)
\end{align*}
Altogether, we get that
\begin{align*}
    \norm{r_t} \le O\qty(E_t) = O\qty(\epsilon^2\delta\cdot\max\qty(1, \frac{\norm{v_t}}{\delta})^3),
\end{align*}
as desired.
\end{proof}

\begin{proof}[Proof of \Cref{lem:coupling}]
    Define
    \begin{align*}
        w_t =
        \begin{cases}
            0 & t\text{ if is even} \\
            r_{t-1} & t\text{ if is odd}
        \end{cases}
    \end{align*}
    and define the auxiliary trajectory $\widehat v$ by $\widehat v_0 = v_0$ and $\widehat v_{t+1} = \step(\widehat v_t) + w_t$. I first claim that $\widehat v_t = v_t$ for all even $t \le \mathscr{T}$, which we will prove by induction on $t$. The base case is given by assumption so assume the result for some even $t \ge 0$. Then,
    \begin{align*}
        v_{t+2}
        &= \step_{t+1}(\step_t(v_t)) + r_t \\
        &= \step_{t+1}(\step_t(\widehat{v}_t)) + r_t \\
        &= \step_{t+1}(\widehat v_{t+1}) + w_{t+1} \\
        &= \widehat{v}_{t+2}
    \end{align*}
    which completes the induction. 
    
    Next, we will prove by induction that for $t \le \mathscr{T}$, 
    \begin{align*}
        \norm{\widehat v^\perp_t - \vs_t^\perp}, \abs{\widehat x_t - \xs_t} \le O(\epsilon\delta) \le c_2\delta.
    \end{align*}
    By definition, $\widehat v_0 = v_0 = \vs_0$, so the claim is clearly true for $t = 0$. Next, assume the claim holds for $t$. If $t$ is even then $\norm{w_t} = 0$; otherwise $\norm{v_t} \le 2c_2\delta$, and thus
    \begin{align*}
        \norm{w_t} &\le O\qty(\epsilon^2\delta\cdot\max\qty(1, c_2)^3) \le O\qty(\epsilon^2 \delta).
    \end{align*}
    First observe that
    \begin{align*}
        \norm{\widehat v_{t+1}^\perp - \vs_{t+1}^\perp}
        &\le \norm{(I - \eta \nabla^2 L_t)(\widehat v_t^\perp - \vs_t^\perp)} + \frac{\eta \rho_3 \abs{\widehat x_t^2 - \xs_t^2}}{2} + \norm{w_t} \\
        &\le \qty(1 + \eta \abs{\lambda_{min}(\nabla^2 L_t)})\norm{\widehat v_{t}^\perp - \vs_{t}^\perp} + O(\epsilon) \cdot \abs{\widehat x_t - \xs_t} + O\qty(\epsilon^2 \delta)\\
        &\le \qty(1 + \eta \abs{\lambda_{min}(\nabla^2 L_t)})\norm{\widehat v_{t}^\perp - \vs_{t}^\perp} + O(\epsilon\delta) \cdot \abs{\frac{\widehat x_t - \xs_t}{\xs_t}} + O\qty(\epsilon^2 \delta)
    \end{align*}
    Next, note that
    \begin{align*}
        \frac{\widehat x_{t+1}}{\xs_{t+1}}
        &= \frac{(1 + \eta \widehat y_t)\widehat x_t + O(\epsilon^2\delta)}{(1 + \eta \ys_t)\xs_t + O(\epsilon^2\delta)} \\
        &= \frac{(1 + \eta \ys_t)\widehat x_t + O(\epsilon^2\delta) + O(\epsilon)\cdot \norm{\widehat v_t^\perp - \vs_t^\perp}}{(1 + \eta \ys_t)\xs_t + O(\epsilon^2\delta)} \\
        &= \frac{\widehat x_t}{\xs_t} + O\qty(\epsilon^2 + \frac{\epsilon}{\delta}\norm{\widehat v_t^\perp - \vs_t^\perp}).
    \end{align*}
    Therefore
    \begin{align*}
        \abs{\frac{\widehat x_{t+1} - \xs_{t+1}}{\xs_{t+1}}} \le \abs{\frac{\widehat x_t - \xs_t}{\xs_t}} + O(\epsilon^2  + \frac{\epsilon}{\delta}\norm{\widehat v_t^\perp - \vs_t^\perp}).
    \end{align*}
    Let $d_t = \max\qty(\norm{\widehat v_{t}^\perp - \vs_{t}^\perp}, \delta\abs{\frac{\widehat x_t - \xs_t}{\xs_t}})$. Then
    \begin{align*}
        \norm{\widehat v_{t+1}^\perp - \vs_{t+1}^\perp} &\le (1 + \eta \abs{\lambda_{min}(\nabla^2 L_t)} + O(\epsilon))d_t + O(\epsilon^2\delta)\\
        \delta \abs{\frac{\widehat x_{t+1} - \xs_{t+1}}{\xs_{t+1}}} &\le (1 + O(\epsilon))d_t + O(\epsilon^2\delta).
    \end{align*}
    Therefore
    \begin{align*}
         d_{t+1} &\le (1 + \eta \abs{\lambda_{min}(\nabla^2 L_t)} + O(\epsilon))d_t + O(\epsilon^2\delta)\\
         &\le (1 + O(\epsilon))d_t + O(\epsilon^2\delta),
    \end{align*}
    so for $t \le \mathscr{T}$ we have $d_{t+1} \le O(\epsilon\delta)$. Therefore 
    \begin{align*}
        \norm{\widehat v^\perp_{t+1} - \vs_{t+1}^\perp}, \abs{\widehat x_{t+1} - \xs_{t+1}} \le O(\epsilon\delta) \le c_2\delta,
    \end{align*}
    so the induction is proven. Altogether, we get $\norm{\widehat v_t - \vs_t} \le O(\epsilon\delta)$ for all such $t$, as desired.
\end{proof}

\begin{proof}[Proof of \Cref{lem:bound_predicted_naive}]
Recall that
\begin{align*}
    x^*_{t+1} = - (1 + \eta y^*_t)x^*_t \qand y^*_{t+1} = \eta\sum_{s = 0}^t \beta_{s \to t}\qty[\frac{\delta_s^2 - {x^*_s}^2}{2}],
\end{align*}
Since $t \le \frac{1}{\eta \max_t \abs{\lambda_{min}(\nabla^2 L_t)}}$, we have that $\beta_{s \to t} = O(\rho_3^2)$, and thus
\begin{align*}
    \ys_t \le O(\rho_3^2)t\eta\delta^2 = O(\sqrt{\rho_1\rho_3}).
\end{align*}
Therefore
\begin{align*}
    \abs{\xs_{t+1}} = (1 + \eta \ys_t)\abs{\xs_t} \le (1 + O(\epsilon))\abs{\xs_t}.
\end{align*}
Since $t \le O(\epsilon^{-1})$, $\abs{\xs_t}$ grows by at most a constant factor, and thus $\abs{\xs_t} \le O(\delta)$. Finally, recall that
\begin{align*}
    \vs^\perp_{t+1} = \eta\sum_{s = 0}^t P_{u_{t+1}}^\perp \qty[\prod_{k = t}^{s+1} A_k] \nabla S_s^\perp \qty[\frac{\delta_s^2 - {x^*_s}^2}{2}].
\end{align*}
By the triangle inequality,
\begin{align*}
    \norm{\vs^\perp_{t+1}} \le O(\eta t\rho_3 \delta^2) \le O(\delta).
\end{align*}
Therefore $\norm{\vs_t} \le O(\delta)$.
\end{proof}

\end{document}